\def\eqref#1{equation~\ref{#1}}
\def\1{\bm{1}}
\def\eps{{\epsilon}}
\def\va{{\bm{a}}}
\def\vb{{\bm{b}}}
\def\vc{{\bm{c}}}
\def\vd{{\bm{d}}}
\def\ve{{\bm{e}}}
\def\vf{{\bm{f}}}
\def\vg{{\bm{g}}}
\def\vh{{\bm{h}}}
\def\vl{{\bm{l}}}
\def\vs{{\bm{s}}}
\def\vu{{\bm{u}}}
\def\vv{{\bm{v}}}
\def\vx{{\bm{x}}}
\def\vy{{\bm{y}}}
\def\vz{{\bm{z}}}
\def\evc{{c}}
\def\mA{{\bm{A}}}
\def\mI{{\bm{I}}}
\def\mP{{\bm{P}}}
\def\mQ{{\bm{Q}}}
\def\mU{{\bm{U}}}
\def\mV{{\bm{V}}}
\def\mW{{\bm{W}}}
\DeclareMathAlphabet{\mathsfit}{\encodingdefault}{\sfdefault}{m}{sl}
\SetMathAlphabet{\mathsfit}{bold}{\encodingdefault}{\sfdefault}{bx}{n}
\def\gS{{\mathcal{S}}}
\newcommand{\R}{\mathbb{R}}
\DeclareMathOperator{\sign}{sign}
\definecolor{my-full-blue}{HTML}{1F77B4}
\definecolor{my-full-orange}{HTML}{FF7F0E}
\definecolor{my-full-green}{HTML}{2CA02C}
\definecolor{my-full-red}{HTML}{d62728}
\definecolor{my-full-purple}{HTML}{9467bd}
\colorlet{my-blue}{my-full-blue!30}
\colorlet{my-orange}{my-full-orange!30}
\colorlet{my-green}{my-full-green!30}
\colorlet{my-red}{my-full-red!30}
\colorlet{my-purple}{my-full-purple!30}
\definecolor{ckeyword}{HTML}{7F0055}
\definecolor{ccomment}{HTML}{3F7F5F}
\definecolor{cstring}{HTML}{2A0099}
\lstdefinestyle{numbers}{
	numbers=left,
	framexleftmargin=20pt,
	numberstyle=\tiny,
	firstnumber=auto,
	numbersep=1em,
	xleftmargin=2em
}
\lstdefinestyle{layout}{
	frame=none,
	captionpos=b,
}
\lstdefinestyle{comment-style}{
	morecomment=[l]//,
	morecomment=[s]{/*}{*/},
	commentstyle={\color{ccomment}\itshape},
}
\lstdefinestyle{string-style}{
	morestring=[b]",%
	morestring=[b]',%
	stringstyle={\color{cstring}},
	showstringspaces=false,%
}
\lstdefinestyle{keyword-style}{
	keywordstyle={\ttfamily\bfseries},
	morekeywords={
		function,
		constructor,
		int,
		bool,
		return,
		returns,
		uint
	},
	morekeywords = [2]{},
	keywordstyle = [2]{\text},
	sensitive=true,
}
\lstdefinestyle{input-encoding}{
	inputencoding=utf8,
	extendedchars=true,
	literate=
	{ℝ}{$\reals$}1%
	{→}{$\rightarrow$}1%
	{α}{$\alpha$}1%
	{β}{$\beta$}1%
	{λ}{$\lambda$}1%
	{θ}{$\theta$}1%
	{ϕ}{$\phi$}1%
}
\lstdefinestyle{escaping}{
	moredelim={**[is][\color{blue}]{\%}{\%}},
	escapechar=|,
	mathescape=true
}
\lstdefinestyle{default-style}{
	basicstyle=\fontencoding{T1}\ttfamily\footnotesize,
	style=numbers,
	style=layout,
	style=comment-style,
	style=string-style,
	style=keyword-style,
	style=input-encoding,
	style=escaping,
	tabsize=2,
	upquote=true
}
\lstdefinelanguage{BASIC}{
	language=C++,
	style=default-style
}[keywords,comments,strings]%
\crefname{listing}{Lst.}{listings}
\crefname{line}{Lin.}{Lin.}
\crefname{appendix}{App.}{App.}
\newcommand{\app}[1]{%
	\ifbool{includeappendix}{\cref{#1}}{the appendix}%
}
\newcommand{\App}[1]{%
	\ifbool{includeappendix}{\cref{#1}}{The appendix}%
}
\newtheorem{theorem}{Theorem}[section]
\crefname{theorem}{Theorem}{Theorems}
\crefname{problem}{Problem}{Problems}
\DeclareMathOperator{\diag}{diag}
\DeclareMathOperator{\notimplies}{\centernot\implies}
\newcommand{\bb}[1]{\mathds{#1}}
\newcommand{\mbf}[1]{\mathbf{#1}}
\newcommand{\bc}[1]{\mathcal{#1}}
\newcommand{\ez}{\bm{\nu}}
\newcommand{\eb}{\bm{\eta}}
\newcommand{\eez}{\nu}
\newcommand{\0}{\mathbf{0}}
\newcommand{\Z}{\mathcal{Z}}
\newcommand{\Y}{\mathcal{Y}}
\newcommand{\aid}{\mathcal{A}}
\newcommand{\cid}{\mathcal{C}}
\providecommand{\U}{}
\renewcommand{\U}{\mathcal{U}}
\newcommand{\X}{\mathcal{X}}
\renewcommand{\S}{\mathcal{S}}
\renewcommand{\1}{\mathds{1}}
\renewcommand{\R}{\mathds{R}}
\newcommand{\norm}[1]{\left\lVert#1\right\rVert}
\newcommand{\tool}{\textsc{Craft}\xspace}
\newcommand{\toollong}{\textbf{C}onvex \textbf{R}elaxation \textbf{A}bstract \textbf{F}ixpoint i\textbf{T}eration\xspace}
\newcommand{\semi}{\textsc{SemiSDP}\xspace}
\newcommand{\convsm}{\texttt{ConvSmall}\xspace}
\newcommand{\fces}{\texttt{FCx87}\xspace}
\newcommand{\fcf}{\texttt{FCx40}\xspace}
\newcommand{\fco}{\texttt{FCx100}\xspace}
\newcommand{\fct}{\texttt{FCx200}\xspace}
\newcommand{\mnist}{MNIST\xspace}
\newcommand{\cifar}{CIFAR10\xspace}
\newcommand{\pre}{\ensuremath{\varphi}\relax\ifmmode\else\xspace\fi}
\newcommand{\post}{\ensuremath{\psi}\relax\ifmmode\else\xspace\fi}
\newcommand{\g}{\ensuremath{\vg_\alpha}\relax\ifmmode\else\xspace\fi}
\newcommand{\gSs}[1]{\ensuremath{\vg_{\alpha}^{\##1}}\relax\ifmmode\else\xspace\fi}
\newcommand{\gSss}[2]{\ensuremath{\vg_{\alpha_{#2}}^{#1\#,#2}}\relax\ifmmode\else\xspace\fi}
\newcommand{\gSsss}[2]{\ensuremath{\vg_{\alpha_{#2}}^{#1\#}}\relax\ifmmode\else\xspace\fi}
\renewcommand{\gS}{\gSs{}}
\newcommand{\domain}{CH-Zonotope\xspace}
\newcommand{\domainl}{\textbf{C}ontaining-\textbf{H}ybrid-Zonotope\xspace}
\newcommand{\fwdbwd}{FB\xspace}
\newcommand{\pr}{PR\xspace}
\newcommand{\gpr}{\vg_\alpha^{\pr}}
\newcommand{\gfwdbwd}{\vg_\alpha^{\fwdbwd}}
\newcommand{\gprS}{\ensuremath{\vg_{\alpha}^{\pr\#}}\relax\ifmmode\else\xspace\fi}
\newcommand{\gfwdbwdS}{\ensuremath{\vg_{\alpha}^{\fwdbwd\#}}\relax\ifmmode\else\xspace\fi}
\DeclareMathOperator{\con}{\texttt{contained}}
\DeclareMathOperator{\vol}{\texttt{vol}}
\DeclareMathOperator{\expand}{\texttt{expand}}
\DeclareMathOperator{\consolidate}{\texttt{consolidate}}
\DeclareMathOperator{\layer}{layer}
\crefname{algocf}{Algorithm}{Algorithms}
\Crefname{algocf}{Algorithm}{Algorithms}
\crefname{line}{line}{lines}
\Crefname{line}{Line}{Lines}
\newenvironment{mtx}
{\left(\begin{smallmatrix}}{\end{smallmatrix}\right)}
\newcommand{\cmark}{\ding{51}}%
\newcommand{\xmark}{\ding{55}}%
\begin{document}

\title[Abstract Interpretation of Fixpoint Iterators with Applications to Neural Networks]{Abstract Interpretation of Fixpoint Iterators\\ with Applications to Neural Networks}

\author{Mark Niklas Müller}
\email{mark.mueller@inf.ethz.ch}
\orcid{0000-0002-2496-6542}
\author{Marc Fischer}
\email{marc.fischer@inf.ethz.ch}
\orcid{0000-0002-4157-1235}
\author{Robin Staab}
\email{robin.staab@inf.ethz.ch}
\orcid{0009-0009-9040-1214}
\author{Martin Vechev}
\email{martin.vechev@inf.ethz.ch}
\orcid{0000-0002-0054-9568}
\affiliation{%
  \institution{\newline Department of Computer Science, ETH Zurich}
  \streetaddress{Universitätsstrasse 6}
  \postcode{8092}
  \city{Zurich}
  \country{Switzerland}
}

\renewcommand{\shortauthors}{M. Müller, M. Fischer, R. Staab, and M. Vechev}
\renewcommand{\shortauthors}{Mark Niklas Müller, Marc Fischer, Robin Staab, and Martin Vechev}

\begin{abstract}
We present a new abstract interpretation framework for the precise over-approximation of numerical fixpoint iterators.%
Our key observation is that unlike in standard abstract interpretation (AI), typically used to over-approximate \emph{all} reachable program states, in this setting, one only needs to abstract the concrete fixpoints, i.e., the \emph{final} program states. Our framework targets numerical fixpoint iterators with convergence and uniqueness guarantees in the concrete and is based on two major technical contributions: (i) theoretical insights which allow us to compute sound and precise fixpoint abstractions \emph{without using joins}, and (ii) a new abstract domain, \domain, which admits efficient propagation and inclusion checks while retaining high precision.

We implement our framework in a tool called \tool and evaluate it on a novel fixpoint-based neural network architecture (monDEQ) that is particularly challenging to verify. Our extensive evaluation demonstrates that \tool exceeds the state-of-the-art performance in terms of speed (two orders of magnitude), scalability (one order of magnitude), and precision ($25\%$ higher certified accuracies).
\end{abstract}

\begin{CCSXML}
  <ccs2012>
  <concept>
  <concept_id>10003752.10010124.10010138.10011119</concept_id>
  <concept_desc>Theory of computation~Abstraction</concept_desc>
  <concept_significance>500</concept_significance>
  </concept>
  <concept>
  <concept_id>10003752.10010124.10010138.10010142</concept_id>
  <concept_desc>Theory of computation~Program verification</concept_desc>
  <concept_significance>500</concept_significance>
  </concept>
  <concept>
  <concept_id>10010147.10010257.10010293.10010294</concept_id>
  <concept_desc>Computing methodologies~Neural networks</concept_desc>
  <concept_significance>500</concept_significance>
  </concept>
  </ccs2012>
\end{CCSXML}

\ccsdesc[500]{Theory of computation~Abstraction}
\ccsdesc[500]{Theory of computation~Program verification}
\ccsdesc[500]{Computing methodologies~Neural networks}

\keywords{fixpoint, abstract interpretation, equlibrium models, adversarial robustness}  %

\maketitle

\ifbool{extended}{
\fancypagestyle{firstpagestyle}{%
\fancyfoot[R]{}%
}
\fancyfoot[RO,LE]{}
}{}

\section{Introduction} \label{sec:intro} Abstract interpretation (AI) \citep{CousotC77,CousotC92} is a popular static analysis technique, typically used to over-approximate all reachable states of a given program for a particular set of (potentially infinite) concrete inputs, captured by a pre-condition. Given an \emph{abstract domain} for representing abstract program states and \emph{abstract transformers} for capturing the effects of program statements on these abstract states, AI operates by starting with the pre-condition and applying the abstract transformers corresponding to each program statement until a so-called abstract (post-)fixpoint is reached, i.e., any further application of the abstract transformers does not change the computed abstraction. Under reasonable assumptions and in the absence of unbounded loops, this approach is guaranteed to terminate with a sound abstraction of \emph{all}  --- intermediate and final --- program states. To handle unbounded loops, special techniques such as Kleene iteration with widening and narrowing \citep{CousotC92b} are required to ensure termination. 

Interestingly, for an important class of programs with unbounded loops that themselves compute (concrete) fixpoints, e.g., numerical solvers, typically, only the resulting concrete fixpoints, i.e. the \emph{final state} of the concrete program rather than the intermediate states, are of interest. Using Kleene iteration in this setting, even with exact joins, leads to abstractions that include the union over all iteration states, making them inherently imprecise. A desirable goal then is to develop an abstract interpretation approach that targets only the precise abstraction of these final states.

\paragraph{This Work: Abstract Interpretation of Fixpoint Iterators} In this work, we introduce the first abstract interpretation framework, focusing on fixpoint iterators that possess convergence guarantees in the concrete. Our framework is based on two major contributions: (i) we present new theoretical insights which allow us to compute sound and precise fixpoint abstractions \emph{without using joins}. %
That is, we do not require Kleene iteration, typically used in AI to handle unbounded loops \citep{GangeNSSS13,putot2012static}, and further demonstrate that Kleene iteration is unsuitable for our class of programs. In addition, these insights enable us to further tighten the obtained abstractions by leveraging the convergence properties of the abstracted fixpoint iterator. 
While our method can be instantiated with any abstract domain, (ii) we introduce a novel abstract domain, called \domain, based on the Zonotope abstraction \citep{ZonotopeGhorbalGP09,DeepZSinghGMPV18}, combined with the notion of order-reduction \citep{KopetzkiSA17O,yang2018comparison}. Unlike Zonotope, our domain ensures constant representation size and allows for efficient yet precise inclusion checks -- only $\bc{O}(p^3)$ instead of $\bc{O}(p^6)$ in the dimension $p$ -- critical for handling fixpoint iterations.

We implement our framework in a tool called \tool and demonstrate its effectiveness on the robustness verification of monDEQs (Monotone Operator Deep Equilibrium Models) \citep{MonDEQWinstonK20}, a novel fixpoint-based neural architecture combining high-dimensionality and highly non-linear iterations, thus representing a particularly challenging class of fixpoint iterators. We remark that \tool can serve as a basis for future investigations of other fixpoint-based neural architectures such as stiff neural ODEs \citep{KimJDR2021} or SatNets \citep{WangDWK19}. %

\vspace{-1mm}
\paragraph{Main Contributions} Our core contributions are:
\vspace{-0.5mm}
\begin{itemize}
    \item A domain-specific abstract interpretation framework for high-dimensional fixpoint iterators with convergence guarantees in the concrete. %
	(\cref{sec:abstract-fp}).
	\item \domain, a novel abstract domain that enables both efficient abstract fixpoint iteration and inclusion checks (\cref{sec:m_zono}).
	\item \tool, a complete implementation of our framework and abstract domain (\cref{sec:tool}).
	\item An extensive evaluation demonstrating the effectiveness of \domain and showing that \tool achieves a new state-of-the-art for monDEQ verification.%
\end{itemize}

\vspace{-1.5mm}
\section{Overview}\label{sec:overview}
We now elaborate on the key challenges of abstracting fixpoint iterators and our approach to overcoming these. As a running example, we use monDEQs, a novel neural architecture based on high-dimensional fixpoint iterations and an instance of the class of programs we target. Thus, we begin with a short background on neural networks and their analysis.

\vspace{-1.5mm}
\paragraph{Neural Network Verification}
Given a neural network $\vh \colon \R^{d_{in}}$ $\mapsto \R^r$, a precondition $\pre(\vx)$, and postcondition $\post(\vh(\vx))$, the goal of neural network verification is to show that $\pre(\vx) \models \post(\vh(\vx))$. To this end, we construct a sound verifier to show $\pre(\vx) \vdash \post(\vh(\vx))$, i.e., that $\post(\vh(\vx))$ can be derived from $\pre(\vx)$, implying by the soundness of the verifier $\pre(\vx) \! \models \! \post(\vh(\vx))$, i.e., that $\pre(\vx)$ entails $\post(\vh(\vx))$.

A common instantiation of this problem is found in image classification. There, $\vx$ is an image, $\vh$ an image classifier, $\pre(\vx)$ an $\ell_p$-norm-ball around $\vx$, e.g., $\pre(\vx) \coloneqq \{ \vx' \in \R^{d_{in}} \mid \| \vx - \vx' \|_\infty \leq \eps \}$, $\post(\vh(\vx))$ denotes classification to the correct class, and showing $\pre(\vx) \models \post(\vh(\vx))$ formally verifies robustness to adversarial examples \citep{szegedy2013intriguing,GoodfellowSS14}.

A popular approach to constructing neural network verifiers is to adapt abstract interpretation techniques to handle hundreds of thousands of variables \citep{GehrMDTCV18,DeepZSinghGMPV18,SinghGPV19}. There, the precondition $\pre(\vx)$ is encoded as an abstract element and propagated through the network layer-by-layer using abstract transformers before the resulting abstraction of the network output is checked against the postcondition $\post(\vh(\vx))$.

\begin{wrapfigure}[10]{r}{0.42 \textwidth}
	\vspace{-2.mm}
	\begin{minipage}[t]{0.35\linewidth}
	\begin{lstlisting}[language=Python,mathescape=true,numbers=none,xleftmargin=-2mm]
	def NN($\vx$):
	$\vs_1 = \layer_1(\vx)$
	$\vdots$
	$\vs_{L-1} = \layer_{L-1}(\vs_{L-2})$
	$\vy = \layer_L(\vs_{L-1})$
	return $\vy$
	\end{lstlisting}
	\end{minipage}
	\hfill
	\begin{minipage}[t]{0.60\linewidth}
	\begin{lstlisting}[language=Python,mathescape=true,numbers=none,xleftmargin=0mm]
	def monDEQ($\vx$):
		$\vs_0 = \mathbf{0}$, $i = 0$
		while not converged($s_i$):
			$i = i + 1$
			$\vs_{i} = \vg(\vx, \vs_{i-1})$
		$\vy = \layer_{\vy}(\vs_i)$
		return $\vy$
	\end{lstlisting}
	\end{minipage}
	\vspace{-5mm}
	\caption{Pseudocode for a standard neural network (left) and a monDEQ (right).}
	\label{fig:nets}
\end{wrapfigure}

\vspace{-1mm}
\paragraph{Fixpoint-based Neural Networks} Neural architectures based on fixpoint computations such as monDEQs (formally discussed in \cref{sec:mondeq}), however, do not simply apply a fixed number of layers, instead iteratively applying an iterator in an unbounded loop until a fixpoint is reached. We highlight this difference in \cref{fig:nets}, where we contrast pseudocode for a standard feed-forward neural network (left) and a monDEQ (right).
Let us consider an example monDEQ classifier $\vh \colon [-1, 1]^2 \mapsto \{0, 1\}$:
\begin{equation}
	\vg(\vx, \vs) \coloneqq ReLU\left( \tfrac{1}{10}  \begin{mtx}5 & -1\\1 & \phantom{+}5\end{mtx} \vs + \tfrac{1}{10}  \begin{mtx}\phantom{+}1 & 1\\-1 & 1\end{mtx} \vx \right) 
	\qquad\qquad
	\layer_{\vy}(\vs) \coloneqq \begin{mtx}1 & -1 \end{mtx} \vs,
	\label{eq:example}
\end{equation}
returning class 1 if $\vy(\vs^*) \coloneqq \vh(\vx) = \layer_{\vy}(\vs^*) > 0$ and else class 0, where $\vs^* = \vg(\vx, \vs^*)$ denotes the fixpoint found by iterating $\vg(\vx, \vs)$.
Given an example input $\vx \coloneqq \begin{mtx}0.2 \\ 0.5\end{mtx} = \tfrac{1}{10} \begin{mtx}2 \\ 5\end{mtx}$, we compute the fixpoint $\vs^*$ by iteratively applying $\vg(\vx, \vs_i)$. We initialize the iteration with $\vs_0 = \begin{mtx} 0 \\ 0\end{mtx}$ and obtain
\begin{align*}	
	\vs_{i+1} &\coloneqq \vg(\vx, \vs_i) = ReLU\left( \tfrac{1}{10}  \begin{mtx}5 & -1\\1 & \phantom{+}5\end{mtx} \vs_i + \tfrac{1}{100} \begin{mtx}\phantom{+}1 & 1\\-1 & 1\end{mtx}  \begin{mtx}2\\5\end{mtx} \right)
		= ReLU\left( \tfrac{1}{10} \begin{mtx}5 & -1\\1 & \phantom{+}5\end{mtx}  \vs_i + \tfrac{1}{100} \begin{mtx}7\\3\end{mtx} \right)\\
	\vs_1 &\coloneqq ReLU\left( \tfrac{1}{10} \begin{mtx}0\\0\end{mtx} + \tfrac{1}{100} \begin{mtx}7\\3\end{mtx} \right) =  \tfrac{1}{100} \begin{mtx}7\\3\end{mtx} \qquad\qquad\qquad \|\vs_1-\vs_0\| = 0.0762\\
	\vs_2 &\coloneqq
	ReLU\left( \tfrac{1}{1000} \begin{mtx}32\\22\end{mtx} + \tfrac{1}{100} \begin{mtx}7\\3\end{mtx} \right) =
	\tfrac{1}{1000} \begin{mtx}102\\52\end{mtx} \qquad\qquad \|\vs_2-\vs_1\| = 0.0389\\
	\vs^* &\approx \begin{mtx}0.1231\\0.0846\end{mtx}.
\end{align*}
We observe that the residual $\|\vs_{i+1}-\vs_i\|$ decreases quickly as we converge towards the fixpoint $\vs^*$ and note that this convergence to a unique fixpoint is guaranteed for monDEQs \citep{MonDEQWinstonK20}. We can thus compute the fixpoint $\vs^*$ to arbitrary precision, only depending on the termination condition, \lstinline{converged($\vs_i$)} in \cref{fig:nets}. In our example, we obtain $\vy(\vs^*) \approx 0.0385  > 0$ and thus return class $1$.
In \cref{fig:example}, we illustrate this inference process, showing the decision landscape of $\vh$ on $[-1, 1]^2$ (\cref{fig:example:class}), the obtained fixpoints (\cref{fig:example:abstract}), and the resulting output (\cref{fig:example:y}). We highlight the points corresponding to our example input $\vx$ with a red {\color{my-full-red} $\times$} and will explain the orange and purple regions shortly.
In the following examples, we assume that \lstinline{converged($\vs_i$)} is chosen such that we reach the true fixpoints up to machine precision.

\vspace{-1mm}
\subsection{Abstract Interpretation for Fixpoint Iterators: A Motivation}\label{sec:overview:motivateframework}
While the construction of abstract interpretation based verifiers for loop-free programs such as feed-forward networks (left in \cref{fig:nets}) is conceptually straightforward, fixpoint iterators such as monDEQs present a greater challenge due to their unbounded loops (right in \cref{fig:nets}). To motivate the need for a domain-specific abstraction framework, we will first illustrate that generic abstract interpretation techniques are inherently not suitable for this task due to three fundamental reasons: (i) the analysis of fixpoint-iterators requires only the \emph{last} iteration state, containing the concrete fixpoints, instead of all intermediate iteration states, to be abstracted, (ii) while in general abstract interpretation, an abstract transformer of the termination condition has to be evaluated to refine the obtained abstract state, fixpoint iterators allow the mathematical invariants that are enforced by the termination condition to be leveraged directly, leading to much more precise results, and finally, (iii) standard techniques do not take advantage of the key convergence properties of the concrete fixpoint iterator $\vg$, which we leverage in order to drastically improve precision.

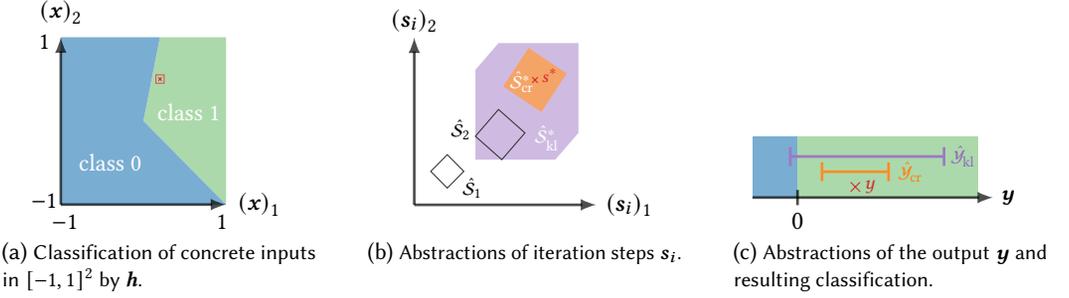
\begin{figure}[t]
\center
\begin{subfigure}[t]{0.3\textwidth}
	\center
	\begin{tikzpicture}[scale=1.1]
\tikzset{>=latex}
	\coordinate (class1_p_0) at ({1.00},{-1.00});
	\coordinate (class1_p_1) at ({1.00},{1.00});
	\coordinate (class1_p_2) at ({0.20},{1.00});
	\coordinate (class1_p_3) at ({0.00},{0.00});
	\fill [fill=my-full-green!40, rounded corners=0mm] (class1_p_0) -- (class1_p_1) -- (class1_p_2) -- (class1_p_3) -- cycle;

	\coordinate (class0_p_0) at ({-1.00},{-1.00});
	\coordinate (class0_p_1) at ({-1.00},{1.00});
	\coordinate (class0_p_2) at ({0.20},{1.00});
	\coordinate (class0_p_3) at ({0.00},{0.00});
	\coordinate (class0_p_4) at ({1.00},{-1.00});
		\fill [fill=my-full-blue!60, rounded corners=0mm] (class0_p_0) -- (class0_p_1) -- (class0_p_2) -- (class0_p_3) -- (class0_p_4) -- cycle;

	\coordinate (x) at ({0.2},{0.5});
	\coordinate (dim) at (0.1,0.1);
	\coordinate (LL) at ($(x)-0.5*(dim)$);
	\coordinate (UR) at ($(x)+0.5*(dim)$);
	\node ()[mark size=0.9pt, anchor=center, color=my-full-red] at (x) {\pgfuseplotmark{x}};
	\draw [draw=my-full-red!80, rounded corners=0mm] ($(LL)$) rectangle ($(UR)$);
	\node ()[font=\small, anchor=center, color=white] at (-0.4, -0.5) {class 0};
	\node ()[font=\small, anchor=center, color=white] at (0.55, 0.1) {class 1};
	\draw[->,opacity=0.7,line width=1] (-1, -1) -- (1.0, -1) ;
	\draw[->,opacity=0.7,line width=1] (-1, -1) -- (-1, 1.0) ;

	\node ()[font=\small, anchor=center, color=black] at (1.4, -1.0) {$\left( \vx \right)_1$};
	\node ()[font=\small, anchor=center, color=black] at (-1.0, 1.3) {$\left( \vx \right)_2$};
	
	\node ()[font=\small, anchor=center, color=black] at (-1.2, -0.95) {$-1$};
	\node ()[font=\small, anchor=center, color=black] at (-0.95, -1.2) {$-1$};
	\node ()[font=\small, anchor=center, color=black] at (-1.2, +0.95) {$1$};
	\node ()[font=\small, anchor=center, color=black] at (+0.95, -1.2) {$1$};

	\end{tikzpicture}
	\vspace{-2mm}
	\caption{\footnotesize Classification of concrete inputs in $[-1, 1]^2$ by $\vh$.} \label{fig:example:class}
\end{subfigure} 
\hfill
\begin{subfigure}[t]{0.3\textwidth}
	\center
\begin{tikzpicture}[scale=22]
\tikzset{>=latex}
	\coordinate (initial_head) at ({0.0700},{0.0300});
	\coordinate (initial_p_0) at ({0.0600},{0.0300});
	\coordinate (initial_p_1) at ({0.0700},{0.0200});
	\coordinate (initial_p_2) at ({0.0800},{0.0300});
	\coordinate (initial_p_3) at ({0.0700},{0.0400});
	\node[] at (0.1,-0.0043) {};
	\draw [color=black!80, rounded corners=0mm] (initial_p_0) -- (initial_p_1) -- (initial_p_2) -- (initial_p_3)  -- cycle;

	\coordinate (fp_kleene_head) at ({0.1182},{0.0718});
	\coordinate (fp_kleene_p_0) at ({0.1494},{0.0530});
	\coordinate (fp_kleene_p_1) at ({0.1494},{0.1067});
	\coordinate (fp_kleene_p_2) at ({0.1010},{0.1067});
	\coordinate (fp_kleene_p_3) at ({0.0870},{0.0907});
	\coordinate (fp_kleene_p_4) at ({0.0870},{0.0370});
	\coordinate (fp_kleene_p_5) at ({0.1354},{0.0370});
	\fill [fill=my-full-purple!55, opacity=0.8, rounded corners=0mm] (fp_kleene_p_0) -- (fp_kleene_p_1) -- (fp_kleene_p_2) -- (fp_kleene_p_3) -- (fp_kleene_p_4) -- (fp_kleene_p_5)  -- cycle;

	\coordinate (fp_craft_head) at ({0.1231},{0.0846});
	\coordinate (fp_craft_p_0) at ({0.1038},{0.0808});
	\coordinate (fp_craft_p_1) at ({0.1269},{0.0654});
	\coordinate (fp_craft_p_2) at ({0.1423},{0.0885});
	\coordinate (fp_craft_p_3) at ({0.1192},{0.1038});
	\fill [fill=my-full-orange!75, opacity=0.8, rounded corners=0mm] (fp_craft_p_0) -- (fp_craft_p_1) -- (fp_craft_p_2) -- (fp_craft_p_3)  -- cycle;

	\coordinate (first_head) at ({0.1020},{0.0520});
	\coordinate (first_p_0) at ({0.0870},{0.0510});
	\coordinate (first_p_1) at ({0.1030},{0.0370});
	\coordinate (first_p_2) at ({0.1170},{0.0530});
	\coordinate (first_p_3) at ({0.1010},{0.0670});
	\draw [color=black!80, rounded corners=0mm] (first_p_0) -- (first_p_1) -- (first_p_2) -- (first_p_3)  -- cycle;

	\draw[->,opacity=0.7,line width=1] (0.05, 0.01) -- (0.16, 0.01) ;
	\draw[->,opacity=0.7,line width=1] (0.05, 0.01) -- (0.05, 0.11) ;

	\node ()[font=\small, anchor=center, color=black] at (0.18, 0.01) {$\left( \vs_i \right)_1$};
	\node ()[font=\small, anchor=center, color=black] at (0.05, 0.12) {$\left( \vs_i \right)_2$};
	
	\node ()[font=\tiny, anchor=center, color=black] at (0.085, 0.02) {$\hat{\S}_1$};
	\node ()[font=\tiny, anchor=center, color=black] at (0.078, 0.055) {$\hat{\S}_2$};
	\node ()[font=\tiny, anchor=center, color=my-full-red] at (0.132, 0.0875) {$s^*$};
	\node ()[font=\tiny, anchor=center, color=white] at (0.13, 0.05) {$\hat{\S}^*_{\text{kl}}$};

	\node ()[font=\tiny, anchor=center, color=white] at (0.115, 0.0835) {$\hat{\S}^*_{\text{cr}}$};

	\coordinate (z) at ({0.1231},{0.0846});
	\node ()[mark size=1.5pt, anchor=center, color=my-full-red] at (z) {\pgfuseplotmark{x}};

\end{tikzpicture}
\vspace{-2mm}
	\caption{\footnotesize Abstractions of iteration steps $\vs_i$.} \label{fig:example:abstract}
\end{subfigure}
\hfill
\begin{subfigure}[t]{0.3\textwidth}
	\center
\begin{tikzpicture}[scale=20.0]
\tikzset{>=latex}

	\coordinate (class0_p_0) at ({-0.03},{0.01});
	\coordinate (class0_p_1) at ({0.00},{0.01});
	\coordinate (class0_p_2) at ({0.00},{0.05});
	\coordinate (class0_p_3) at ({-0.03},{0.05});
	\fill [fill=my-full-blue!60, rounded corners=0mm] (class0_p_0) -- (class0_p_1) -- (class0_p_2) -- (class0_p_3)  -- cycle;

	\coordinate (class1_p_0) at ({0.12},{0.01});
	\coordinate (class1_p_1) at ({0.00},{0.01});
	\coordinate (class1_p_2) at ({0.00},{0.05});
	\coordinate (class1_p_3) at ({0.12},{0.05});
	\fill [fill=my-full-green!40, rounded corners=0mm] (class1_p_0) -- (class1_p_1) -- (class1_p_2) -- (class1_p_3)  -- cycle;

	\draw[->,opacity=0.7,line width=1] (-0.03, 0.01) -- (0.13, 0.01) ;
	\draw[-,opacity=0.7,line width=1] (0.0, 0.005) -- (0.0, 0.015) ;
	\coordinate (zero) at ({0.0},{-0.005});
	\node ()[mark size=3pt, anchor=center, color=black] at (zero) {$0$};

	\coordinate (fp_kleene_p_0) at ({-0.0057},{0.037});
	\coordinate (fp_kleene_p_1) at ({0.0984},{0.037});
	\draw[|-|,line width=1, color=my-full-purple!90] (fp_kleene_p_0) -- (fp_kleene_p_1);

	\coordinate (fp_craft_p_0) at ({0.0154},{0.027});
	\coordinate (fp_craft_p_1) at ({0.0615},{0.027});
	\draw[|-|,line width=1,color=my-full-orange!90] (fp_craft_p_0) -- (fp_craft_p_1);

	\coordinate (y) at ({0.0385},{0.017});
	\node ()[mark size=2.5pt, anchor=center, color=my-full-red] at (y) {\pgfuseplotmark{x}};

	\node ()[font=\footnotesize, anchor=center, color=black] at (0.14, 0.01) {$\vy$};

	\node ()[font=\tiny, anchor=center, color=my-full-purple] at (0.11, 0.037) {$\hat{\Y}_{\text{kl}}$};
	\node ()[font=\tiny, anchor=center, color=my-full-orange] at (0.075, 0.027) {$\hat{\Y}_{\text{cr}}$};
	\node ()[font=\footnotesize, anchor=center, color=my-full-red, scale=0.9] at (0.048, 0.017) {$y$};

\end{tikzpicture}
\vspace{-2mm}
	\caption{\footnotesize Abstractions of the output $\vy$ and resulting classification.}  \label{fig:example:y}
\end{subfigure}

\vspace{-2mm}
\caption{Example Visualization: The concrete input $\vx$ (red {\color{my-full-red} $\times$}) yields the fixpoint $\vs^*(\vx)$ ({\color{my-full-red} $\times$}), and prediction $y(\vx)$ ({\color{my-full-red} $\times$}). Propagating the input region $\X$ (red {\color{my-full-red} $\square$}) with Kleene iteration and \tool yields the abstract iteration steps $\hat{\S}_1$ and $\hat{\S}_2$ and finally the fixpoint $\hat{\S}^*_{\text{kl}}$ and $\hat{\S}^*_{\text{cr}}$ and corresponding output abstraction $\hat{\Y}_{\text{kl}}$ and $\hat{\Y}_{\text{cr}}$, respectively.}
\label{fig:example}
\vspace{-1mm}
\end{figure}

\subsection{Challenge: Precise Loop Abstraction}\label{sec:overview:loop}
Abstract Interpretation (AI) \citep{CousotC77,CousotC92,cousot1979constructive} is an analysis technique that allows reasoning over the behavior of programs for sets of inputs. Conceptually, a set of program inputs, e.g., those specified by the precondition $\pre$, is represented symbolically and then propagated through the program to determine whether the postcondition $\post$ is satisfied for all these inputs.

Formally, we over-approximate sets of concrete inputs from domain $\cid$ with abstract elements from an abstract domain $\aid$. To retrieve the set of concrete values $\gamma(\hat{\S})$ represented by an abstract element $\hat{\S} \in \aid$, we define the concretization function $\gamma: \aid \mapsto \wp(\cid)$. %
Equipped with a partial order $\sqsubseteq$, an abstract domain forms a poset such that $\hat{\S}_1 \sqsubseteq \hat{\S}_2 \implies \gamma(\hat{\S}_1) \subseteq \gamma(\hat{\S}_2)$. This allows us to define the (quasi \citep{GangeNSSS13}) join $\hat{\S}_1 \sqcup \hat{\S}_2$ of two abstract elements $\hat{\S}_1, \hat{\S}_2$ as their least (any) upper bound with respect to $\sqsubseteq$. Importantly, if the join exists, we have $\gamma(\hat{\S}_1) \cup \gamma(\hat{\S}_2) \subseteq \gamma(\hat{\S}_1 \sqcup \hat{\S}_2)$.
We capture the effect of a concrete function $f \colon \cid \to \cid$, e.g., a program statement, on an abstract element $\hat{\S} \in \aid$, using a sound abstract transformer $f^{\#} \colon \aid \to \aid$ such that  $\forall \vs \in \gamma(\hat{\S}),\, f(\vs) \in \gamma(f^{\#}(\hat{\S}))$.

\begin{wrapfigure}[8]{r}{0.22 \textwidth}
\vspace{-6mm}
\centering
\begin{lstlisting}[language=Python,mathescape=true,numbers=none]
$\vx = \dots$
if check($\vx$):
	$\vy = f_a(\vx)$
else:
	$\vy = f_b(\vx)$
return $\vy$
\end{lstlisting}
\vspace{-5mm}
\caption{Example branching behavior.}
\label{fig:if}
\vspace{-3mm}
\end{wrapfigure}

\paragraph{Standard Abstract Post-Fixpoint Computation}
In AI, there are generally two possible outcomes when control flow, such as the if-statement in \cref{fig:if}, is encountered: (i) either the abstract state allows to show that the same branch is taken for all abstracted values and we only have to consider the abstract transformer of that branch, e.g., $f_a^{\#}$ to obtain $\hat{\Y} = f_a^{\#}(\hat{\X})$, or (ii) if we can not rule out either branch, both branches have to be considered and we obtain the join of the resulting abstract states $\hat{\Y} = f_a^{\#}(\hat{\X}) \sqcup f_b^{\#}(\hat{\X})$.
Most control flow, including bounded loops, can be handled in this way.

Unbounded loops, such as those encountered in monDEQs, however, present a special challenge as the above approach will typically not terminate. A common solution to this problem is the so-called \emph{Kleene iteration}. For a loop of the form \lstinline{while condition($\vs$): $\vs = f(\vs)$}, Kleene iteration computes $\hat{\S}_i = \hat{\S}_{i-1} \sqcup f^{\#}(\hat{\S}_{i-1})$ until convergence or formally until an order-theoretic (post)-fixpoint $\hat{\S}^* \coloneqq \hat{\S}_{i-1} \! \sqsupseteq \! \hat{\S}_{i}$ is reached.
In practice, widening \citep{CousotC92b} is often required for Kleene iteration to terminate. Unfortunately, the obtained precision is heavily dependent on the existence of precise abstract transformers for the termination condition. If we lack such transformers, e.g., due to complex, non-linear, non-convex termination conditions, the obtained abstraction is often imprecise. To recover some precision, we apply \emph{semantic unrolling} \citep{BlanchetCCFMMMR02}, i.e., unroll the first $k$ loop iterations for which we can show that the termination condition is not satisfied and thus iterate $\hat{\S}_i = \vf^{\#}(\hat{\S}_{i-1})$ for $i \leq k$, avoiding the join $\hat{\S}_i = \hat{\S}_{i-1} \sqcup \vf^{\#}(\hat{\S}_{i-1})$. %

\paragraph{Example (cont.)}
Let us apply Kleene iteration to our example to illustrate these imprecision issues. For the monDEQ in \cref{eq:example}, we let $\X = \pre(\vx) = \{\vx + \delta \mid \delta \in \R^2, \| \delta \|_\infty \leq 0.05 \}$ denote a small region around $\vx$ (red {\color{my-full-red} $\square$} in \cref{fig:example:class}) and $\post$ the classification to class $1$. We initialize $\hat{\S}_0$ such that $ \gamma(\hat{\S}_0) = \{\0\}$ and apply Kleene iteration with semantic unrolling ($k=2$) to the abstraction $\vg^\#$ of the iterator $\vg$ (\cref{eq:example}) using the Zonotope domain \citep{DeepZSinghGMPV18} to compute an abstract post-fixpoint of the loop. We illustrate the intermediate states $\hat{\S}_1$ and $\hat{\S}_2$ as well as the final fixpoint $\hat{\S}^*_{\text{kl}}$ (purple) in \cref{fig:example:abstract}. Note how the second state $\hat{\S}_2$ is included in the post-fixpoint $\hat{\S}^*_{\text{kl}}$. Applying the classification layer, we obtain $\hat{\Y}_{\text{kl}} = \vy^{\#}(\hat{\S}^*_{\text{kl}})$ (purple interval in \cref{fig:example:y}). As the interval contains $0$, we are unable to verify the postcondition that all points in $\X$ get classified to class 1 (even though they do).

\paragraph{Domain-Specific Abstraction of Fixpoint Iterators}
We now propose a domain-specific approach targeting the abstraction of fixpoint iterators. Recall that in \cref{sec:overview:motivateframework} we discussed three reasons for the imprecision of the standard approach, which we address as follows. First, in our setting, we note that it is sufficient to abstract the set containing all concrete fixpoints $\S^* = \{\vs^* \mid \vs^* = g(\vx, \vs^*), \vx \in \X\}$ instead of the union of all iteration states arriving at the loop head. Second, instead of requiring an abstract transformer for the termination condition  \lstinline{converged($\vs_i$)}, we leverage the mathematical invariants enforced by the condition, namely that a fixpoint has been reached, to show that it suffices to iterate $\hat{\S}_i = \vg^{\#}(\hat{\X}, \hat{\S}_{i-1})$ -- \emph{without the use of joins} and \emph{without requiring abstract transformers for the termination condition} -- until we reach containment ($\hat{\S}_{i-1} \sqsupseteq \hat{\S}_{i}$), to guarantee that $\gamma(\hat{\S}_{i})$ contains all concrete fixpoints $\S^*$. Finally, we leverage the convergence properties of $\vg$ and prove that an additional $j>0$ applications of $\vg^\#$ to $\hat{\S}_{i}$ yield abstractions which may not be included in $\hat{\S}_{i}$, yet are always sound, i.e., contain all concrete fixpoints $\S^*$, while being empirically much tighter than $\hat{\S}_{i}$. We refer to this property of $\vg^\#$ as \emph{fixpoint set preservation}. Finally, we leverage these insights to perform precise verification of monDEQs by computing $\hat{\Y}$ from the resulting over-approximation $\hat{\S}_{i+j}$ and then checking $\post(\hat{\Y})$.

\paragraph{Example (cont.)}
Let us continue our example:
We initialize our iteration as for Kleene but iterate $\hat{\S}_i = \vg^{\#}(\hat{\X}, \hat{\S}_{i-1})$ until we find $\hat{\S}_{i} \sqsubseteq \hat{\S}_{i-1}$, again visualizing the iteration in \cref{fig:example:abstract}.
After sharing $\hat{\S}_1$ and $\hat{\S}_2$ with the (unrolled) Kleene iteration, we, in contrast to Kleene iteration, do not have to compute the join over iteration states and thus reach the much more precise abstraction $\hat{\S}^{*}_{\text{cr}}$ (orange). Indeed, applying the last layer yields a much more precise $\hat{\Y}_{\text{cr}} = \vy^{\#}(\hat{\S}^*_{\text{cr}})$, allowing us to show that $y > 0$ and thus certify that all inputs in the red region are indeed classified to class 1.

\paragraph{Summary: Domain-Specific vs. Standard Abstractions}
To summarize, as our setting is motivated by computing fixpoint set over-approximations, we are only interested in the \emph{final} state obtained by the iterator and not in the intermediate program states. This difference in the objective combined with the mathematical properties of the iterator and termination condition allows our domain-specific approach to compute much tighter fixpoint approximations than standard AI.%

\subsection{Challenge: Efficient Computation}\label{sec:overview:efficiency}

\begin{wraptable}[7]{r}{0.465 \textwidth}
	\centering
	\vspace{-8.5mm}
	\caption{Comparison of \domain to other abstract domains, for fixpoint abstraction.}
	\vspace{-4mm}
	\renewcommand{\arraystretch}{1.0}
	\label{tab:relaxations_intro}
	\scalebox{0.82}{
		\begin{tabular}{@{}lccc@{}}
			\toprule
			& Iteration & Inclusion & Precision\\
			\midrule
			Box & \cmark & \cmark & \xmark \\
			(Hybrid) Zonotope & \xmark & \xmark & \cmark\\
			Polyhedra & \xmark & \xmark & \textbf{?}\\
			\midrule
			\domain & \cmark & \cmark & \cmark\\
			\bottomrule
		\end{tabular}
	}
	\vspace{-3mm}
\end{wraptable}

\paragraph{Choice of Abstract Domain}\label{sec:other_domains}
While our abstraction framework for fixpoint iterators is in itself a compelling result, it comes with another challenge, namely, the need for an abstract domain that satisfies all of the following criteria: (i) efficient propagation through the iterator $\vg$, (ii) efficient inclusion checks in high dimensions, \emph{and} (iii) high precision.

To motivate this challenge, we first discuss why existing abstract domains used in neural network verification, shown in \cref{tab:relaxations_intro}, are unable to satisfy these criteria. Consider $L$ iterations of the iterator $\vg$ with a latent variable $\vs \in \R^p$ of dimension $p$. %

The \emph{Box} abstraction \citep{GehrMDTCV18,DiffAIMirmanGV18,GowalDSBQUAMK18} is the simplest commonly used abstract domain. Due to its constant-size representation, it can be efficiently propagated ($\bc{O}(Lp^2)$) and permits fast $\bc{O}(p)$ inclusion checks. However, as demonstrated in \cref{sec:eval:ch_zono}, it loses too much precision to be practically effective.

\emph{(Hybrid) Zonotopes} \citep{GehrMDTCV18,DiffAIMirmanGV18,WongK18,DeepZSinghGMPV18} allow for more precision, at the cost of a growing representation size, increasing the propagation cost to $\bc{O}(L^2 p^3)$. Further, exact inclusion checks are known to be co-NP-complete \citep{Kulmburg2021OnTC} and even approximate ones \cite{SadraddiniT19} (between $\bc{O}(p^6)$ and $\bc{O}(L^3p^6)$) become intractable in high ($\gtrsim50$) dimensions. %

\emph{(Restricted) Polyhedra} based methods that propagate linear bounds \citep{zhang2018crown,SinghGPV19} are state-of-the-art for applications where runtime is critical \citep{gpupoly}. These methods are typically more precise than Zonotope and have identical time complexity ($\bc{O}(L^2 p^3)$). However, as they yield polyhedra in the input-output space of the abstracted program, the input dimensions have to first be projected out to perform inclusion checks in the output space. While the inclusion checks themselves have polynomial complexity \citep{SadraddiniT19}, the projection step is co-NP-hard \citep{kellner2015containment}, making the overall check intractable.

\paragraph{The \domain Domain}
To address the above challenges, we introduce the \domain domain in \cref{sec:m_zono}. It builds on Hybrid Zontopes \citep{DiffAIMirmanGV18}, allows for an efficient inclusion check ($\bc{O}(p^3)$), and, thanks to the strategic use of order reduction \citep{KopetzkiSA17O}, ensures constant representation size, allowing for fast and efficient propagation ($\bc{O}(L p^3)$).

\begin{wrapfigure}[10]{r}{0.33 \textwidth}
\vspace{-5.5mm}
\centering
\begin{tikzpicture}[scale=24]
\tikzset{>=latex}
	\coordinate (fp_craft_head) at ({0.1231},{0.0846});
	\coordinate (fp_craft_p_0) at ({0.1038},{0.0808});
	\coordinate (fp_craft_p_1) at ({0.1269},{0.0654});
	\coordinate (fp_craft_p_2) at ({0.1423},{0.0885});
	\coordinate (fp_craft_p_3) at ({0.1192},{0.1038});

	\coordinate (fp_craft_cont-2_head) at ({0.1236},{0.0821});
	\coordinate (fp_craft_cont-2_p_0) at ({0.1242},{0.1145});
	\coordinate (fp_craft_cont-2_p_1) at ({0.0912},{0.0826});
	\coordinate (fp_craft_cont-2_p_2) at ({0.1231},{0.0496});
	\coordinate (fp_craft_cont-2_p_3) at ({0.1561},{0.0815});

	\coordinate (fp_craft_cont-1_head) at ({0.1236},{0.0834});
	\coordinate (fp_craft_cont-1_p_0) at ({0.1106},{0.0997});
	\coordinate (fp_craft_cont-1_p_1) at ({0.0974},{0.0804});
	\coordinate (fp_craft_cont-1_p_2) at ({0.1074},{0.0704});
	\coordinate (fp_craft_cont-1_p_3) at ({0.1266},{0.0571});
	\coordinate (fp_craft_cont-1_p_4) at ({0.1366},{0.0671});
	\coordinate (fp_craft_cont-1_p_5) at ({0.1499},{0.0864});
	\coordinate (fp_craft_cont-1_p_6) at ({0.1399},{0.0964});
	\coordinate (fp_craft_cont-1_p_7) at ({0.1206},{0.1097});
	
	\coordinate (fp_kleene_head) at ({0.1182},{0.0718});
	\coordinate (fp_kleene_p_0) at ({0.1494},{0.0530});
	\coordinate (fp_kleene_p_1) at ({0.1494},{0.1067});
	\coordinate (fp_kleene_p_2) at ({0.1010},{0.1067});
	\coordinate (fp_kleene_p_3) at ({0.0870},{0.0907});
	\coordinate (fp_kleene_p_4) at ({0.0870},{0.0370});
	\coordinate (fp_kleene_p_5) at ({0.1354},{0.0370});
	\draw [style=dashed, opacity=0.5] (fp_kleene_p_0) -- (fp_kleene_p_1) -- (fp_kleene_p_2) -- (fp_kleene_p_3) -- (fp_kleene_p_4) -- (fp_kleene_p_5)  -- cycle;

	\fill [fill=my-full-green!80, opacity=0.8, rounded corners=0mm] (fp_craft_cont-2_p_0) -- (fp_craft_cont-2_p_1) -- (fp_craft_cont-2_p_2) -- (fp_craft_cont-2_p_3)  -- cycle;

	\fill [fill=my-full-blue!80, opacity=0.8, rounded corners=0mm] (fp_craft_cont-1_p_0) -- (fp_craft_cont-1_p_1) -- (fp_craft_cont-1_p_2) -- (fp_craft_cont-1_p_3) -- (fp_craft_cont-1_p_4) -- (fp_craft_cont-1_p_5) -- (fp_craft_cont-1_p_6) -- (fp_craft_cont-1_p_7)  -- cycle;

	\fill [fill=my-full-orange!50, opacity=0.8, rounded corners=0mm] (fp_craft_p_0) -- (fp_craft_p_1) -- (fp_craft_p_2) -- (fp_craft_p_3)  -- cycle;

	\coordinate (done_head) at ({0.1231},{0.0846});
	\coordinate (done_p_0) at ({0.1038},{0.0808});
	\coordinate (done_p_1) at ({0.1269},{0.0654});
	\coordinate (done_p_2) at ({0.1423},{0.0885});
	\coordinate (done_p_3) at ({0.1192},{0.1038});

	\coordinate (1_head) at ({0.0700},{0.0300});
	\coordinate (1_p_0) at ({0.0600},{0.0200});
	\coordinate (1_p_1) at ({0.0800},{0.0200});
	\coordinate (1_p_2) at ({0.0800},{0.0400});
	\coordinate (1_p_3) at ({0.0600},{0.0400});
	
	\draw [color=black!80, rounded corners=0mm] (1_p_0) -- (1_p_1) -- (1_p_2) -- (1_p_3)  -- cycle;
	
	\coordinate (2_head) at ({0.1020},{0.0520});
	\coordinate (2_p_0) at ({0.0820},{0.0600});
	\coordinate (2_p_1) at ({0.0940},{0.0320});
	\coordinate (2_p_2) at ({0.1220},{0.0440});
	\coordinate (2_p_3) at ({0.1100},{0.0720});
	
	\draw [color=black!80, rounded corners=0mm] (2_p_0) -- (2_p_1) -- (2_p_2) -- (2_p_3)  -- cycle;

	\node ()[font=\tiny, anchor=center, color=black, scale=1.2] at (0.088, 0.02) {$\hat{\S}_1$};
	\node ()[font=\tiny, anchor=center, color=black, scale=1.2] at (0.075, 0.070) {$\hat{\S}_2$};
		\node ()[font=\tiny, anchor=center, color=black!70, scale=1.2] at (0.085, 0.110) {$\hat{\S}_{kl}^*$};
	\node ()[font=\small, anchor=center, color=black] at (0.18, 0.01) {$\left( \vs_i \right)_1$};
	\node ()[font=\small, anchor=center, color=black] at (0.05, 0.12) {$\left( \vs_i \right)_2$};
	\node ()[font=\tiny, anchor=center, color=my-full-blue, scale=1.2] at (0.110, 0.115) {$\hat{\S}_{i}$};
	\node ()[font=\tiny, anchor=center, color=my-full-green, scale=1.2] at (0.164, 0.072) {$\hat{\S}_{i-1}$};
	\node ()[font=\tiny, anchor=center, color=my-full-orange!0!black!0, scale=1.2] at (0.125, 0.0850) {$\hat{\S}^*_{cr}$};

	\draw[->,opacity=0.7,line width=1] (0.05, 0.01) -- (0.16, 0.01) ;
	\draw[->,opacity=0.7,line width=1] (0.05, 0.01) -- (0.05, 0.11) ;

	\coordinate (z) at ({0.1231},{0.0846});
\end{tikzpicture}
\vspace{-4mm}
\caption{Analyzing our running example with the \domain domain.}%
\label{fig:example_ch}
\end{wrapfigure}
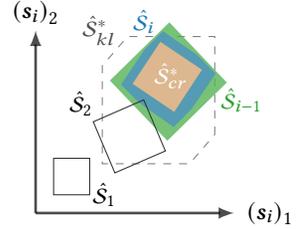

\paragraph{Example (cont.)}
We now use the \domain domain to analyze our running example and illustrate the result in \cref{fig:example_ch}. We regularly apply order reduction (discussed later) to the intermediate \domain to limit its representation size and thus obtain different, simpler intermediate states than with Zonotope. We find a post-fixpoint of the iterator when our efficient inclusion check (also discussed later) shows that the blue region $\hat{\S}_{i}$ is contained within the green one ($\hat{\S}_{i-1}$). While the blue $\hat{\S}_{i}$ is thus an abstraction of the true fixpoint set, it is still relatively loose. Leveraging fixpoint set preservation (as discussed earlier) and applying additional abstract iterations $\vg^\#$, we obtain the much tighter fixpoint set abstraction $\hat{\S}_{\text{cr}}^*$ (orange). It is almost identical to the one obtained with the much more expensive analysis based on standard Zonotope (see \cref{fig:example:abstract}), infeasible in higher dimensions, and much more precise than the one obtained with Kleene iteration ($\hat{\S}_{kl}^*$ shown dashed).

\subsection{The \tool Verifier}
Combining our theoretical insights and the \domain domain, we introduce \tool, an efficient verifier of high-dimensional fixpoint iterations with convergence guarantees, based on the abstract fixpoint iterations outlined above.
We discuss \tool in detail in \cref{sec:tool}, providing soundness proofs and detailed engineering considerations, before demonstrating in an extensive evaluation that it achieves state-of-the-art performance for monDEQ verification in \cref{sec:eval}.

\section{Abstracting Fixpoint Iterations}
\label{sec:abstract-fp}
In this section, we propose a novel, domain-specific abstract interpretation approach for (high-dimensional) fixpoint iterations. %

\paragraph{Fixpoint Iterations}
We consider the general case of a function $\vf$ with a unique fixpoint $\vz^*(\vx) = \vf(\vx, \vz^*)$ given a bounded input $\vx$, i.e. $\|\vx\| < \infty$. Allowing for preprocessing on $\vx$ and postprocessing on $\vz^*$, this encompasses a wide range of problems including monDEQs.

\begin{wrapfigure}[8]{r}{0.31 \textwidth}
	\vspace{-6.5mm}
	\begin{lstlisting}[language=Python,mathescape=true,numbers=none]
	def fixpoint($x$):
		$z$ = $0$
		$u$ = $0$
		while not converged(z):
			$z, u$ = $g_\alpha(x, z, u)$
		return $z$
	\end{lstlisting}
	\vspace{-5mm}
	\captionof{figure}{Iterative fixpoint computation with solver $\vg_\alpha(\vx, \vz, \vu)$.}
	\label{fig:fixpoint_iter}
\end{wrapfigure}
\paragraph{Fixpoint solvers}
Often, iteratively applying $\vf$ converges only slowly or not at all towards a fixpoint \citep{MonDEQWinstonK20}. Instead, iterative root-finding algorithms are applied to $\vf(\vx, \vz) - \vz$ to find the fixpoint \citep{DEQBaiKK19}. We will introduce specific instantiations later (see \cref{sec:mondeq}) and for now assume that we have access to a so-called \emph{fixpoint solver} $\vg_\alpha$ with parameters $\alpha$ which converges to a unique fixpoint in finitely many steps, i.e., $\forall \epsilon \in \R^{>0}, \exists l \in \mathbb{N}, \forall k > l, \norm{\vz_k(\vx) - \vz^*(\vx)} < \epsilon$. 

\paragraph{Concrete Semantics} We write $\g(\vx,\vs_{n})$ for the concrete semantics of one iteration of a generic fixpoint solver, where the latent variable $\vs \to [\vz; \vu]$ contains an auxiliary variable $\vu$ in addition to $\vz$. We build the concrete semantics for specific instantiations of $\g$ directly from those of the constituting mathematical operations, e.g., in Python \citep{guth2013formal}. We write $\boldsymbol{\Xi}(\vx)$ for the concrete semantics of the fixpoint solver iterated until convergence (i.e., with $\eps \to 0$) given an initialization of $\vs_{0} = \0$, as illustrated in the pseudo-code for a fixpoint solver shown in \cref{fig:fixpoint_iter}. We construct these concrete semantics from those of any $\g$ satisfying the above convergence guarantees.

\begin{wrapfigure}[13]{r}{0.5 \textwidth}
	\centering
	\vspace{-6mm}
	\scalebox{0.7}{\begin{tikzpicture}
\tikzset{>=latex}

	\def\dx{5.2}
	\def\dy{1.9}
	\def\yr{0.55}

	\node (input_state) [rectangle,
	minimum width=0cm, minimum height=5cm, align=center, scale=0.8, rounded corners=2pt,
	anchor=center] at (-\dx, 0) {
		\begin{tikzpicture}
			\coordinate (zono_out_head) at ({1.0000},{1.0000});
			\coordinate (zono_out_p_0) at ({-0.1103},{-1.6393});
			\coordinate (zono_out_p_1) at ({3.4443},{1.3159});
			\coordinate (zono_out_p_2) at ({3.4443},{2.5159});
			\coordinate (zono_out_p_3) at ({2.5103},{3.6393});
			\coordinate (zono_out_p_4) at ({2.1103},{3.6393});
			\coordinate (zono_out_p_5) at ({-1.4443},{0.6841});
			\coordinate (zono_out_p_6) at ({-1.4443},{-0.5159});
			\coordinate (zono_out_p_7) at ({-0.5103},{-1.6393});
			
			\fill [fill=my-full-green!45, opacity=1, rounded corners=0mm] (zono_out_p_0) -- (zono_out_p_1) -- (zono_out_p_2) -- (zono_out_p_3) -- (zono_out_p_4) -- (zono_out_p_5) -- (zono_out_p_6) -- (zono_out_p_7)  -- cycle;

			\coordinate (zono_in_proj_head) at ({1.0200},{0.9200});
			\coordinate (zono_in_proj_p_0) at ({1.2700},{-0.2800});
			\coordinate (zono_in_proj_p_1) at ({2.8700},{1.9200});
			\coordinate (zono_in_proj_p_2) at ({2.8700},{2.9200});
			\coordinate (zono_in_proj_p_3) at ({2.5700},{2.9200});
			\coordinate (zono_in_proj_p_4) at ({1.9700},{2.7200});
			\coordinate (zono_in_proj_p_5) at ({0.7700},{2.1200});
			\coordinate (zono_in_proj_p_6) at ({-0.8300},{-0.0800});
			\coordinate (zono_in_proj_p_7) at ({-0.8300},{-1.0800});
			\coordinate (zono_in_proj_p_8) at ({-0.5300},{-1.0800});
			\coordinate (zono_in_proj_p_9) at ({0.0700},{-0.8800});
		
			\fill [fill=my-full-blue!60, opacity=1, rounded corners=0mm] (zono_in_proj_p_0) -- (zono_in_proj_p_1) -- (zono_in_proj_p_2) -- (zono_in_proj_p_3) -- (zono_in_proj_p_4) -- (zono_in_proj_p_5) -- (zono_in_proj_p_6) -- (zono_in_proj_p_7) -- (zono_in_proj_p_8) -- (zono_in_proj_p_9)  -- cycle;

			\node ()[minimum height=0cm, anchor=south east, scale=1.2] at ($(zono_out_p_4)+(0.02,-0.15)$) {$\hat{\S}_n$};	
			\node ()[minimum height=0cm, anchor=north west, scale=1.2] at ($(zono_in_proj_p_1)+(-0.05,0.7)$) {$\hat{\S}_{n+1}$};	
		\end{tikzpicture}
			
	};

	\node (output_state) [fill=black!00, rectangle,
minimum width=0cm, minimum height=5cm, align=center, scale=0.8, rounded corners=2pt,
anchor=center] at (0, 0) {
	\begin{tikzpicture}
	
	\coordinate (zono_in_proj_head) at ({1.0200},{0.9200});
	\coordinate (zono_in_proj_p_0) at ({1.2700},{-0.2800});
	\coordinate (zono_in_proj_p_1) at ({2.8700},{1.9200});
	\coordinate (zono_in_proj_p_2) at ({2.8700},{2.9200});
	\coordinate (zono_in_proj_p_3) at ({2.5700},{2.9200});
	\coordinate (zono_in_proj_p_4) at ({1.9700},{2.7200});
	\coordinate (zono_in_proj_p_5) at ({0.7700},{2.1200});
	\coordinate (zono_in_proj_p_6) at ({-0.8300},{-0.0800});
	\coordinate (zono_in_proj_p_7) at ({-0.8300},{-1.0800});
	\coordinate (zono_in_proj_p_8) at ({-0.5300},{-1.0800});
	\coordinate (zono_in_proj_p_9) at ({0.0700},{-0.8800});
	
	\fill [fill=my-full-blue!60, opacity=1, rounded corners=0mm] (zono_in_proj_p_0) -- (zono_in_proj_p_1) -- (zono_in_proj_p_2) -- (zono_in_proj_p_3) -- (zono_in_proj_p_4) -- (zono_in_proj_p_5) -- (zono_in_proj_p_6) -- (zono_in_proj_p_7) -- (zono_in_proj_p_8) -- (zono_in_proj_p_9)  -- cycle;
	
	\coordinate (zono_in_2_head) at ({0.9200},{0.9500});
	\coordinate (zono_in_2_p_0) at ({1.0700},{0.0000});
	\coordinate (zono_in_2_p_1) at ({2.6700},{2.2000});
	\coordinate (zono_in_2_p_2) at ({2.6700},{2.8000});
	\coordinate (zono_in_2_p_3) at ({2.5700},{2.8000});
	\coordinate (zono_in_2_p_4) at ({1.3700},{2.2000});
	\coordinate (zono_in_2_p_5) at ({0.7700},{1.9000});
	\coordinate (zono_in_2_p_6) at ({-0.8000},{-0.3000});
	\coordinate (zono_in_2_p_7) at ({-0.8000},{-0.9000});
	\coordinate (zono_in_2_p_8) at ({-0.7300},{-0.9000});
	
	\fill [fill=my-full-orange!50, opacity=1, rounded corners=0mm] (zono_in_2_p_0) -- (zono_in_2_p_1) -- (zono_in_2_p_2) -- (zono_in_2_p_3) -- (zono_in_2_p_4) -- (zono_in_2_p_5) -- (zono_in_2_p_6) -- (zono_in_2_p_7) -- (zono_in_2_p_8)  -- cycle;

	\coordinate (z_p_0) at ({2.3},{1.9});
	\coordinate (z_p_1) at ({1},{0.9});
	\coordinate (z_p_2) at ({0},{0.8});
	\coordinate (z_p_3) at ({0},{0.8});
	\coordinate (z_p_4) at ({-0.5},{-0.5});
	\coordinate (z_p_5) at ({1},{0.3});
	\fill [fill=my-full-red!50, opacity=1, rounded corners=0mm] (z_p_0) -- (z_p_1) -- (z_p_2) -- (z_p_3) -- (z_p_4) -- (z_p_5) -- cycle;

	\node ()[minimum height=0cm, anchor=south east, scale=1.2] at ($(zono_in_proj_p_3)+(0.02,-0.10)$) {$\hat{\S}_{n+1}$};	
	\node ()[minimum height=0cm, anchor=south east, scale=1.2] at (0.7,0) {$\S^{*}$};
	\end{tikzpicture}

};

	\draw[->] ($(input_state.north)+(1.93,-1.32)$)  to [out=30,in=150, looseness=1] ($(output_state.north)+(-0.1,-1.06)$);
	\node ()[minimum height=0, anchor=south, align=center] at ($(input_state.east)!0.50!(output_state.west)+(0.5,1.50)$) {$\vg^\#_\alpha(\hat{\X},\hat{\S}_{n})$};

	\draw [<-, draw=black!80] ($(input_state.east)+(-1.36,0.0)$) -- ($(output_state.west)+(0.85,0.0)$);
	\node ()[minimum height=0, anchor=south, align=center] at ($(input_state.east)!0.50!(output_state.west)$) {$\hat{\S}_{n+1} \sqsubseteq \hat{\S}_{n}$};

	\draw[<-] ($(output_state.south)+(-1.8,0.52)$)  to [out=210,in=330, looseness=1] ($(input_state.south)+(-1.0,0.76)$);
	\node ()[minimum height=0, anchor=south, align=center] at ($(input_state.east)!0.50!(output_state.west)+(-1.8,-2.02)$) {$\vg_\alpha(\gamma(\hat{\X}),\gamma(\hat{\S}_{n+1}))$};

	\node ()[anchor=north, align=center] at (-1.1*\dx, -2.4) { Input to iteration step};	
	\node ()[anchor=north, align=center] at (-0.1*\dx, -2.4) { Output of iteration step};	

\end{tikzpicture}}
	\vspace{-4mm}
	\caption{
		If an over-approximated solver iteration $\hat{\S}_{n+1} = \gS(\hat{\X},\hat{\S}_{n})$
		(blue) is contained in the previous state $\hat{\S}_{n}$ (green), any (exact) iteration of $\vg_{\alpha}(\gamma(\hat{\X}),\gamma(\hat{\S}_{n+1}))$ (orange)
		will not escape from $\hat{\S}_{n+1}$. This implies containment of the true fixpoint set (red) $\S^{*}  \subseteq \gamma(\hat{\S}_{n+1})$.
	}
	\label{fig:abstract_iteration}
\end{wrapfigure}
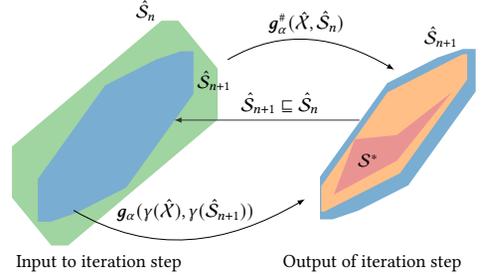
\paragraph{Abstract Semantics}
We let $\X \subseteq \R^q$ denote a set of inputs, $\S_{n} \subseteq \R^p$ the corresponding intermediate solver states at step $n$ (we write $[\Z_n, \U_n] \gets \S_n$ to obtain the two constituting sets), and $\Z^{*} \coloneqq \{\vz^{*}(\vx) \mid \vx \in \X\}$ the corresponding fixpoints.
We now define the abstract semantics for a single step of the fixpoint solver as any sound abstract transformer $\gS: \aid \times \aid \mapsto \aid$ of the iterated function $\g: \R^{p+q} \mapsto \R^p$, i.e., any $\gS$ satisfying $\gamma(\gS(\hat{\X},\hat{\S}_n)) \supseteq \{\g(\vx, \vs) \mid \vx \in \gamma(\hat{\X}), \vs \in \gamma(\hat{\S}_n)\}$, for the abstract elements $\hat{\X}, \hat{\S}_n, \hat{\Z}_n, \hat{\U}_n$ from domain $\aid$, which over-approximate the corresponding sets, e.g., $\gamma(\hat{\X}) \supseteq \X$.

We define the abstract semantics of the fixpoint solver $\boldsymbol{\Xi}$ yielding the concrete $\Z^{*}$ such that it satisfies $\gamma(\boldsymbol{\Xi}^\#(\hat{\X})) \supseteq \{\boldsymbol{\Xi}(\vx) \mid \vx \in \X\} \eqqcolon \Z^{*}$. To this end, we derive the following theorem:

\begin{restatable}[Fixpoint contraction]{theorem}{contraction}
	\sloppy
	\label{the:contraction}
	Let
	\begin{itemize}
		\item \g be an iterative process, guaranteed to converge to a unique fixpoint $\vz^*$ in finitely many steps for any bounded input $\vx$, and \gS its sound abstract transformer,
		\item $\hat{\S}_{n+1} \coloneqq \gS(\hat{\X},\hat{\S}_n)$ an abstract element in $\aid$ describing a closed set and denoting an over-approximation of applying \g ($n+1$)-times for some $\vz_{0},\vu_{0}$ on all inputs $\vx' \in \hat{\X}$.
	\end{itemize}
	Then for $[\hat{\Z}_n, \hat{\U}_n] \gets \hat{\S}_n$:
	\begin{equation}
	\label{eq:contraction}
	\hat{\S}_{n+1} \sqsubseteq \hat{\S}_n
	\mspace{9.0mu}\implies\mspace{9.0mu} \Z^{*} \subseteq \gamma(\hat{\Z}_{n+1}).
	\end{equation}
\end{restatable}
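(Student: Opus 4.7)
The plan is to show that $\gamma(\hat{\S}_{n+1})$ is a forward-invariant set for the concrete solver iteration $\vg_\alpha(\vx,\cdot)$ applied to any $\vx \in \X$, and then use the fact that the concrete iterates converge to the unique fixpoint $\vz^*(\vx)$ to conclude that this fixpoint must lie in the (closed) projection $\gamma(\hat{\Z}_{n+1})$. All pieces are already supplied by the hypotheses: soundness of $\gS$, the containment $\hat{\S}_{n+1}\sqsubseteq\hat{\S}_n$, convergence of $\vg_\alpha$, and closedness of the concretization.

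First I would establish the base case. By the assumption that $\hat{\S}_n$ over-approximates $n$ applications of $\g$ from the chosen $(\vz_0,\vu_0)$ on every $\vx'\in\gamma(\hat{\X})$, in particular for $\vx\in\X\subseteq\gamma(\hat{\X})$ the $n$-th concrete iterate $\vs_n(\vx)$ lies in $\gamma(\hat{\S}_n)$. Applying soundness of $\gS$ once gives $\vs_{n+1}(\vx)=\g(\vx,\vs_n(\vx))\in\gamma(\gS(\hat{\X},\hat{\S}_n))=\gamma(\hat{\S}_{n+1})$.

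Next, I would run the induction that is the heart of the argument. Suppose $\vs_k(\vx)\in\gamma(\hat{\S}_{n+1})$ for some $k\ge n+1$. The containment assumption $\hat{\S}_{n+1}\sqsubseteq\hat{\S}_n$ combined with the monotonicity of $\gamma$ with respect to $\sqsubseteq$ yields $\gamma(\hat{\S}_{n+1})\subseteq\gamma(\hat{\S}_n)$, so $\vs_k(\vx)\in\gamma(\hat{\S}_n)$ as well. Soundness of $\gS$ then gives
\begin{equation*}
\vs_{k+1}(\vx)=\g(\vx,\vs_k(\vx))\in\gamma(\gS(\hat{\X},\hat{\S}_n))=\gamma(\hat{\S}_{n+1}),
\end{equation*}
closing the induction. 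Hence $\vs_k(\vx)\in\gamma(\hat{\S}_{n+1})$ for every $k\ge n+1$ and every $\vx\in\X$, and projecting to the $\vz$ component gives $\vz_k(\vx)\in\gamma(\hat{\Z}_{n+1})$.

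Finally, I would pass to the limit. Because $\g$ is guaranteed to converge to the unique fixpoint, $\vz_k(\vx)\to\vz^*(\vx)$ as $k\to\infty$. Since $\hat{\S}_{n+1}$ describes a closed set, so does its projection $\hat{\Z}_{n+1}$, and the limit of a convergent sequence in a closed set remains in that set. Therefore $\vz^*(\vx)\in\gamma(\hat{\Z}_{n+1})$ for every $\vx\in\X$, which is exactly $\Z^*\subseteq\gamma(\hat{\Z}_{n+1})$.

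The only subtle step is the last one: everything before it is a straightforward induction from soundness and the monotonicity of $\gamma$, but passing from ``every finite iterate is in $\gamma(\hat{\Z}_{n+1})$'' to ``the limit is in $\gamma(\hat{\Z}_{n+1})$'' genuinely requires the closedness hypothesis, which is why it is explicitly part of the theorem's assumptions and why the projection from $\hat{\S}_{n+1}$ to $\hat{\Z}_{n+1}$ must preserve closedness (which for any reasonable domain, including \domain, it does).
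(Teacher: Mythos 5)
Your proof is correct and follows essentially the same route as the paper's: establish by induction (using soundness of $\gS$ together with $\hat{\S}_{n+1}\sqsubseteq\hat{\S}_n$) that all concrete iterates from step $n+1$ onward stay in $\gamma(\hat{\S}_{n+1})$, then invoke the convergence guarantee and the closedness of the concretization to place the fixpoint $\vz^*(\vx)$ in $\gamma(\hat{\Z}_{n+1})$ for every $\vx\in\X$. The only cosmetic difference is that the paper phrases the induction on the reachable sets $\S_j$ and argues via closure points $\overline{\Z}_j$, whereas you argue pointwise per input and take the limit of a convergent sequence in a closed set; the substance is identical.
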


Intuitively, if we consistently apply \gS until we detect contraction ($\hat{\S}_{n+1} \sqsubseteq \hat{\S}_n$), then $\hat{\S}_{n}$ is a so-called post-fixpoint and its concretization includes the true fixpoint set $\Z^{*} \subseteq \gamma(\hat{\Z}_{n+1}) \subseteq \gamma(\hat{\Z}_{n})$.

We defer the formal proof to \cref{sec:proofs} but provide an intuition along the illustration in \cref{fig:abstract_iteration} below.
Once an iteration of $\gS$ maps $\hat{\S}_{n}$ (green in \cref{fig:abstract_iteration}) to a subset of itself $\hat{\S}_{n+1} \sqsubseteq \hat{\S}_{n}$ (blue), it follows from the soundness of \gS that any number $k>0$ of further applications of the concrete \g to $\gamma(\hat{\S}_{n+1}) \subseteq \gamma(\hat{\S}_{n})$ will map into $\S_{n+1} \subseteq \gamma(\hat{\S}_{n+1})$ (orange) and thus never `escape' $\gamma(\hat{\S}_{n+1})$.
As $\vg_{\alpha}$ is guaranteed to converge in finitely many steps in the concrete, the previously obtained $\gamma(\hat{\S}_{n+1}) \supseteq \S_{n+1} \supseteq \S_{n+k}$ must thus contain the true fixpoint set $[\Z^{*}, \U^*] \gets \S^{*}$ (red). Note that this does not necessarily hold for applications of the abstract transformer $\gS$, as it, in contrast to \g, is not necessarily monotonic, i.e., $\hat{\S}_A \sqsubseteq \hat{\S}_B \notimplies \gS(\hat{\S}_A) \sqsubseteq \gS(\hat{\S}_B)$.

Empirically, the abstractions found via \cref{the:contraction} are often relatively loose. However, we can obtain a more precise abstraction by applying additional iterations of a fixpoint set preserving abstract solver \gS to $\hat{\S}_{n+1}$.

\paragraph{Fixpoint Set Preservation}
We call an abstract transformer \gS of \g fixpoint set preserving if and only if applying it to any abstract state $\hat{\S}_{n}$ that contains the true fixpoint set, i.e., $\gamma(\hat{\S}_{n}) \supseteq \S^{*}$, results in an abstract state $\hat{\S}_{n+1}$ that still contains the true fixpoint set $\S^{*}$. Formally:
\begin{definition}[Fixpoint set preservation] \label{thm:preserving}
	We call an abstract transformer \gS fixpoint set preserving if and only if
		$\S^* \subseteq \gamma(\hat{\S}_{n}) \implies \S^* \subseteq \gamma(\gS(\hat{\X}, \hat{\S}_n)).$
\end{definition}

Indeed, we can show that a broad class of \gS are fixpoint set preserving:

\begin{restatable}[Fixpoint set preservation]{theorem}{iterpw}
	\label{the:iter_PR}
	Every sound abstract transformer \gS of a locally Lipschitz \g with convergence guarantees in the concrete is fixpoint set preserving.
\end{restatable}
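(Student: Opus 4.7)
The strategy is to show that the hypothesis $\S^{*} \subseteq \gamma(\hat{\S}_n)$ combined with soundness of $\gS$ is already enough to carry the fixpoint set through one abstract iteration, with local Lipschitzness of $\g$ used only to certify that every element of $\S^{*}$ is a genuine fixed point of the map $\g(\vx, \cdot)$ rather than merely a limit of its iterates.

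First I would unpack what membership in $\S^{*}$ means. For each $\vx \in \gamma(\hat{\X})$ the solver produces a convergent sequence $\vs_k(\vx)$ with limit $\vs^{*}(\vx)$ whose $\vz$-part equals $\vz^{*}(\vx)$, and $\S^{*}$ is the collection of these limit points over $\vx \in \gamma(\hat{\X})$. The first key step is to upgrade ``limit point'' to ``fixed point'': since $\g$ is locally Lipschitz, it is continuous in a neighborhood of $(\vx, \vs^{*}(\vx))$, so passing to the limit in the recursion $\vs_{k+1}(\vx) = \g(\vx, \vs_k(\vx))$ yields $\vs^{*}(\vx) = \g(\vx, \vs^{*}(\vx))$. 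This gives us the pointwise identity $\vs = \g(\vx, \vs)$ for every $\vs \in \S^{*}$.

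Next I would invoke the hypothesis $\S^{*} \subseteq \gamma(\hat{\S}_n)$ together with soundness of $\gS$. Fix any $\vx \in \gamma(\hat{\X})$ and let $\vs \coloneqq \vs^{*}(\vx) \in \S^{*}$. By hypothesis $\vs \in \gamma(\hat{\S}_n)$, so by soundness of $\gS$ (as stated at the start of \cref{sec:abstract-fp}) we have
\begin{equation*}
\g(\vx, \vs) \in \bigl\{\g(\vx', \vs') \mid \vx' \in \gamma(\hat{\X}),\, \vs' \in \gamma(\hat{\S}_n)\bigr\} \subseteq \gamma(\gS(\hat{\X}, \hat{\S}_n)).
\end{equation*}
By the first step, $\g(\vx, \vs) = \vs = \vs^{*}(\vx)$, so $\vs^{*}(\vx) \in \gamma(\gS(\hat{\X}, \hat{\S}_n))$. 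Taking the union over all $\vx \in \gamma(\hat{\X})$ yields $\S^{*} \subseteq \gamma(\gS(\hat{\X}, \hat{\S}_n))$, which is exactly \cref{thm:preserving}.

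The only genuine technical point is the continuity step that promotes the asymptotic limit $\vs^{*}(\vx)$ to a true fixed point of $\g(\vx, \cdot)$; without this, the ``$\g(\vx, \vs) = \vs$'' rewrite fails and the soundness invocation no longer closes the loop. Everything else is a one-line application of the definition of soundness. Note that no contraction argument or Lipschitz constant less than one is needed, nor do we need $\gS$ to be monotone — the statement is about a single application of $\gS$, so the pointwise witnesses $\vs^{*}(\vx)$ drag themselves across one step of the abstract transformer for free.
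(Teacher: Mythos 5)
Your proposal is correct and shares the paper's overall skeleton — first show that every element of $\S^{*}$ is a genuine fixed point of the concrete iterator \g, then finish with a single application of the soundness of \gS — but it establishes the key step by a genuinely different argument. The paper proves the fixed-point property by contradiction, using the Lipschitz constant quantitatively: assuming $\norm{\g(\vx,\vz^*,\vu)-\vz^*} > d$ for all $\vu \in \U^*$, the reverse triangle inequality shows that any $\tilde{\vz}$ with $\norm{\tilde{\vz}-\vz^*}\leq\eps < d/(L+1)$ is mapped to distance greater than $\eps$ from $\vz^*$, so the iterates could never satisfy the convergence guarantee. You instead use local Lipschitzness only qualitatively, as continuity, and pass to the limit in $\vs_{k+1}=\g(\vx,\vs_k)$; this is shorter, avoids introducing the constants $d$ and $L$, and treats the full solver state (including the auxiliary $\vu$) as a unit, whereas the paper's proof only concludes $\exists\, \vu\in\U^*$ with $\vz^*=\g(\vx,\vz^*,\vu)$ and leaves the auxiliary component somewhat loose. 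The one caveat is that your limit argument needs the \emph{full} state $\vs_k=[\vz_k;\vu_k]$ to converge to $\vs^*$, which is slightly stronger than the guarantee as literally stated in \cref{sec:abstract-fp} (convergence of $\vz_k$ only); the paper's contradiction argument is phrased to work with that $\vz$-only guarantee. Since the auxiliary state does converge for \pr and is absent for \fwdbwd, and since this matches the intended meaning of $\S^{*}$, this is a presentational rather than a substantive gap, but it is worth stating the full-state convergence assumption explicitly if you write the proof out.
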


We again defer the formal proof to \cref{sec:proofs}. Intuitively, given the local Lipschitzness of \g and the uniqueness of the fixpoint, the convergence guarantee can only hold if fixpoints are preserved.

\paragraph{Abstract Interpreter}
The above results can be used to construct abstract interpreters for arbitrary locally Lipschitz iterative processes converging to unique fixpoints in finitely many steps. %
While we focus on the verification of monDEQs, we illustrate the wider applicability of our approach on a toy example of a square-root computation using the Householder method in \cref{sec:householder}. 

To actually construct an abstract interpreter for high-dimensional problems based on the above results, we need a suitable abstract domain $\aid$ equipped with an efficient containment check $\sqsubseteq$ and precise transformers $\vg_\alpha^\#$ for the used fixpoint solver.

\section{The \domain Abstract Domain}\label{sec:m_zono}
In this section, we introduce the \domainl (\domain), a novel abstract domain that enables our efficient domain-specific abstract interpreter. %
Based on Zonotope \cite{ZonotopeGhorbalGP09}, our domain is designed to carefully balance three features: (i) efficient propagation of abstract elements,  (ii) fast (abstract) inclusion checks, and (iii) precision of all abstract transformers needed for (i) and (ii). Recall that none of the abstract domains typically used for neural network verification satisfies all three of these requirements, as they were designed for neural architectures with a constant (small) number of layers (see \cref{sec:other_domains}).

\paragraph{Zonotope} We begin with a brief recap of the Zonotope domain \citep{ZonotopeGhorbalGP09,DeepZSinghGMPV18}.
A Zonotope $\hat{\Z} \in \aid$ describing a volume $\gamma(\hat{\Z}) \subseteq \R^{p}$, is defined as $ \hat{\Z} = \mA \ez + \va,$ where $\mA \in \R^{p \times k}$ is called the error coefficient matrix, $\va \in \R^{p}$ the center, and $\ez = [-1,1]^k$ the Zonotope error terms. 
Its concretization function is defined as $\gamma(\hat{\Z}) \coloneqq \{ \vx = \mA \ez + \va \mid \ez \in [-1,1]^k\}$. Using the exact abstract transformer $f^\#(\hat{\Z})$ of \citet{DeepZSinghGMPV18} for an affine transformations $f(\vx) = \mW \vx + \vc$, we obtain $\hat{\Z}'$ with $\mA' = \mW\mA$ and $\va'=\mW \va + \vc$.

\paragraph{\domain}
We define a \domain $\hat{\Z} \in \aid$ describing a volume $\gamma(\hat{\Z}) \subseteq \R^{p}$ as
\begin{align}
	\hat{\Z} = \mA \ez + \diag(\vb) \eb + \va,
\end{align}
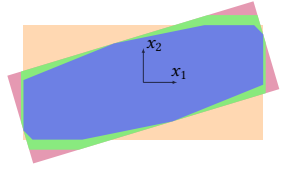
\begin{wrapfigure}[10]{r}{0.355 \textwidth}
	\centering
	\vspace{-4mm}
	\scalebox{0.43}{\begin{tikzpicture}
	\tikzset{>=latex}

	\node () [rectangle, minimum height=3cm, align=center, scale=0.7, rounded corners=2pt,
	anchor=center] at (11, 1) {
		\begin{tikzpicture}

			\coordinate (box_head) at ({1.0000},{1.0000});
			\coordinate (box_p_0) at ({-4.3000},{-1.5000});
			\coordinate (box_p_1) at ({6.3000},{-1.5000});
			\coordinate (box_p_2) at ({6.3000},{3.5000});
			\coordinate (box_p_3) at ({-4.3000},{3.5000});

			\fill [fill=orange!50, opacity=0.6, rounded corners=0mm] (box_p_0) -- (box_p_1) -- (box_p_2) -- (box_p_3)  -- cycle;
			\coordinate (zono3_head) at ({1.0000},{1.0000});
			\coordinate (zono3_p_0) at ({-5.0143},{1.2925});
			\coordinate (zono3_p_1) at ({-3.8695},{-2.5420});
			\coordinate (zono3_p_2) at ({7.0143},{0.7075});
			\coordinate (zono3_p_3) at ({5.8695},{4.5420});
			
			\fill [fill=purple!50, opacity=0.8, rounded corners=0mm] (zono3_p_0) -- (zono3_p_1) -- (zono3_p_2) -- (zono3_p_3)  -- cycle;
			\coordinate (zono2_head) at ({1.0000},{1.0000});
			\coordinate (zono2_p_0) at ({-4.4112},{1.4726});
			\coordinate (zono2_p_1) at ({-4.4112},{-0.7274});
			\coordinate (zono2_p_2) at ({-4.0495},{-1.9389});
			\coordinate (zono2_p_3) at ({-1.8495},{-1.9389});
			\coordinate (zono2_p_4) at ({6.4112},{0.5274});
			\coordinate (zono2_p_5) at ({6.4112},{2.7274});
			\coordinate (zono2_p_6) at ({6.0495},{3.9389});
			\coordinate (zono2_p_7) at ({3.8495},{3.9389});

			\fill [fill=green!55, opacity=0.8, rounded corners=0mm] (zono2_p_0) -- (zono2_p_1) -- (zono2_p_2) -- (zono2_p_3) -- (zono2_p_4) -- (zono2_p_5) -- (zono2_p_6) -- (zono2_p_7)  -- cycle;
			\coordinate (zono_head) at ({1.0000},{1.0000});
			\coordinate (zono_p_0) at ({-4.3000},{1.1000});
			\coordinate (zono_p_1) at ({-4.3000},{-1.1000});
			\coordinate (zono_p_2) at ({-3.9000},{-1.5000});
			\coordinate (zono_p_3) at ({-1.7000},{-1.5000});
			\coordinate (zono_p_4) at ({2.3000},{-0.7000});
			\coordinate (zono_p_5) at ({6.3000},{0.9000});
			\coordinate (zono_p_6) at ({6.3000},{3.1000});
			\coordinate (zono_p_7) at ({5.9000},{3.5000});
			\coordinate (zono_p_8) at ({3.7000},{3.5000});
			\coordinate (zono_p_9) at ({-0.3000},{2.7000});

			\fill [fill=blue!60, opacity=0.8, rounded corners=0mm] (zono_p_0) -- (zono_p_1) -- (zono_p_2) -- (zono_p_3) -- (zono_p_4) -- (zono_p_5) -- (zono_p_6) -- (zono_p_7) -- (zono_p_8) -- (zono_p_9)  -- cycle;

			\draw[->,opacity=0.7,line width=1] (zono_head) -- +(0,1.5) ;
			\draw[->,opacity=0.7,line width=1] (zono_head) -- +(1.5,0) ;
			
			\node [align=center, minimum height=0cm, scale=2.3] at (2.6,1.4) {$x_1$};
			\node [align=center, minimum height=0cm, scale=2.3] at (1.5,2.6) {$x_2$};
			
		\end{tikzpicture}
		
	};

\end{tikzpicture}}
	\vspace{-2mm}
	\caption{Over-approximations of an improper \domain (blue) by a proper one with (green) and without (red) Box component and a Box (orange).} \label{fig:mzono_vis}
\end{wrapfigure}
by extending the Zonotope domain with the Box error vector $\vb \in \left(\R^{\geq 0} \right)^{p}$ and corresponding Box error terms $\eb = [-1,1]^p$. 
If $\mA$ is invertible, i.e. full rank and $k=p$, we call $\hat{\Z}$ a \emph{proper} \domain, or else an \emph{improper} one. 
We adapt the concretization function to $\gamma(\hat{\Z}) = \{ \vx = \mA \ez + \diag(\vb) \eb + \va \mid \ez \in [-1,1]^k, \eb \in [-1,1]^p \}$ and define a partial order $\sqsubseteq$ over \domain based on the set inclusion $\subseteq$ of their concretizations. 
Formally, any \domain can be seen as the Minkowski sum of a Zonotope ($\mA \ez$) and a Hyperbox ($\diag(\vb) \eb$), also called Hybrid Zonotope \citep{DiffAIMirmanGV18,GoubaultP2008}. However, not every Hybrid-Zonotope is a \emph{proper} \domain as their error matrix is generally not invertible, which is crucial for our efficient containment check (discussed later).

Computing $\hat{\Z}' \sqsubseteq \hat{\Z}$ exactly is generally intractable. Therefore, we introduce an efficient over-approximation (discussed later) that is sound but not complete and requires the outer \domain to be proper. By slight abuse of notation, we also denote it by $\sqsubseteq$. While a similar inclusion check is possible for any standard Zonotope with $p$ linearly independent error terms ($\vb=\0$), equivalent to a Parallelotope \citep{ParallelotopeAmatoS12}, and any Box approximation ($\mA = \0$), a \domain yields a tighter abstraction than either since it can effectively employ twice as many error terms. 
We visualize this in \cref{fig:mzono_vis}, where we show a Box (orange), Parallelotope (red), and proper \domain (green) abstraction of the original set (blue).

\paragraph{Abstract Transformers}
For affine transformations, we use the Zonotope transformer described above, casting the Box errors as Zonotope errors by setting $\hat{\mA} = [\mA, \diag(\vb)]$ and $\hat{\vb}=\mbf{0}$ before applying the transformer. This yields an improper \domain with a zero Box component.
To encode the ReLU function, $y=\max(x, 0)$, for a \domain $\hat{\bc{Z}}$, we modify the Zonotope transformer proposed by \citet{DeepZSinghGMPV18} (recovered for $\vb=\mbf{0}$):
\begin{alignat*}{2}
	\hat{\Z}' = \mA' \ez &+ \diag(\vb') \eb +  &&\va' = ReLU^\#_{\boldsymbol{\lambda}}(\hat{\Z})\\
	\mA' &= \boldsymbol{\lambda} \mA' 	&&\vb' = \boldsymbol{\lambda} \vb + \boldsymbol{\mu}\\
	\va' &= \va + \boldsymbol{\mu}	&&\boldsymbol{\mu} = \begin{cases}
		(1- \boldsymbol{\lambda}) \, \vu_x/2 \quad  & \text{if } 0 \leq \boldsymbol{\lambda} \leq \vu_x/(\vu_x-\vl_x)\\
		-\boldsymbol{\lambda} \, \vl_x /2 \quad & \text{if } \vu_x/(\vu_x-\vl_x) \leq \boldsymbol{\lambda} \leq 1
	\end{cases}.
\end{alignat*}
Applying this transformer %
will result in a \domain with a non-zero Box component without changing its properness. By default, we choose $\lambda = u_x/(u_x-l_x)$ leading to the smallest volume in the $2$d input-output space.

\paragraph{Consolidating Error Terms}
To enable efficient inclusion checks and limit the number of error terms, we regularly over-approximate an improper \domain $\hat{\Z}$ ($\mA \in \R^{p \times k}$) with a proper one  $\hat{\Z}'$ (invertible $\mA' \in \R^{p \times p}$).
We call this process \emph{error consolidation}. If the Box component is zero ($\vb=0$) and $k>p$, this is known in the literature as order reduction via outer-approximation \citep{SadraddiniT19,KopetzkiSA17O}.
If $k > p$, we consolidate the $k$ old error terms into $p$ new ones, thus reducing the representation size. We ensure that $\mA'$ has full rank and is therefore invertible. If $k \leq p$, we pick a subset with full rank and complete it to a basis.
In monDEQ certification, $p = \dim(\vz)$ is the size of the latent dimension.

\begin{restatable}[Consolidating errors]{thm}{consolidation}
	\label{thm:consolidation}
	Let $\hat{\Z} = \mA \ez + \diag(\vb)\eb + \va$ be an improper \domain with $\mA \in \R^{p \times k}$. Further, let $\tilde{\mA} \in \R^{p \times p}$ be invertible. Then the proper \domain \mbox{$\hat{\Z}' = \mA'\ve'_1 + \diag(\vb)\eb + \va$} with
	\begin{equation}
	\mA' =  \diag(\vc) \tilde{\mA}
	\qquad
	\qquad
	\qquad
	\text{where }
	\vc = |\tilde{\mA}^{-1} \mA| \mbf{1}
	\label{eq:consolidation}
	\end{equation}
	is a sound over-approximation, i.e., $\hat{\Z}' \sqsupseteq \hat{\Z}$ of the improper one, where $\mbf{1}$ denotes the $k$-dimensional one vector and $|\cdot|$ the elementwise absolute. We call $\vc$ the consolidation coefficients.
\end{restatable}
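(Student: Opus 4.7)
My plan is to establish $\hat{\Z}' \sqsupseteq \hat{\Z}$ by directly verifying the set inclusion $\gamma(\hat{\Z}) \subseteq \gamma(\hat{\Z}')$. Since the Box component $\diag(\vb)\eb$ and the center $\va$ appear identically in both \domain, the argument reduces to showing that for every $\ez \in [-1,1]^k$ there exists a witness $\ve'_1 \in [-1,1]^p$ with $\mA \ez = \mA' \ve'_1$; the Box errors $\eb$ then carry over unchanged and the proof of soundness follows.

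The driving idea is a change of basis through the invertible $\tilde{\mA}$. I would introduce $\mB := \tilde{\mA}^{-1} \mA \in \R^{p \times k}$, factor $\mA = \tilde{\mA}\mB$, set $\vy := \mB \ez \in \R^p$, and then observe that $\mA \ez = \tilde{\mA} \vy$. A componentwise triangle inequality yields
\[
|\vy_i| \;\le\; \sum_{j=1}^{k} |\mB_{ij}|\,|\ez_j| \;\le\; \sum_{j=1}^{k} |\mB_{ij}| \;=\; (|\mB|\mbf{1})_i \;=\; \vc_i,
\]
so $\vy$ lies in the axis-aligned box $\diag(\vc)[-1,1]^p$. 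One can therefore write $\vy = \diag(\vc)\,\ve'_1$ for some $\ve'_1 \in [-1,1]^p$, taking $\ve'_{1,i} = \vy_i/\vc_i$ when $\vc_i > 0$ and any value in $[-1,1]$ otherwise (since then $\vy_i = 0$ automatically). Substituting back gives $\mA \ez = \tilde{\mA}\,\diag(\vc)\,\ve'_1 = \mA' \ve'_1$, which matches the consolidated generator produced by the theorem.

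The step I expect to require the most care is handling the degenerate coordinates with $\vc_i = 0$, and in particular confirming that this case does not destroy the \emph{properness} of $\hat{\Z}'$: invertibility of $\mA'$ is equivalent to $\vc_i > 0$ for every $i$ (since $\tilde{\mA}$ is already invertible by hypothesis), which is equivalent to $\mB$ having no zero rows. In the intended consolidation setting this is automatic because $\tilde{\mA}$ is chosen so that $\mB$ captures the full row range of $\mA$; otherwise the bound above still delivers the sound over-approximation guaranteed by the theorem statement, and properness can be restored by a negligible perturbation of $\tilde{\mA}$. Aside from this bookkeeping the proof reduces to a single triangle inequality combined with the linear change of variables $\ez \mapsto \mB \ez$, so no nontrivial obstacles are expected.
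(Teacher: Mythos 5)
Your proposal is correct and follows essentially the same route as the paper's proof: a change of basis via $\tilde{\mA}^{-1}$ followed by a box over-approximation in the new coordinates, yielding exactly the coefficients $\vc = |\tilde{\mA}^{-1}\mA|\mathbf{1}$ (the paper phrases this by decomposing per error column and summing the resulting boxes, whereas you apply the triangle inequality to $\tilde{\mA}^{-1}\mA\ez$ in one step, which is the same computation). Your explicit treatment of the degenerate case $\vc_i = 0$ and the remark that properness of $\hat{\Z}'$ requires $\mA'$ to have no zero rows is a reasonable piece of bookkeeping that the paper's proof leaves implicit (it is handled algorithmically by the choice of basis), and it does not affect the soundness claim $\hat{\Z}' \sqsupseteq \hat{\Z}$.
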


\begin{wrapfigure}[10]{r}{0.42 \textwidth}
	\centering
	\vspace{-8mm}
	\scalebox{0.77}{\hspace{-3mm}\begin{tikzpicture}
	\tikzset{>=latex}
	\clip (-2,-2) rectangle (3.25, 3.05);
	\node () [rectangle,
	minimum width=9.2cm, minimum height=5cm, align=center, scale=0.9, rounded corners=2pt,
	anchor=center] at (1, 1) {
		 \begin{tikzpicture}
			
			\coordinate (zono_head) at ({1.0000},{1.0000});
			\coordinate (zono_p_0) at ({1.0000},{2.0000});
			\coordinate (zono_p_1) at ({-1.0000},{-0.4000});
			\coordinate (zono_p_2) at ({1.0000},{0.0000});
			\coordinate (zono_p_3) at ({3.0000},{2.4000});

			\coordinate (zono_3_head) at ({1.0000},{1.0000});
			\coordinate (zono_3_p_0) at ({-1.3624},{0.0359});
			\coordinate (zono_3_p_1) at ({-0.3792},{-1.1467});
			\coordinate (zono_3_p_2) at ({3.3624},{1.9641});
			\coordinate (zono_3_p_3) at ({2.3792},{3.1467});

			\fill [ fill=black!30, opacity=0.4, rounded corners=0mm] (zono_3_p_0) -- (zono_3_p_1) -- (zono_3_p_2) -- (zono_3_p_3)  -- cycle;
			\fill [ fill=my-full-green!50, opacity=0.6, rounded corners=0mm] (zono_p_0) -- (zono_p_1) -- (zono_p_2) -- (zono_p_3)  -- cycle;
			
		\coordinate (c_1) at ($(zono_p_0)+(zono_p_1)-(zono_3_p_0)$);
		\fill [ fill=my-full-blue!55, opacity=0.8, rounded corners=0mm] (zono_p_1) -- (zono_3_p_0) -- (zono_p_0) -- (c_1) -- cycle;
		\coordinate (c_2) at ($(zono_p_0)+(zono_p_3)-(zono_3_p_3)$);
		\fill [ fill=my-full-red!55, opacity=0.8, rounded corners=0mm] ($(zono_p_0)+(zono_p_1)-(zono_3_p_0)$) -- ($(zono_3_p_3)+(zono_p_1)-(zono_3_p_0)$) -- ($(zono_3_p_3)+(zono_p_1)-(zono_3_p_0)+(zono_p_3)-(zono_3_p_3)$) -- ($(c_2) +(zono_p_1)-(zono_3_p_0)$)-- cycle;

		\draw[->, draw=black!80, line width=0.8] (zono_3_p_0) -- (zono_3_p_3);
		\draw[->, draw=black!80, line width=0.8] (zono_3_p_3) -- (zono_3_p_2);

		\draw[->, draw=my-full-blue!80, line width=0.8] (zono_p_1) -- (zono_p_0);
		\draw[->, draw=my-full-blue!80, style=dashed, line width=0.8] (c_1) -- (zono_p_0);
		\draw[->, draw=my-full-blue!80, style=dashed, line width=0.8] (zono_p_1) -- (c_1);

		\draw[->, draw=my-full-red!80, line width=0.8] (c_1) -- (zono_3_p_2);
		\draw[->, draw=my-full-red!80, style=dashed, line width=0.8] (c_1) -- ($(c_2) +(zono_p_1)-(zono_3_p_0)$) ;
		\draw[->, draw=my-full-red!80, style=dashed, line width=0.8] ($(c_2) +(zono_p_1)-(zono_3_p_0)$) -- (zono_3_p_2);

		\node ()[minimum size=4pt, inner sep=0pt, anchor=center] at (zono_head) {$+$};

		\node[rotate=45, inner sep=0pt, anchor=center] at ($(zono_3_p_0)!0.5!(zono_3_p_3) + (-0.35,0.25)$)  {$\mA'_{\cdot,1}$};
		\node[rotate=-45, inner sep=0pt, anchor=center] at ($(zono_3_p_2)!0.5!(zono_3_p_3) + (0.35,-0.0)$)  {$\mA'_{\cdot,2}$};
		\node[text=my-full-blue!85!black,rotate=59.19301888, inner sep=0pt, anchor=center] at (-0.8, 0.2)  {$\mA_{\cdot,1}$};
		\node[text=my-full-red!90!black,rotate=18.94569599, inner sep=0pt, anchor=center] at (2.1, 1.4)  {$\mA_{\cdot,2}$};
		\node[text=my-full-green!95!black,rotate=0, inner sep=0pt, anchor=center] at (1.2, -.2)  {$\hat{\Z}$};
		\node[text=black!60,rotate=0, inner sep=0pt, anchor=center] at (0.3, -1.0)  {$\hat{\Z}'$};

		\end{tikzpicture}
			
	};

\end{tikzpicture}}
	\vspace{-9mm}
	\caption{Illustration of error consolidation via \cref{thm:consolidation}.  %
		All vectors are scaled by factor 2.}
	\label{fig:error_cons}
\end{wrapfigure}
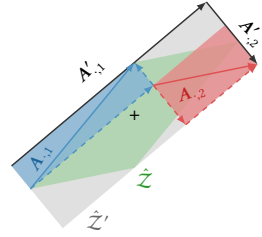
The intuition behind this approximation is shown in \cref{fig:error_cons}, where we over-approximate the green $\hat{\Z}$ with the gray $\hat{\Z}'$ (choosing a suboptimal basis for illustration purposes).
All error vectors (columns) in the old error matrix $\mA$ (shown as solid red and blue arrows) are first decomposed into a linear combination of error vectors in the new basis $\tilde{\mA}$ (dashed red and blue arrows). Then, the absolute values (to correct for their orientation) of these contributions $|\tilde{\mA}^{-1} \mA|$ are summed up over all error vectors to obtain the consolidation coefficients $\vc$. Finally, we multiply these consolidation coefficients with the error directions of the new basis $\tilde{\mA}$ to obtain the new error matrix $\mA'_{\cdot, i} = \evc_{i} \tilde{\mA}_{\cdot, i}$ (solid black arrows). We note that this has complexity $\bc{O}(p^2(p+k))$.

\paragraph{Choosing the New Error Basis}
To minimize the imprecision incurred when consolidating error terms, a suitable basis $\tilde{\mA}$ has to be chosen.
We use the PCA-basis of the original error matrix $\mA$, as it has been empirically shown to yield the tightest approximation while being computationally feasible in high dimensions \citep{KopetzkiSA17O}. %

\paragraph{Inclusion Checks for \domain}
Enabling efficient inclusion checks in high dimensions is one of the main motivations for the \domain domain. Here, we first provide a high-level outline of our approach, illustrated in \cref{fig:m_zono_cont}, before giving more detail on the individual steps. 

We aim to determine whether the proper \domain $\hat{\Z} = \mA\ez + \diag(\vb)\eb + \va$ (green in \cref{fig:m_zono_cont:true}) contains the improper $\hat{\Z}' =  \mA'\ez + \diag(\vb')\eb + \va'$ (red), i.e., $\hat{\Z} \sqsupseteq \hat{\Z}'$.
At a high level, we first consolidate the errors of the improper $\hat{\Z}'$ (blue) before decomposing both \domain into their Zonotope and Box components (shown in \cref{fig:m_zono_cont:decomposition}) and checking whether the outer components contain their respective inner counterparts (shown in \cref{fig:m_zono_cont:inclusion}). %

To determine containment of the Zonotope component, we consolidate the error matrix $\mA'$ with basis $\mA$ as discussed above (shown in blue in \cref{fig:m_zono_cont:true}). This leads to perfectly aligned error vectors, enabling us to directly compare their lengths. If all error terms of the consolidated $\tilde{\mA}'$ are shorter than their counterparts in $\mA$, the Zonotope components are contained (shown overlayed in \cref{fig:m_zono_cont:inclusion}).
More efficiently, we only compute the consolidation coefficients and check $ \vc = |\mA^{-1} \mA'| \mbf{1} < \mbf{1}$.

To show containment of the Box components, we can simply check that $\vb' \leq \vb$. %
However, we observe that negative values in the difference vector $\vb' - \vb$ denote directions in which $\vb$ is larger than $\vb'$ and can hence compensate for differences in the center terms $\va' - \va$. Positive values in $\vb' - \vb$ denote directions in which $\vb$ is too small to cover $\vb'$.
Combining these two, we obtain a residual Box component $\vd = \max(0, |\va' - \va| + \vb' - \vb)$ that needs to additionally be covered by the Zonotope component.
To this end, we can cast $\vd$ as additional error terms of $\mA'$ and update the Zonotope inclusion check to $|\mA^{-1} \mA'| \mbf{1} + |\mA^{-1} \diag(\vd)|\mbf{1} < \mbf{1}$.
This compensation is not necessary in \cref{fig:m_zono_cont}.
We formalize this containment check as follows, deferring the formal proof to \cref{app:containment}.

\begin{figure*}
	\centering
	\vspace{-1mm}
	\input{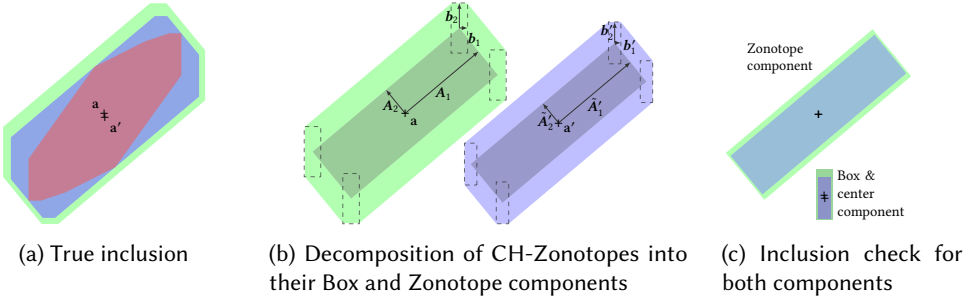}
	\vspace{-3mm}
	\caption{Illustration of checking the containment of an improper \domain (red) in a proper \domain (green), by consolidating errors (blue). \cref{fig:m_zono_cont:decomposition} shows the proper \domain{}s decomposed into their Box and Zonotope components. In \cref{fig:m_zono_cont:inclusion} we illustrate the containment check of these components individually.}
	\vspace{-3mm}
	\label{fig:m_zono_cont}
\end{figure*}

\begin{restatable}[\domain Containment]{thm}{containment}
	\label{the:containment}
	Let $\hat{\Z} = \mA\ez + \diag(\vb)\eb + \va$ be a proper \domain and $\hat{\Z}' =  A'\ez' + \diag(\vb')\eb' + \va'$ an improper one. $\hat{\Z'}$ is contained in $\hat{\Z}$ if
	\begin{align}
	  \left|\mA^{-1} \mA'\right| \mbf{1} + \left|\mA^{-1} \diag\left(\max\left(\mbf{0}, \left| \va'-\va \right| + \vb' - \vb \right) \right) \right| \mbf{1}  \leq \mbf{1} \label{eq:containment}
	\end{align}
	holds element-wise. Where $\mA^{-1}$ always exists as $\hat{\Z}$ is proper and therefore $\mA \in \R^{p \times p}$ invertible.
\end{restatable}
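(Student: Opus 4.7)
}
The plan is to show that every concrete point $\vx \in \gamma(\hat{\Z}')$ admits a representation as a point in $\gamma(\hat{\Z})$. Fix arbitrary error values $\ez' \in [-1,1]^{k'}$ and $\eb' \in [-1,1]^p$, and set $\vx = \mA'\ez' + \diag(\vb')\eb' + \va'$. I want to construct $\ez, \eb \in [-1,1]^p$ with $\vx = \mA\ez + \diag(\vb)\eb + \va$. Using invertibility of $\mA$ (which holds because $\hat{\Z}$ is proper), any choice of $\eb$ determines $\ez$ uniquely via
\begin{equation*}
  \ez \;=\; \mA^{-1}\mA'\ez' \;+\; \mA^{-1}\bigl[\,(\va'-\va) + \diag(\vb')\eb' - \diag(\vb)\eb\,\bigr],
\end{equation*}
so the task reduces to picking $\eb \in [-1,1]^p$ that makes $\ez \in [-1,1]^p$.

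The first key step is to choose $\eb$ coordinate-wise to absorb as much of the center/box offset as possible. For each $i$, let $s_i \coloneqq (\va'_i - \va_i) + \vb'_i\,\eeb'_i$, so $|s_i| \le |\va'_i-\va_i| + \vb'_i$. If $\vb_i > 0$, set $\eeb_i = \mathrm{clip}(s_i/\vb_i,\,-1,1)$; if $\vb_i=0$, set $\eeb_i$ arbitrarily. A short case analysis then shows that the residual $r_i \coloneqq s_i - \vb_i\,\eeb_i$ satisfies
\begin{equation*}
  |r_i| \;\le\; \max\!\bigl(0,\;|\va'_i - \va_i| + \vb'_i - \vb_i\bigr) \;=\; \vd_i.
\end{equation*}
Hence there exists $\vdelta \in [-1,1]^p$ such that the residual vector $\vr = \diag(\vd)\,\vdelta$, and consequently $\ez = \mA^{-1}\mA'\ez' + \mA^{-1}\diag(\vd)\,\vdelta$.

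The second step is a triangle-inequality bound on $\ez$. For each coordinate $i$,
\begin{equation*}
  |\eez_i| \;\le\; \sum_j \bigl|(\mA^{-1}\mA')_{ij}\bigr|\cdot|\eez'_j| \;+\; \sum_j \bigl|(\mA^{-1}\diag(\vd))_{ij}\bigr|\cdot|\delta_j| \;\le\; \bigl(|\mA^{-1}\mA'|\mbf{1}\bigr)_i + \bigl(|\mA^{-1}\diag(\vd)|\mbf{1}\bigr)_i,
\end{equation*}
using $|\eez'_j|,|\delta_j| \le 1$. The hypothesis \eqref{eq:containment} is precisely that the right-hand side is $\le 1$ for every $i$, so $\ez \in [-1,1]^p$ and hence $\vx \in \gamma(\hat{\Z})$. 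Since $\vx$ was arbitrary, $\gamma(\hat{\Z}') \subseteq \gamma(\hat{\Z})$, i.e., $\hat{\Z}' \sqsubseteq \hat{\Z}$.

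I expect the main subtlety to be the coordinate-wise construction of $\eb$ and in particular the edge case $\vb_i = 0$, together with verifying that the resulting residual is indeed bounded by $\vd_i$ in all sign configurations of $s_i$ relative to $\vb_i$. Once the representation $\vr = \diag(\vd)\vdelta$ is justified, the remaining triangle-inequality bound is routine and directly reproduces the left-hand side of \eqref{eq:containment}.
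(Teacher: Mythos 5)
Your proposal is correct and follows essentially the same route as the paper's proof: pick the box errors $\eb$ coordinate-wise to cancel as much of the combined center-and-box offset as possible (the paper folds $|\va'-\va|$ into an enlarged box $\vb''=\vb'+|\va'-\va|$ and chooses $\diag(\vb)\eb=\sign(\eb'')\min(\vb,\diag(\vb'')|\eb''|)$, which is exactly your clipping choice), bound the residual by $\max(\mbf{0},|\va'-\va|+\vb'-\vb)$, cast it as extra error directions, and close with the same triangle-inequality bound giving \eqref{eq:containment}. Your explicit treatment of the $\vb_i=0$ edge case and of the residual representation $\vr=\diag(\vd)\vdelta$ is a minor presentational refinement, not a different argument.
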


In contrast to exact containment checks for general Zonotope which are co-NP-complete and hence infeasible \citet{Kulmburg2021OnTC}, our \cref{the:containment} constitutes a sound but not complete check with complexity $\bc{O}(p^2(p+k))$. Another approximate method with polynomial time complexity was proposed by \citet{SadraddiniT19}, which they show to be close to loss-less in low dimensions ($p\leq10$). However, their method involves solving a linear program in $\bc{O}(k_\text{inner}k_\text{outer})$ variables with $\bc{O}(p k_\text{inner})$ constraints, where $k_\text{inner} \geq p$ and $k_\text{outer} \geq p$ are the number of error terms and $p$ is the dimensionality. Making generous assumptions on the number of error terms and the complexity of the LP-solver \citep{JiangLP20}, this leads to an overall complexity of $\tilde{\bc{O}}(p^6)$, which makes it practically intractable for our use-case, as we will show later (see \cref{sec:eval:ch_zono}).

\section{Application to Fixpoint-Based Neural Networks}
\label{sec:mondeq}

In this section, we first introduce (monotone Operator) Deep Equilibrium Models (monDEQs) for concrete points in \cref{sec:concrete-mondeq} before considering their abstraction in \cref{sec:tool}.

\subsection{Deep Equilibrium Models on Points}  \label{sec:concrete-mondeq}

\paragraph{Deep Equilibrium Models (DEQs)}
Implicit-Layer \citep{OPTNetAmosK17,ImplicitDLGhaoui2019} and Deep Equilibrium Models (DEQs) \citep{DEQBaiKK19} were recently introduced to enable more memory-efficient model parameterizations.
Unlike traditional deep neural networks, which propagate inputs through a finite number of different layers, DEQs conceptually apply the same layer repeatedly until converged to a fixpoint, corresponding to an infinite depth model with parameter-sharing. A DEQ $\vh$ obtains its final prediction $\vy$ by applying a linear layer to this fixpoint:
\begin{equation}
	\vy = \vh(\vx) \coloneqq \mV \vz^{*} + \vv, \qquad\qquad  \vz^* = \vf(\vx, \vz^*).
	\label{eqn:deq}
\end{equation}

\paragraph{Monotone Operator Deep Equilibrium Models (monDEQs)}
A major drawback of general DEQs is that neither the existence nor uniqueness of their fixpoints is guaranteed. %
To address this issue, \citet{MonDEQWinstonK20} introduced monDEQs as a particular form of DEQs guaranteed to have a unique fixpoint by parametrizing
\begin{equation}
	\vf(\vx, \vz) = \sigma(\mW\vz +\mU\vx+\vb)
	\label{eqn:mon_deq}
\end{equation}
with $\vx \in \R^{q}, \vz \in \R^{p}, \mU \in \R^{p \times q}$, $\mW=(1-m)\mI-\mP^T\mP + \mQ - \mQ^T$ where $\mP,\mQ \in R^{p \times p}$, and monotonicity parameter $m > 0$. These existence and uniqueness properties allow a certification of monDEQs that is independent of how a fixpoint was obtained, yielding far stronger guarantees than possible in the general DEQ setting. Throughout this paper, we will focus on the ReLU activation (i.e., $\sigma \coloneqq ReLU$) and discuss considerations for other activations in \cref{app:activation_functions}.

\paragraph{Fixpoint solvers}
As discussed in \cref{sec:abstract-fp}, iteratively applying $\vf$ often does not converge and iterative fixpoint solvers which converge to the unique fixpoint under mild conditions are employed instead. 
For a monDEQ $\vh$ with iteration function $\vf(\vx,\vz) = ReLU(\mW\vz + \mU\vx +\vb)$, we let $\vg$ denote an iteration of a fixpoint solver using operator splitting:
\begin{itemize}[leftmargin=1.5em]
	\item  \textbf{Forward-Backward Splitting (\fwdbwd)} where $\vs_{n+1}$ is computed as
		\begin{equation}
		\vs_{n+1} \coloneqq \gfwdbwd(\vx,\vs_{n}) = ReLU((1-\alpha)\vs_{n} + \alpha(\mW\vs_{n} + \mU\vx + \vb)), \label{eqn:fwd_bwd}
		\end{equation}
		converging to  $\vs^* = \vz^*$ of $\vf$, for any $0 < \alpha < \frac{2m}{\| \mI - \mW \|_{2}^{2}}$ \citep{MonDEQWinstonK20}.
	\item  \textbf{Peaceman-Rachford Splitting (\pr)} where $\vs_{n+1} \coloneqq \gpr(\vx,\vs_{n})$ is computed as
		\begin{equation}
		\begin{aligned}[c]
			[\vz_n; \vu_n] &\gets \vs_n \\
			\vu_{n+1/2}  &= 2 \vz_{n} - \vu_{n} \\
			\vz_{n+1/2}  &= (\mI + \alpha(\mI-\mW))^{-1}(\vu_{n+1/2} + \alpha (\mU\vx +\vb))
		\end{aligned}
		\qquad
		\begin{aligned}[c]
			\vu_{n+1} &= 2\vz_{n+1/2} - \vu_{n+1/2}\\
			\vz_{n+1} &= ReLU(\vu_{n+1})\\
			\vs_{n+1} &\gets [\vz_{n+1}; \vu_{n+1}].
			\label{eqn:pr}
		\end{aligned}
		\end{equation}
		 \pr splitting converges to $\vz^*$ for any $\alpha > 0$ \citep{ryu2016primer}.%
\end{itemize}

While there exist many similar strategies, we restrict our discussion to the above examples. For both, we initialize $\vs_{0} = \0$ and write $\g(\vx,\vs_{n})$ for one iteration and $[\vz; \vu] \gets \vs$ for the unpacking of the latent state for both \pr and \fwdbwd. For the latter, we simply assume $\vu_{n}$ to be zero-dimensional.

Both Forward-Backward (\fwdbwd, \cref{eqn:fwd_bwd}) and Peaceman-Rachford splitting (\pr, \cref{eqn:pr}) are guaranteed to converge to the fixpoint of $\vf(\vx,\vz)$, as defined in \cref{eqn:deq}. %
In practice, they are iterated until $\|\vz_{n}-\vz_{n-1}\|$ becomes smaller than a predetermined stopping criterion, yielding $\vz_{n} \approx \vz^*(\vx)$.

\paragraph{Example (cont.)}
Our example from \cref{eq:example} is a monDEQ using \fwdbwd splitting and parametrized with:
\begin{align*}	
	&m = 4,\; \alpha = \tfrac{1}{10},\;
	P = \begin{mtx}1 & 0\\ 0 & 1\end{mtx},\;
	Q = \begin{mtx}1 & 0\\ 1 & 0\end{mtx}\;
    W = \begin{mtx}-4 & -1\\\phantom{+}1 & -4\end{mtx},\; 
    U = \begin{mtx}\phantom{+}1 & 1\\-1 & 1\end{mtx},\;
    b = \begin{mtx}0\\0\end{mtx}.
\end{align*}
For these parameters, the iterative functions are
\begin{align*}	
	\vf(\vx, \vz_n) &= ReLU \left(
	\begin{mtx}-4 & -1\\\phantom{+}1 & -4\end{mtx} \vz_n + \begin{mtx}\phantom{+}1 & 1\\-1 & 1\end{mtx}\vx + \begin{mtx}0\\0\end{mtx}
	\right)\\
	\vg_{\alpha}(\vx, \vs_n) &= ReLU \left( \tfrac{1}{10} (\mW + 9\mI) \vs_n + \tfrac{1}{10}  \begin{mtx}\phantom{+}1 & 1\\-1 & 1\end{mtx}\vx + \begin{mtx}0\\0\end{mtx} \right) = ReLU \left(  \tfrac{1}{10} \begin{mtx}\phantom{+}5 & 1\\-1 & 5\end{mtx}   \vs_n + \tfrac{1}{10}  \begin{mtx}\phantom{+}1 & 1\\-1 & 1\end{mtx}\vx \right).
\end{align*}
Observe that $0 < \alpha = \tfrac{1}{10} < \tfrac{2m}{\| \mI - \mW \|_{2}^{2}} \approx 0.1538$. %
Interestingly, directly iterating $\vf(\vx, \vz_n)$ diverges in our example, highlighting the importance of a suitable iterative solver.

\subsection{Abstract Interpretation of monDEQs} \label{sec:tool}

Equipped with the building blocks discussed so far, we now introduce our abstract interpreter, \tool (\toollong).

At a high level, given an input $\vx$, a precondition \pre, and a postcondition \post over a monDEQ $\vh$, \tool iteratively applies an abstract solver iteration \gS to \domain abstractions of the input and solver state until our inclusion check (\cref{the:containment}) can show that the resulting state $\hat{\S}_{n+1}$ is contained in the previous one $\hat{\S}_n$. By the contraction-based termination condition of \cref{the:contraction}, we have thus found an over-approximation $\hat{\Z}^*$ of the true fixpoint set $\Z^{*}$.
Propagating the corresponding \domain $\hat{\Z}^*$ through the last layer to obtain the \domain abstraction of the output $\hat{\Y} = \mV \hat{\Z}^{*} + \vv$ (by slight abuse of notation), we check the postcondition $\post(\hat{\Y})$.
Below, we first discuss this process, outlined in \cref{alg:verif}, informally, before formally showing its correctness.

While \tool is applicable to general pre- and postconditions \pre and \post, the presented version assumes that $\vx \in \pre(\vx)$ and that \post is a statement over the outputs of the monDEQ $\vh$.
In particular, we focus on $\ell_\infty$ robustness certification, where
$\pre(\vx) \coloneqq \{ \vx' \mid \|\vx - \vx'\|_\infty \leq \eps \}$ and
$\post \coloneqq \vh_t(\vx') - \vh_i(\vx') > 0, \forall i \neq t$. That is, the monDEQ $\vh$ classifies all $\vx'$ in an $\ell_\infty$-ball around $\vx$ as class $t$.

\vspace{-1mm}
\paragraph{What to Certify}
As previously discussed and in agreement with prior work \citep{chen2021semialgebraic,ELMonDEQPabbarajuWK21}, \tool certifies properties for the true mathematical fixpoints $\vz^{*}$, rather than any particular solver behavior. %
This yields stronger certificates as
solvers are guaranteed to converge to these unique fixpoints with arbitrary precision.

\begin{wrapfigure}[18]{r}{0.50\textwidth}
	\vspace{-5mm}
	\setlength{\textfloatsep}{2pt}
	\scalebox{0.82}{
		\begin{minipage}{1.2\linewidth}
		\input{algorithm_tool}
		\end{minipage}
	}
	\setlength{\textfloatsep}{10pt plus 1.0pt minus 2.0pt}
\end{wrapfigure}
\paragraph{\tool}
\tool can be divided into two stages: First, it leverages the contraction-based termination condition of \cref{the:contraction} to compute a first abstraction of the fixpoint set for a given precondition \pre. Second, it tightens this abstraction by leveraging fixpoint set preservation (\cref{thm:preserving}) to show the postcondition \post. 
This is detailed in \cref{alg:verif}. 
We start by initializing, by slight abuse of notation, $\hat{\Z}_{0} = \hat{\U}_{0} = \{\vz^{*}(\vx)\}$ to a concrete fixpoint (line~\ref{alg:verif:init}). %
To compute an abstract fixpoint set via \cref{the:contraction}, we perform iterations of the $\lnot \con$ branch (lines~\ref{alg:verif:consolidate}-\ref{alg:verif:check}). In each iteration, we first consolidate the errors of the current abstraction (line~\ref{alg:verif:consolidate}) via \cref{thm:consolidation}, then perform one step of \gSss{}{1} (line~\ref{alg:verif:g1}), and finally check inclusion $\hat{\S}_{n+1} \sqsubseteq \hat{\S}_{n}$ via \cref{the:containment} (line~\ref{alg:verif:check}). 
After detecting containment, we aim to tighten the thus obtained fixpoint abstraction in order to show \post. To this end, we perform iterations of the $\con$ branch (lines \ref{alg:verif:g2}-\ref{alg:verif:post_r}). Here, we apply \gSss{}{2} (line~\ref{alg:verif:g2}) and the classification layer (line~\ref{alg:verif:logits}) before checking \post on the resulting \domain (line~\ref{alg:verif:post}).

\paragraph{Iterator Requirements}
While \cref{alg:verif} does not require a specific operator splitting method $\g$, \gSss{}{1} has to be chosen such that \cref{the:contraction} on the contraction based termination condition is applicable and \gSss{}{2} has to be fixpoint-preserving according to \cref{thm:preserving}. 

More concretely, we require the abstract transformer \gSss{}{1} (line~\ref{alg:verif:g1}) to be a sound abstraction of an operator splitting method $\vg_{\alpha_1}$ which, in the concrete, is guaranteed to converge to a unique fixpoint in finitely many steps. %
And we require \gSss{}{2} (line~\ref{alg:verif:g2}) to be fixpoint set preserving (\cref{thm:preserving}), i.e., to map fixpoints upon themselves. By \cref{the:iter_PR} the latter is the case for all $\vg_{\alpha_2}^\#$ abstracting a fixed, locally-Lipschitz operator splitting method $\vg_{\alpha_2}$ with convergence guarantees, including \pr and \fwdbwd. 
However, the following points have to be considered:
\pr iterations use auxiliary variables $\vu$ which depend on $\alpha$. Consequently, fixpoint set preservation is only guaranteed for a fixed $\alpha$, preventing us from optimizing $\alpha$ to obtain tighter over-approximations. 
This limitation does not apply to \fwdbwd splitting, as it does not use any auxiliary variables. However, the convergence requirement of \cref{the:iter_PR} still limits $\alpha$ to $0 < \alpha < 2m/\| \mI - \mW \|_{2}^{2}$. We now lift this restriction (again deferring a formal proof to \cref{sec:proofs}), allowing us to apply further iterations of $\gfwdbwd$ with arbitrary $\alpha \in [0,1]$ to tighten $\hat{\S_{n}}$ after showing containment with any method:
\begin{restatable}[Fixpoint set preservation for \fwdbwd splitting]{theorem}{iterfwbw}
	\label{the:iter_FWBW}
	Every sound abstract transformer $\gfwdbwdS$ of $\gfwdbwd$ is fixpoint set preserving for $0 \leq \alpha \leq 1$.
\end{restatable}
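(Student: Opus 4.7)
The plan is to reduce the theorem to a purely concrete statement: every fixpoint $\vz^{*}$ of the iteration map $\vf(\vx, \vz) = ReLU(\mW\vz + \mU\vx + \vb)$ is also a fixpoint of $\gfwdbwd(\vx, \cdot)$ for every $\alpha \in [0, 1]$. Once this pointwise invariance is in place, the abstract claim follows almost immediately from soundness of $\gfwdbwdS$, mirroring the structure of \cref{the:iter_PR}, but freed from the contractivity-based bound on $\alpha$.

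\textbf{Concrete invariance.} Fix $\vx$ with $\vz^{*} = \vf(\vx, \vz^{*})$ and set $\vy \coloneqq \mW\vz^{*} + \mU\vx + \vb$, so $\vz^{*} = ReLU(\vy) = \max(\vy, \mathbf{0})$. Unfolding the definition gives $\gfwdbwd(\vx, \vz^{*}) = ReLU((1-\alpha)\vz^{*} + \alpha \vy)$. I would then argue coordinate-wise on the sign of $(\vy)_i$. If $(\vy)_i \geq 0$, then $(\vz^{*})_i = (\vy)_i$, so the convex combination equals $(\vz^{*})_i \geq 0$ and $ReLU$ returns it unchanged. If $(\vy)_i < 0$, then $(\vz^{*})_i = 0$ and the argument reduces to $\alpha (\vy)_i \leq 0$ (using $\alpha \geq 0$), so $ReLU$ returns $0 = (\vz^{*})_i$. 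Either way, $\gfwdbwd(\vx, \vz^{*}) = \vz^{*}$.

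\textbf{Lifting via soundness.} Now assume $\S^{*} \subseteq \gamma(\hat{\S}_n)$. Since \fwdbwd splitting carries no auxiliary variable, $\S^{*} = \Z^{*}$, so each $\vz^{*} \in \S^{*}$ lies in $\gamma(\hat{\S}_n)$ and the corresponding $\vx \in \X$ lies in $\gamma(\hat{\X})$. Soundness of $\gfwdbwdS$ then yields $\gfwdbwd(\vx, \vz^{*}) \in \gamma(\gfwdbwdS(\hat{\X}, \hat{\S}_n))$, and by the concrete invariance above the left-hand side equals $\vz^{*}$. Hence $\S^{*} \subseteq \gamma(\gfwdbwdS(\hat{\X}, \hat{\S}_n))$, which is exactly fixpoint set preservation in the sense of \cref{thm:preserving}.

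\textbf{Main obstacle.} The only substantive step is the coordinate-wise sign analysis, and it makes transparent why both $\alpha \geq 0$ and $1 - \alpha \geq 0$ are needed: relaxing either bound would allow the convex combination to cross the $ReLU$ kink and break invariance in the inactive or active regime, respectively. Crucially, no use is made of the stronger contractivity bound $\alpha < 2m/\|\mI-\mW\|_{2}^{2}$ governing convergence of \fwdbwd; fixpoint set preservation is strictly weaker and holds on the entire interval $[0,1]$, which is precisely what makes the result useful for tightening abstract post-fixpoints obtained via \cref{the:contraction} with an aggressively chosen step size.
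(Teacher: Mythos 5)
Your proof is correct and takes essentially the same route as the paper's: an elementwise case split on the sign of the pre-activation $\vy$ showing that $\gfwdbwd$ maps every concrete fixpoint onto itself, followed by lifting this pointwise invariance through the soundness of $\gfwdbwdS$ exactly as in the paper's concluding sentence. (One small aside: contrary to your closing remark, the bound $\alpha \leq 1$ is not actually exercised in the active case, since there the combination is $(1-\alpha)(\vz^*)_i + \alpha(\vz^*)_i = (\vz^*)_i$ for any $\alpha$; the theorem only asserts the property on $[0,1]$, so this does not affect correctness.)
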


Intuitively, we show that $\gfwdbwd$ maps all concrete fixpoints onto themselves and hence that any sound abstract transformer $\gfwdbwdS$ will map an over-approximation of the fixpoint set to another over-approximation of the fixpoint set. Please see \cref{sec:proofs} for a formal proof.
In contrast to \cref{the:contraction}, \cref{the:iter_FWBW} does not assume that the same iterative solver (including hyperparameters) is applied at each step. Instead, it makes a statement about one application of Forward-Backward splitting using arbitrary parameters.
This result allows us to apply further iterations of $\gfwdbwd$ with arbitrary $\alpha \in [0,1]$ to tighten $\hat{\S_{n}}$ after showing containment with any method.

For \fwdbwd, we show an even stronger property in \cref{the:iter_FWBW}, guaranteeing fixpoint preservation even under changing $\alpha$.

\paragraph{Choice of Iterator}
Given the constraints discussed above, we choose different algorithms for \gSss{}{1} and \gSss{}{2}, optimizing for containment and tight final abstractions, respectively.
For \gSss{}{1} we typically use \pr, as it empirically is significantly less sensitive to hyperparameter choices (see \cref{fig:alpha_ablation}) and contracts to the actual fixpoint set more quickly \citep{MonDEQWinstonK20}.
For \gSss{}{2}, both \pr \gprS and \fwdbwd \gfwdbwdS are used depending on the underlying problem.
In some settings the stronger contractive properties of \pr yield tighter abstractions while in others choosing an optimal dampening parameter $\alpha$ via line search for \fwdbwd works better.
We further discuss this in \cref{sec:eval:ablation}.

\paragraph{Expansion}
The key to showing containment is not the absolute tightness of the abstract iteration state $\hat{\S}_n$, but rather how much it tightens under application of \gSss{}{1}. As further tightening an already very tight approximation can be challenging, we -- perhaps counter-intuitively -- expand our over-approximation as part of the error consolidation by setting
\begin{equation}
c = (1 + w_{mul}) |\tilde{\mA}^{-1} \mA| \mbf{1} + w_{add} \mbf{1}
\end{equation}
in \cref{eq:consolidation} until containment is found. Here, $w_{mul}, w_{add} \geq 0$ are the multiplicative and additive expansion parameters, respectively. The resulting looseness between the current approximation and the exact fixpoint set can make tightening the approximation and hence showing containment easier. %
As this expansion leads to a strictly larger over-approximation, it is a sound operation.

While this is similar to widening \citep{CousotC92b} at first glance, it aims to break a non-monotonic iteration of incomparable abstractions instead of ensuring termination of an otherwise infinite iteration of monotonically increasing abstractions.

\paragraph{Correctness}
We now show the correctness of \tool w.r.t. to the concrete and abstract semantics defined in \cref{sec:abstract-fp}, instantiated for monDEQs.
\begin{restatable}[Soundness of \tool]{theorem}{soundness}
	\label{thm:soundness}
	For sound \gSss{}{1} fulfilling \cref{the:contraction} and sound
	\gSss{}{2} fulfilling fixpoint-preservation (\cref{thm:preserving}),
	\cref{alg:verif} is sound. In particular:
\begin{enumerate}
	\item Once $\con$ ($\hat{\S}_{n+1} \sqsubseteq \hat{\S}_{n}$), $\hat{\S}_{n+1}$ contains the true fixpoint set.
	\item \cref{alg:verif} returns \texttt{true} only if $\pre(\vx) \models \post(\vh(\vx))$.
\end{enumerate}
\end{restatable}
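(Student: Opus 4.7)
The proof decomposes naturally along the two-stage structure of \cref{alg:verif}: the contraction stage (the $\lnot\con$ branch on \cref{alg:verif:consolidate,alg:verif:check}) delivers part~(1) via \cref{the:contraction}, and the refinement stage (the $\con$ branch on \cref{alg:verif:g2,alg:verif:post_r}) delivers part~(2) by chaining part~(1) with the fixpoint-set preservation of $\gSss{}{2}$ and the soundness of the final output-layer transformer.

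For part~(1), the plan is to apply \cref{the:contraction} to the effective per-iteration transformer $T(\hat{\X}, \hat{\S}) \coloneqq \gSss{}{1}(\hat{\X}, \consolidate(\hat{\S}))$. Two observations make the application immediate. First, \cref{thm:consolidation} ensures that $\consolidate$ is a sound over-approximation, and $\gSss{}{1}$ is sound by hypothesis, so $T$ is itself a sound abstract transformer of $\vg_{\alpha_1}$. Second, by induction on $n$, the state $\hat{\S}_n$ over-approximates $n$ iterations of $\vg_{\alpha_1}$ starting from the common concrete initial state $\vz_0 = \vu_0 = \vz^*(\vx)$ on every input $\vx' \in \gamma(\hat{\X})$; the base case holds because $\hat{\S}_0$ trivially contains this starting pair, and the inductive step is just soundness of $T$. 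The hypotheses of \cref{the:contraction} are then in force at the moment the inclusion check on \cref{alg:verif:check} succeeds, which directly yields $\Z^{*} \subseteq \gamma(\hat{\Z}_{n+1})$.

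For part~(2), I would prove by a second induction that, once $\con$ is set, every subsequent $\hat{\S}_m$ still over-approximates the true fixpoint set. The base case is exactly part~(1). The inductive step invokes the fixpoint-set preservation property (\cref{thm:preserving}) of $\gSss{}{2}$, which is guaranteed concretely by \cref{the:iter_PR} for $\gprS$ and strengthened for $\gfwdbwdS$ by \cref{the:iter_FWBW} to any $\alpha \in [0,1]$. Hence at \cref{alg:verif:logits} we still have $\hat{\Z}_{n+1} \supseteq \Z^{*}$, and soundness of the linear transformer on that line yields $\gamma(\hat{\Y}) \supseteq \{\vh(\vx') \mid \vx' \in \gamma(\hat{\X})\} \supseteq \{\vh(\vx') \mid \vx' \in \pre(\vx)\}$. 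Interpreting $\post(\hat{\Y})$ on \cref{alg:verif:post} as ``$\post$ holds for every element of $\gamma(\hat{\Y})$'', a \texttt{true} return from \cref{alg:verif:post_r} entails $\post(\vh(\vx'))$ for every $\vx' \in \pre(\vx)$, i.e.\ $\pre(\vx) \models \post(\vh(\vx))$.

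The main obstacle is more conceptual than technical and lies in the non-standard initialization $\hat{\S}_0 = \{[\vz^*(\vx); \vz^*(\vx)]\}$: one must justify that this is a valid starting abstract state from the vantage point of \cref{the:contraction}, which only requires \emph{some} pair $(\vz_0, \vu_0)$ shared across all inputs in $\gamma(\hat{\X})$. The trajectory on the specific input $\vx$ starting from this point is constant, because $\vz^*(\vx)$ is already a fixpoint of $\vg$ at $\vx$, so $\hat{\S}_0$ trivially over-approximates that single-point trajectory; by soundness of $T$, every subsequent $\hat{\S}_n$ then over-approximates the $n$-step trajectory from $(\vz_0,\vu_0)$ on every neighbouring $\vx' \in \gamma(\hat{\X})$, which is precisely what \cref{the:contraction} demands in order to conclude that containment of consecutive abstractions implies containment of the true fixpoint set.
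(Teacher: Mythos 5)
Your proposal is correct and takes essentially the same route as the paper's proof: part~(1) by noting that composing $\consolidate$ (sound by \cref{thm:consolidation}) with the sound \gSss{}{1} yields a sound transformer to which \cref{the:contraction} applies once the (sound) inclusion check of \cref{the:containment} fires, and part~(2) by fixpoint-set preservation of \gSss{}{2} (\cref{the:iter_PR,the:iter_FWBW}) together with soundness of the final affine transformer. Your explicit justification of the initialization $\hat{\S}_0 = \{[\vz^*(\vx);\vz^*(\vx)]\}$ is a welcome detail that the paper leaves implicit.
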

\begin{proof}
	(1) follows directly from the soundness of the containment check (\cref{the:containment}) and the contraction-based termination criterion of \cref{the:contraction} and (2) from \cref{the:iter_FWBW,the:iter_PR} and the use of sound abstract transformers.
\end{proof}

\paragraph{Completeness} While \tool is sound, it is not complete. In particular, $\consolidate$, $\expand$, \gSss{}{1}, and \gSss{}{2} are sources of imprecision. The inclusion check is also sound, but not complete.

\paragraph{Generality}
\tool can be instantiated with any abstract domain supporting the required abstract transformers. Only the consolidation in line~\ref{alg:verif:consolidate} is specific to \domain and it can be removed without affecting \tool's soundness. However, while all domains discussed in \cref{sec:overview} possess the required transformers, only \domain combines sufficient precision with tractable containment checks and efficient propagation (see \cref{sec:eval:ch_zono}).

\section{Experimental Evaluation} \label{sec:eval}
In this section, we present an extensive evaluation of \tool, the implementation of our abstraction framework and the \domain domain, on monDEQs using multiple architectures and datasets including \cifar \citep{krizhevsky2009learning}, \mnist \citep{lecun1998gradient}, and HCAS \citep{julian2019guaranteeing}. First, we evaluate \tool in the setting of local robustness certification against the challenging $\ell_\infty$-perturbations (\mnist and \cifar). There, we demonstrate that \tool outperforms the current state-of-the-art in scalability, speed, and precision. Second, we show in the HCAS setting that \tool is also suitable for deriving global guarantees. Third, we investigate the impact of different algorithmic components in an ablation study. Finally, we demonstrate \tool's broader applicability on a numerical program.

\paragraph{Experimental Setup}
We implement \tool in PyTorch \citep{PyTorch} and evaluate it on single Nvidia TITAN RTX using a 16-core Intel Xeon Gold 6242 CPU at 2.80GHz. For implementation and experimental details as well as (hyper)parameter choices, please see \cref{sec:impl-deta,sec:parameter-choices} as well as the detailed description and full code in our artifact.

\paragraph{Implementation Details}
The version of \cref{alg:verif} presented here is slightly simplified for the sake of clarity.
We discuss additional engineering considerations in \cref{sec:impl-deta}. 
These implementation details, however, impact neither the soundness of the algorithm nor the intuitions outlined here.

\begin{table*}[t]
	\renewcommand{\arraystretch}{1.2}
	\centering
	\caption{Overview of the obtained natural accuracy (\textit{Acc.}), adversarial accuracy (\textit{Bound}), the number of samples for which we found a fixpoint over-approximation (\textit{Cont.}), the certified accuracy (\textit{Cert.}), and the average time per correctly classified sample for the first 100 samples from the corresponding test set.}
	\vspace{-3mm}
	\label{tab:main_results}
	\footnotesize
	\scalebox{0.95}{
		\renewcommand{\arraystretch}{1.0}
		\begin{threeparttable}
			\centering
			\begin{tabular}{@{}lllrrr rrr }
				\toprule
				\textit{Dataset} & \textit{Model} & \textit{Latent Size} & \textit{\# Acc.} & \textit{$\epsilon$} & \textit{\# Bound} & \textit{\# Cont.} & \textit{\# Cert.} & \textit{Time [s]} \\
				\midrule
				\mnist
				& \fcf & 40 		& 99 & 0.05 & 70 & 100 & 36 & 17.2 \\
				& \fces & 87 		& 99 & 0.05 & 75 & 100 & 30 & 15.8 \\
				& \fco & 100 	& 96 & 0.05 & 73 & 100 & 24 & 13.2 \\
				& \fct & 200 	& 99 & 0.05 & 80 & 100 & 26	& 14.0 \\
				& \convsm & 648 & 97 & 0.05 & 80 & 100 & 68 & 22.4 \\
				\midrule
				\cifar
				& \fct & 200 	& 63  	& 2/255 & 36 & 100 & 22 & 16.8 \\
				& \convsm & 800 & 55   	& 2/255 & 32 & 100  & 29 & 41.1 \\
				\bottomrule
			\end{tabular}
		\end{threeparttable}
	}
	\label{Ta:Relu}
	\vspace{-4mm}
\end{table*}

\subsection{Local Robustness Certification with \tool}
Similar to prior work \citep{chen2021semialgebraic}, we evaluate the first 100 test set samples and report the mean runtime for correctly classified samples (Time), the certified accuracy (Cert.), and the number of samples for which we found an abstract post-fixpoint (Cont.).

In \cref{tab:main_results}, we show results for a range of fully connected and convolutional monDEQs. There, \textit{\#Bound} denotes the number of samples empirically robust to PGD attacks \citep{MadryMSTV18} and constitutes an upper bound to the certified accuracy, see \cref{sec:adversarial-attack} for details.
We generally observe that smaller fully-connected networks have lower empirical robustness but are easier to certify, with the smallest network yielding the highest certified accuracy.
Surprisingly, we find that on both \mnist and \cifar, convolutional networks are comparatively easy to verify, yielding the highest certified accuracies.

\paragraph{Comparison with \semi}
\citet{chen2021semialgebraic} introduce three models suitable for robustness certification. In \cref{tab:semisdp_comp}, we compare against the (by far) most precise of these approaches, the `Robustness Model' (\semi), which is the current state-of-the-art for verifying $\ell_\infty$ robustness properties for monDEQs. As the underlying SDP solver limits \semi to \mnist networks with a latent space size of at most $87$ neurons \citep{chen2021semialgebraic}, we compare to them only on our two smallest networks: their \fces and our \fcf. %
For the smallest perturbations of $\eps = 0.01$, both tools are able to certify (almost) all empirically robust samples, with \semi failing to certify one sample on \fces. However, while \tool requires only around $1$s per sample, \semi takes three to four orders of magnitude longer ($401.5$s and $1388.1$s). For larger perturbation magnitudes $\eps \in \{0.02, 0.05, 0.07\}$, \tool is consistently more precise and much faster, certifying up to $100\%$ more samples (36 vs 18 for \fcf at $\epsilon=0.05$) with around two orders of magnitude shorter average runtime.
For $\eps=0.1$, few samples are empirically robust and neither tool can verify robustness for any on either network.
Finally, as shown in \cref{tab:main_results}, \tool scales to much larger networks ($10$x) and more challenging datasets (\cifar) than \semi. The two alternative certification models proposed by \citet{chen2021semialgebraic}, the `Lipschitz Model' and the `Ellipsoid Model', are significantly less precise, verifying no property at all for $\epsilon=0.05$ and \fces. Thus we omit a detailed comparison.

\begin{table}[t]
	\renewcommand{\arraystretch}{0.97}
	\centering
	\caption{Comparison of \tool to the `Robustness Model' (\semi) of \citet{chen2021semialgebraic}.}
	\vspace{-4mm}
	\footnotesize
	\scalebox{0.98}{
		\renewcommand{\arraystretch}{0.95}
		\centering
		\begin{tabular}{llllrrrrr}
			\toprule
			\multirow{2.2}{*}{\textit{Model}} & \multirow{2.2}{*}{\textit{Latent Size}} & \multirow{2.2}{*}{\textit{\# Acc.}}& \multirow{2.2}{*}{\textit{$\epsilon$}} & \multirow{2.2}{*}{\textit{\# Bound}} & \multicolumn{2}{c}{\semi} & \multicolumn{2}{c}{\tool (ours)} \\
			\cmidrule(lr){6-7}   \cmidrule(lr){8-9}
			&&&&& \# Cert. & Time [s] & \# Cert. & Time [s] \\
			\midrule
			\multirow{5}{*}{\fcf} & \multirow{5}{*}{40} & \multirow{5}{*}{99}
			&  0.01 & 98 & \textbf{98} & 401.5 & \textbf{98} & \textbf{0.97} \\
			&&&0.02 & 95 & 88 & 357.7 & \textbf{94} & \textbf{8.82} \\
			&&&0.05 & 70 & 18 & 196.4 & \textbf{36} & \textbf{17.19} \\
			&&&0.07 & 29 & 5  & 121.0 & \textbf{8}  & \textbf{21.25} \\
			&&&0.10 & 10 & 0 & 63.0 & 0 & 12.88 \\
			\cmidrule(lr){1-3}
			\multirow{5}{*}{\fces}& \multirow{5}{*}{87} & \multirow{5}{*}{99}
			&  0.01 & 99 & 98 & 1388.1 & \textbf{99} & \textbf{1.40} \\
			&&&0.02 & 98 & 92 & 1186.8 & \textbf{98} & \textbf{2.66}  \\
			&&&0.05 & 75 & 24 & 599.9 & \textbf{30} & \textbf{15.75}  \\
			&&&0.07 & 42 & \textbf{5} & 387.6 & \textbf{5}  & \textbf{14.53} \\
			&&&0.10 & 8  & 0 & 214.46 & 0  & 9.75 \\
			\bottomrule
		\end{tabular}
	}
	\label{tab:semisdp_comp}
	\vspace{-4mm}
\end{table}

\paragraph{Comparison with Lipschitz-Bound-Based Methods}
Three existing works derive Lipschitz-Bounds for monDEQs, either via a posteriori analysis \citep{ELMonDEQPabbarajuWK21,chen2021semialgebraic} or construction \citep{LMonDEQRevay2020}. However, they all obtain significantly lower certified accuracies.

\begin{wrapfigure}[9]{r}{0.44\textwidth}
	\centering
	\vspace{-4mm}
	\scalebox{0.8}{
	\def\layersep{1.6cm}
\def\layerseP{1.12cm}

\begin{tikzpicture}[scale=0.7]

\coordinate (Own) at (0,0);
\coordinate (Int) at (5.5,2.5);

\node[aircraft top,fill=black,draw=white,minimum width=1cm,rotate=0,scale = 0.85] (Own) at (0,0) {} node [below,yshift=-0.5cm,font=\footnotesize] {Ownship};
\coordinate[label=right:$v_\text{own}$] (S0) at ($(Own)+(1.5,0)$);

\node at (Int) [above,xshift=1.1cm,font=\footnotesize] {Intruder};
\coordinate[] (IntN) at ($(Int)+(1.5,0)$);
\coordinate[label=below:$v_\text{int}$] (S1) at ($(Int)+(0,-1.5)$);

\draw [thick, ->] (Own) -- (S0);
\draw [dashed,->] (Int) -- (IntN);
\draw [thick, ->] (Int) -- (S1);

\def \xo{-0.5}
\path let \p1 = (Int) in coordinate (IntO) at (\x1,\xo);
\path let \p1 = (Own) in coordinate (OwnO) at (\x1,\xo);
\draw [dashed, |->|] (OwnO) -- (IntO);

\node[] () at ($(OwnO)!0.5!(IntO)+(0,-0.3)$) {$x$};

\def \yo{-1.0}
\path let \p1 = (Int) in coordinate (IntO) at (\yo,\y1);
\path let \p1 = (Own) in coordinate (OwnO) at (\yo,\y1);
\draw [dashed, |->|] (OwnO) -- (IntO);
\node[] () at ($(OwnO)!0.5!(IntO)+(0.3,0)$) {$y$};

\draw[dashed] (Own) -- (Int);

\pic [draw,stealth-,red,thick,dashed,angle radius=0.7cm,"$-\vartheta$"{anchor=west,text = black, below}, angle eccentricity=1] {angle = S1--Int--IntN};

\node[aircraft top,fill=black,draw=gray,minimum width=1cm,rotate=00,scale = 0.85]  at (Own) {};
\node[aircraft top,fill=red,draw=black,minimum width=1cm,rotate=-90, scale = 0.85] at (Int) {};

\end{tikzpicture}
	}
	\vspace{-4mm}
	\caption{Visualization of the HCAS Geometry. Adapted from \citet{julian2019guaranteeing}.}
	\label{fig:HorizontalCAS}
\end{wrapfigure}
\subsection{Global Robustness Certification with \tool}
To demonstrate that \tool is also suitable for computing global robustness certificates, we analyze the HCAS (Horizontal Collision Avoidance System) setting which is illustrated in \cref{fig:HorizontalCAS} and has been used as a benchmark for feed-forward networks in the past \citep{julian2019guaranteeing,Fu2021Repair}. Given the relative position ($x$- and $y$-coordinates) and heading ($\vartheta$) of an intruder aircraft (shown in red) with respect to one's own position and heading (shown in black), one of five action recommendations (COC - Clear of Conflict, WL/WR - Weak Left/Right, SL/SR - Strong Left/Right) is given. The training data is generated by framing this as a Markov Decision Process (MDP) and solving it for many parameters, yielding a large look-up table (see \citet{julian2019guaranteeing} for more details). We train a monDEQ (\fco) on this large and discrete tabular dataset to obtain a continuous and compressed mapping.

\begin{wrapfigure}[15]{r}{0.54 \textwidth}
	\centering
	\vspace{-1mm}
	\scalebox{0.98}{\input{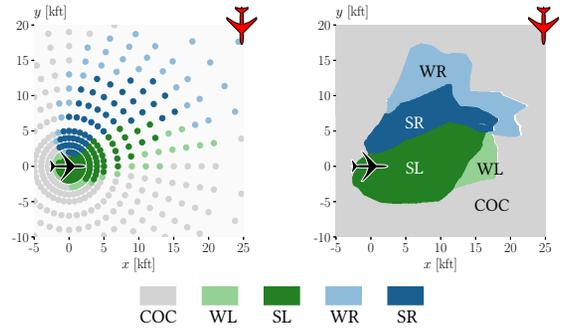}}
	\vspace{-7mm}
	\caption{HCAS policy training data (left) for ($\vartheta = -90$°) and verified monDEQ prediction (right) ($\vartheta \in [-90.5\text{°}, -89.5\text{°}]$). The colored regions are certified to yield the indicated recommendation. No certificate for the white regions.}
	\vspace{-2mm}
	\label{fig:hcas_slice}
\end{wrapfigure}
To confidently use this monDEQ representation, we aim to certify that it yields consistent predictions across large regions of the input space.
Using \tool, we apply a domain splitting approach \citep{Wang19Formal} in order to exhaustively certify decisions for the whole input space.%
This way, we can certify the prediction on $82.8$\% of the relevant input region. %
For visualization, we pick a thin slice of this space %
and visualize the resulting certified decision regions (right) and the corresponding tabular data (left) in \cref{fig:hcas_slice}. There, regions for which we obtain a certificate are colored depending on the action recommended, and regions for which no certificate is obtained are shown in white. We observe that, as expected, the regions directly at the decision boundary can not be certified. However, we also observe a small unexpected pocket of non-certifiable decisions where a strong right is certifiably recommended all around.

\begin{figure}[t]
	\centering
	\begin{minipage}{1.02 \textwidth}
		\centering
		\begin{subfigure}[t]{.19\textwidth}
			\centering
			\includegraphics[width=.98\linewidth]{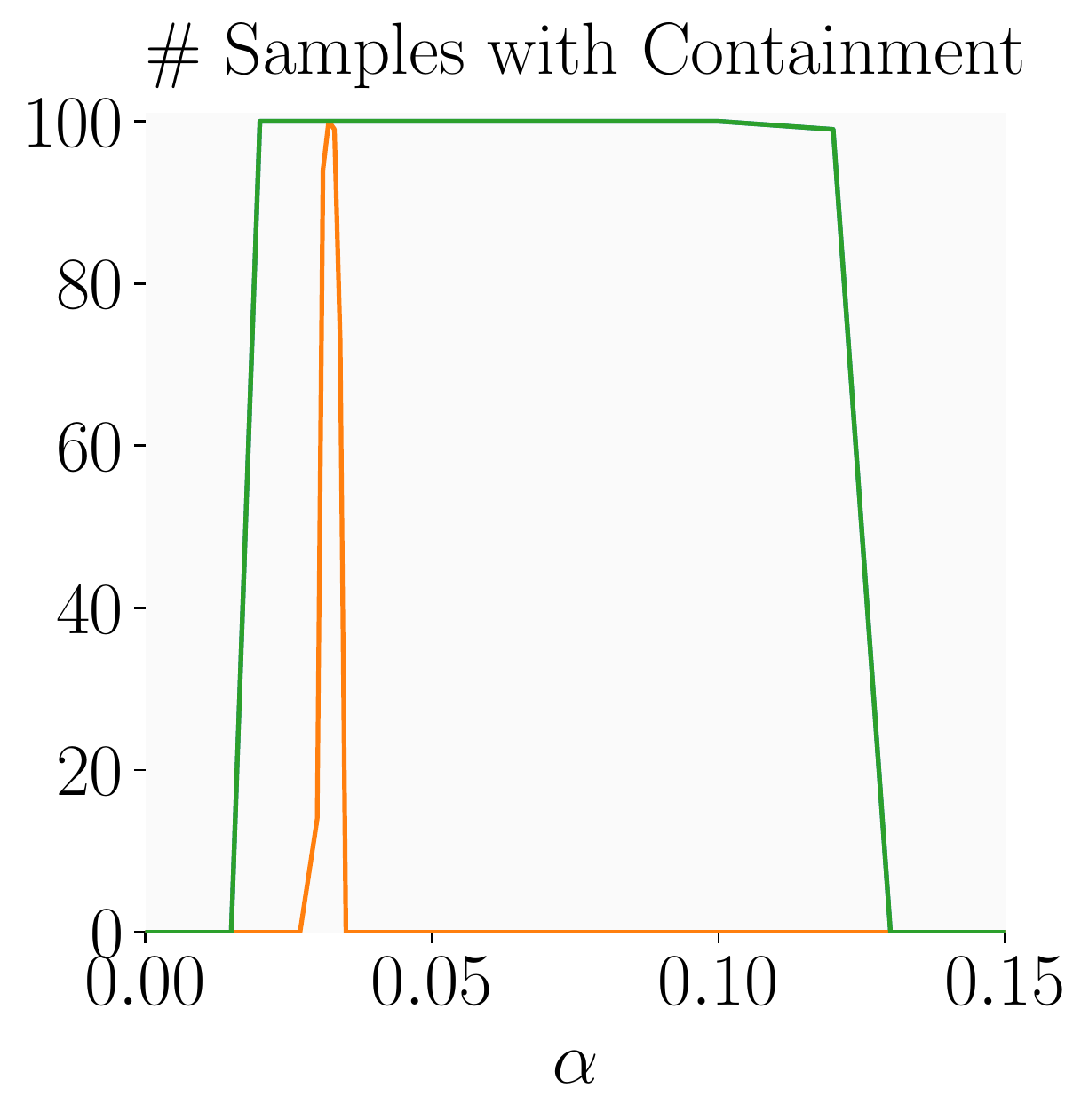}
			\caption{\footnotesize Containment}
			\label{fig:alpha_ablation_c}
		\end{subfigure}
		\hfil
		\begin{subfigure}[t]{.20\textwidth}
			\centering
			\includegraphics[width=.93\linewidth]{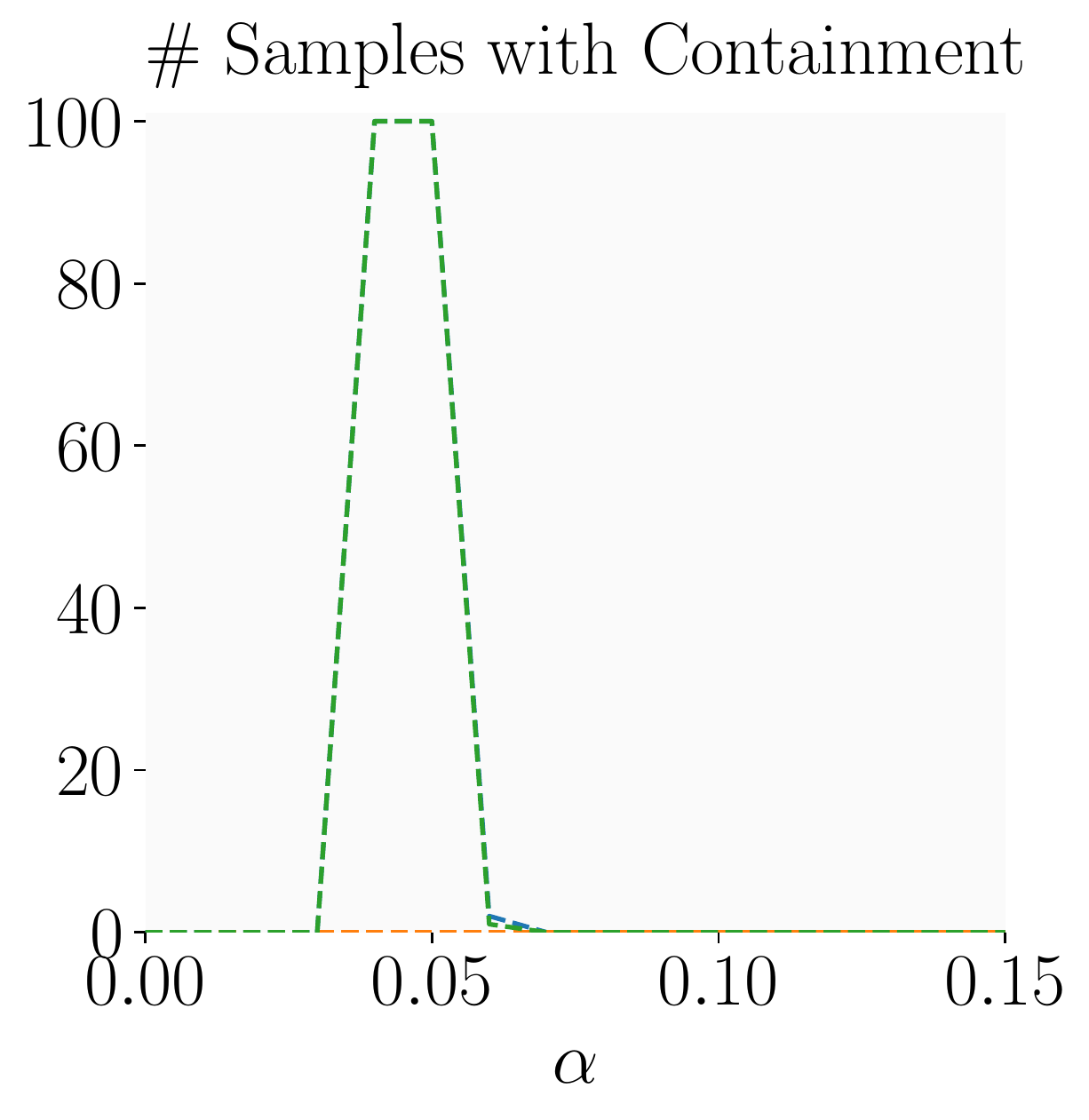}
			\caption{\footnotesize Containment w/o box}
			\label{fig:alpha_ablation_c_nb}
		\end{subfigure}
		\hfil
		\begin{subfigure}[t]{.19\textwidth}
			\centering
			\includegraphics[width=0.98\linewidth]{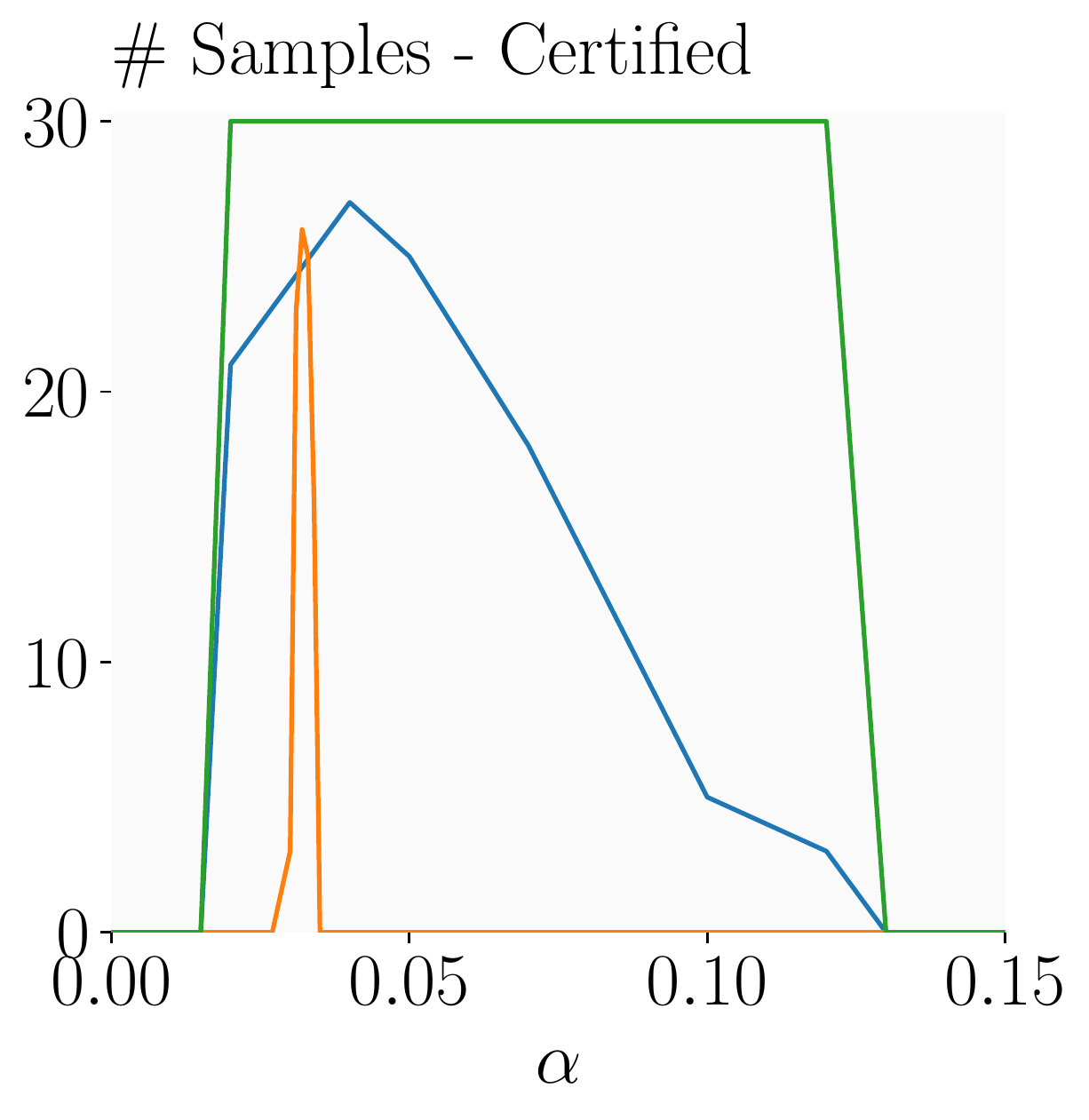}
			\caption{\footnotesize Certification}
			\label{fig:alpha_ablation_v}
		\end{subfigure}
		\hfil
		\begin{subfigure}[t]{.37\textwidth}
			\centering
			\includegraphics[width=1.\linewidth]{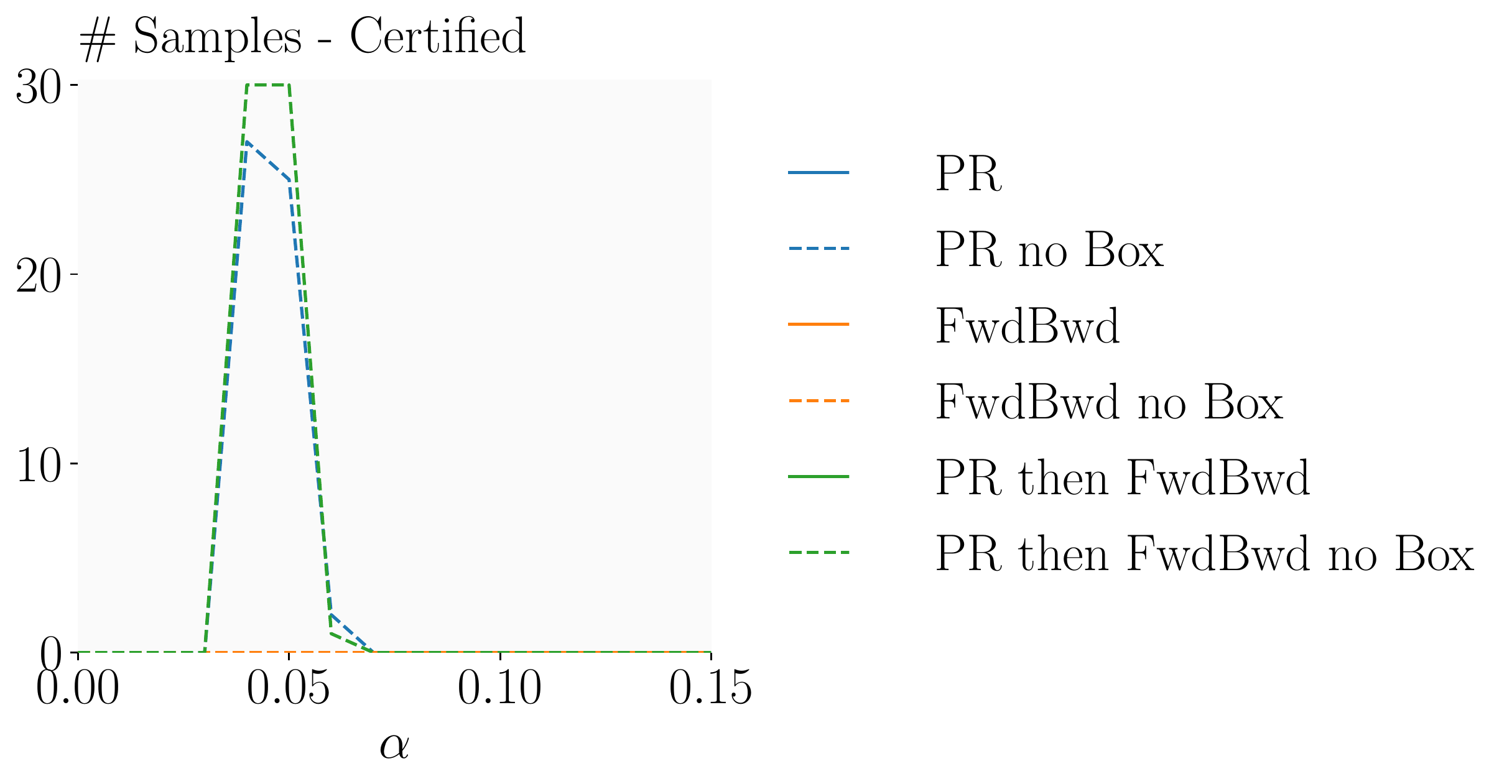}
			\caption{\footnotesize Certification w/o box$\qquad\qquad\qquad\qquad$}
			\label{fig:alpha_ablation_v_nb}
		\end{subfigure}
	\end{minipage}
	
	\vspace{-3mm}
	\caption{Illustration of the stability ranges for $\alpha$, depending on the use of the box component, and fixpoint solver. Note that the blue (\pr) and green (\pr then \fwdbwd) lines are identical in the two containment plots (left).}
	\vspace{-3mm}
	\label{fig:alpha_ablation}
\end{figure}

\subsection{Ablation Study on \tool} \label{sec:eval:ablation}
\begin{wraptable}[16]{r}{0.48\textwidth}
	\vspace{-4.5mm}
	\renewcommand{\arraystretch}{1.2}
	\caption{Overview of the natural accuracy (\textit{Acc.}), the number of samples for which the fixpoint set iteration converged (\textit{Cont.}) the certified accuracy (\textit{Cert.}), and the average time per sample on \fces.}
	\vspace{-2mm}
		\scalebox{0.81}{
	\begin{threeparttable}
		\centering
		\begin{tabular}{lrrr }
			\toprule
			\textit{Ablation} & \textit{\# Cont.}& \textit{\# Cert.} & \textit{Time [s]} \\
			\midrule
			Reference                   & 100 & \textbf{30} & 17.48 \\
			No Zono component 			& 100 & 0  & 0.38 \\
			No Box component  			& 100 & \textbf{30} & 23.18 \\
			Only \pr                     & 100 & 27 & \textbf{4.10} \\
			Only \fwdbwd			        & 100 & 26$^\dagger$\hspace{-1.5mm} & 7.99 \\
			No $\lambda$ optimization		& 100 & 24 & 7.81\\
			Reduced $\lambda$ optimization	& 100 & 27 & 13.85\\
			Same iter. containment       & 100 & 0  & 7.14\\
			No Expansion       			 & 50  & 9  & 18.89\\
			\bottomrule
		\end{tabular}
		\begin{tablenotes}
			\item[]{$^\dagger$ No formal guarantee as conditions for \cref{the:contraction} are not satisfied.}
		\end{tablenotes}
	\end{threeparttable}
		}
	\label{tab:ablation_study}
	\vspace{-3mm}
\end{wraptable}
We conduct an extensive ablation study on the key features of \tool and report results for \fces in  \cref{tab:ablation_study} and \cref{fig:alpha_ablation}, deferring additional results to \cref{app:ablation}.

\paragraph{\domain}
We analyze the effectiveness of our domain by setting either $\vb = 0$ (no Box) or $\mA = \0$ (no Zono). Disallowing the Zonotope component leaves a standard Box, which converges quickly, but fails to prove any property (see \cref{tab:ablation_study}). Disallowing the Box component, leaving a \domain which still utilizes error consolidation rather than a standard Zonotope, can yield the same precision but significantly reduces the range of dampening parameters $\alpha$ leading to convergence (compare \cref{fig:alpha_ablation_c} and \cref{fig:alpha_ablation_c_nb}), to the point where for some solvers and networks (e.g. \fwdbwd and \fces) we were unable to find such an $\alpha$.

\paragraph{Iteration Method}
As discussed in \cref{sec:tool}, we can choose different operator splitting methods for the containment-finding and the tightening phase of \tool. 
When using only \fwdbwd, the $\alpha$ range for which we can detect containment is extremely narrow (see \cref{fig:alpha_ablation_c}) and does not overlap the region $0 < \alpha < 2m/$\mbox{$\| \mI - \mW \|_{2}^{2} = 0.0125$} for which we have convergence guarantees in the concrete. This is problematic, as these guarantees are a condition for \cref{the:contraction} and thus our formal soundness guarantee.
Using \pr until we find containment and then \fwdbwd avoids this issue, is significantly more robust to the choice of $\alpha$, and yields the tightest abstractions of all three approaches, leading to the most certified properties (see \cref{fig:alpha_ablation_v}).
Only using \pr leads to slightly less precise final abstractions and thus worse certification performance.
First using \fwdbwd and then \pr is not supported by \cref{the:iter_PR}, as we would not have computed $\hat{\U}^*$.
We thus use first \pr and then \fwdbwd, for all other experiments. While we fix $\alpha_{1}$ for \pr, we choose $\alpha_2$ for \fwdbwd adaptively. 
See \cref{app:ablation_adaptive_alpha} for more details and a corresponding ablation study.

\paragraph{Transformer Optimization}
Recall that the abstract ReLU transformer has a parametrizable slope $\lambda$, which can be optimized to tighten our final abstractions \citep{WongK18,WengZCSHDBD18,zhang2018crown} by unrolling several iterations of the solver and using (projected) gradient descent to optimize $\lambda$ individually for each of these iterations.
We distinguish three settings, `No $\lambda$ optimization', `Reduced $\lambda$ optimization', and `Reference', where we unroll no, $20$, and $40$ iterations and optimize lambda over no, $60$, and $200$ gradient steps, respectively.
We only perform this optimization for samples that are already close to being certified, allowing us to certify six additional samples while only increasing the mean certification time by $10$s (see \cref{tab:ablation_study}). For more details, see \cref{sec:impl-deta}.

\paragraph{Same Iteration Containment}
To demonstrate the value of fixpoint set preservation (\cref{thm:preserving,the:iter_FWBW,the:iter_PR}), we consider the setting `Same iter. containment`, where we always require the abstraction $\hat{\S}_{n+1}$ that we use to certify the postcondition to be contained in its predecessor $\hat{\S}_{n}$. In this setting, we are not able to certify a single property (see \cref{tab:ablation_study}), as we can only detect strictly smaller abstractions if the current abstraction is still relatively loose. %

\paragraph{Expansion}
To illustrate the effect of artificially expanding our abstractions as part of error consolidation (see \cref{sec:tool}), we consider 'No Expansion' in \cref{tab:ablation_study}, where we turn expansion off by setting $w_{mul}$ and $w_{add}$ to $0$. Most notably, for $50\%$ of samples, we do not detect abstraction containment and thus do not obtain sound fixpoint abstractions at all. Further, even if we obtain fixpoint abstractions, we often do not certify the corresponding sample. 

\subsection{Effectiveness of \domain}\label{sec:eval:ch_zono}
\begin{wrapfigure}[10]{r}{0.56 \textwidth}
	\vspace{-10.5mm}
	\begin{subfigure}{.48\linewidth}
		\centering
		\includegraphics[width=1.0\linewidth]{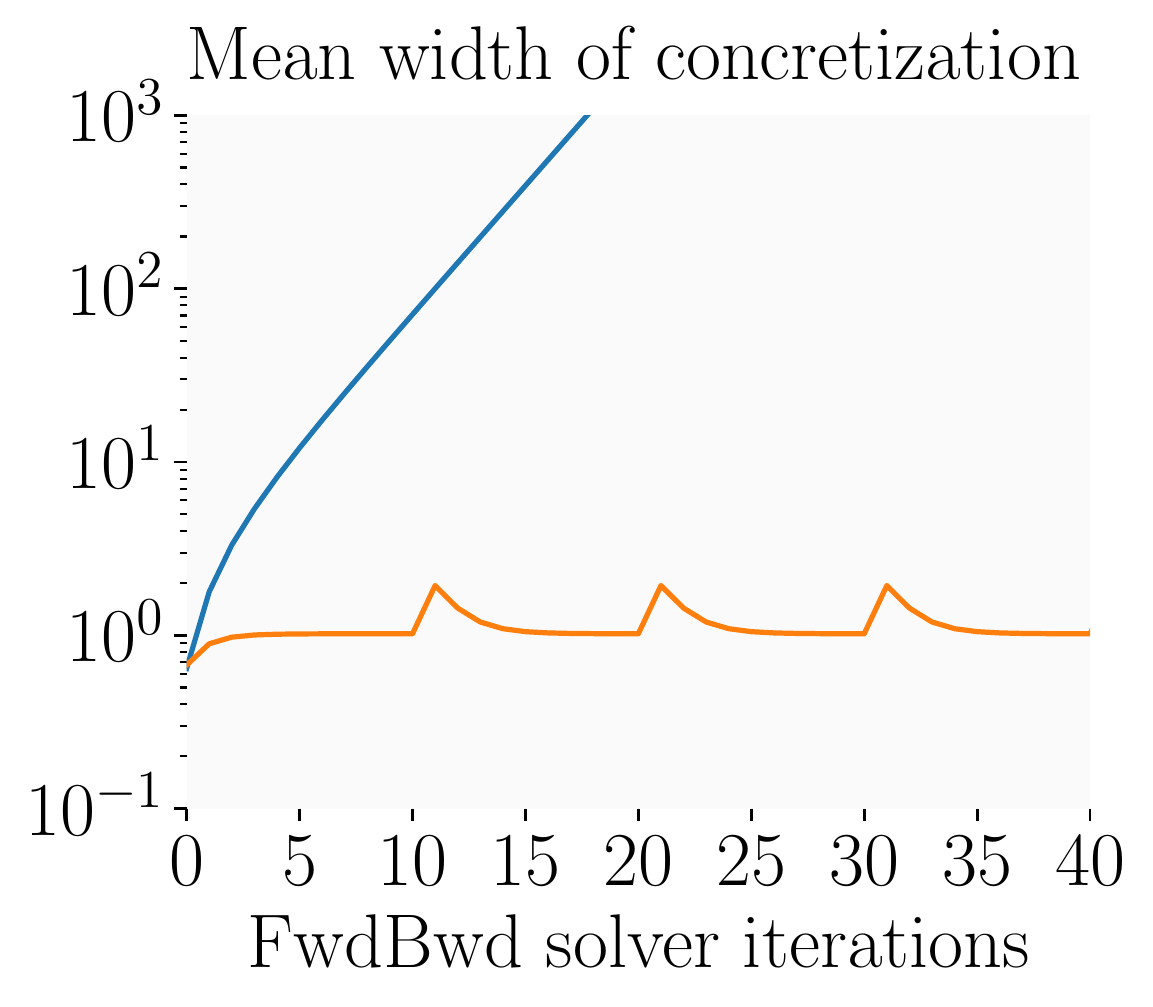}
		\vspace{-5mm}
		\caption{\gfwdbwdS}
		\vspace{-2mm}
	\end{subfigure}
	\hfil
	\begin{subfigure}{.48\linewidth}
		\centering
		\includegraphics[width=1.0\linewidth]{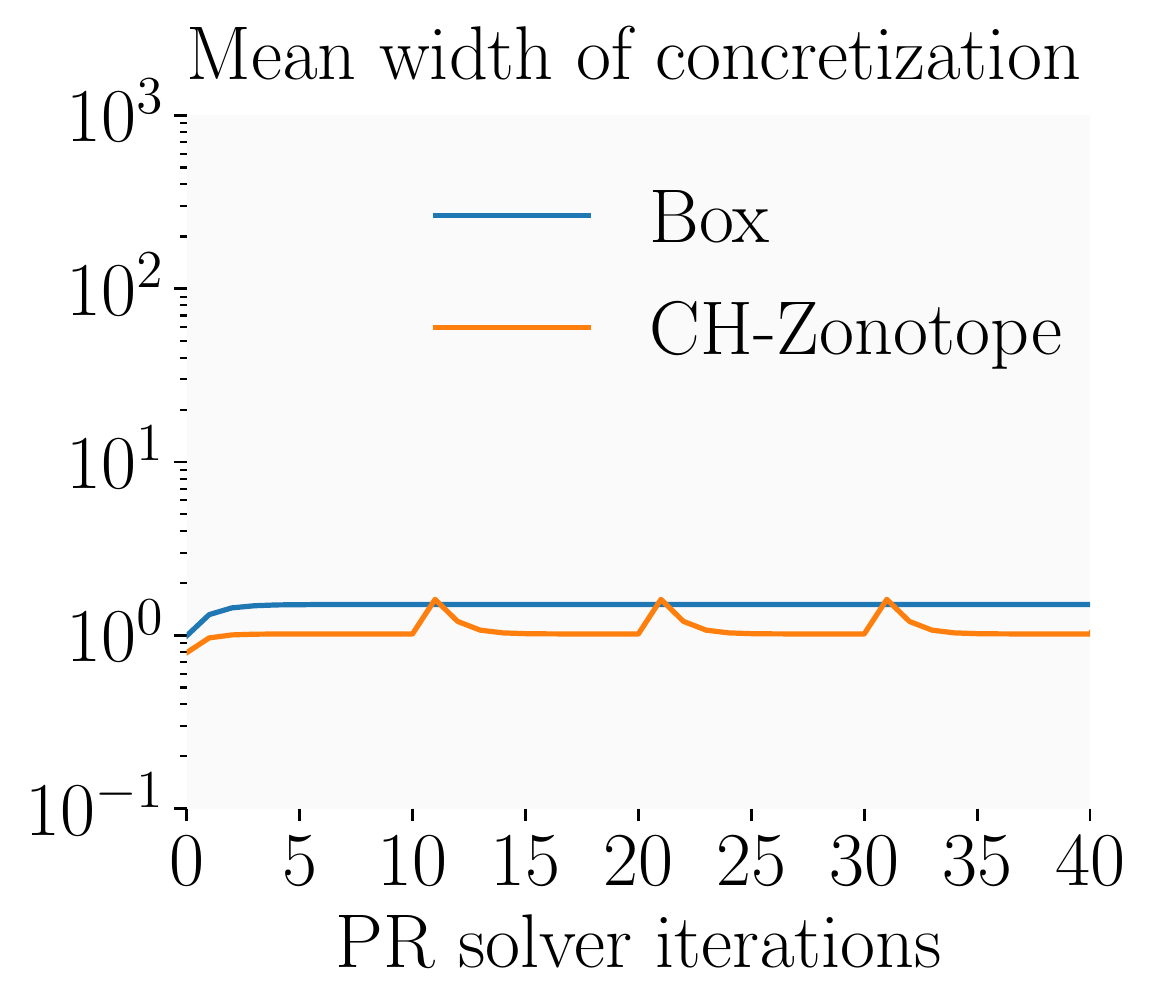}
		\vspace{-5mm}
		\caption{\gprS}
		\vspace{-2mm}
	\end{subfigure}
	\caption{Mean width of concretizations over the solver iteration for a representative sample on \fcf.}
	\vspace{-2mm}
	\label{fig:error_growth}
\end{wrapfigure}
Here, we evaluate the effectiveness of our novel \domain. %

\paragraph{Precision}
We compare \domain to Box, the only other domain commonly used in neural network verification that enables a tractable containment check (see \cref{tab:relaxations_intro}). In \cref{fig:error_growth}, we show the mean width of the concretized abstractions as a proxy for the domain's precision over the number of abstract solver iterations for a representative sample. Empirically, we find that the Box domain is significantly less precise, diverging quickly when using \fwdbwd splitting and being too imprecise to prove any property when using \pr splitting. For \domain, we observe how error consolidation periodically simplifies the abstraction, increasing its size, before additional solver applications tighten it again.
Empirically, error consolidation does not only enable our orders of magnitude faster but similarly precise inclusion check (see \cref{app:ablation_containment}), but also speeds up the analysis due to smaller representation sizes, while having a negligible impact on overall precision. (see \cref{app:ablation_error_consolidation}).
For a more in-depth analysis of the tightness of our approximate containment check (\cref{the:containment}) and the effect of error consolidation (\cref{thm:consolidation}) please see \cref{app:ablation_containment,app:ablation_error_consolidation}.

\begin{figure}[t]
	\centering
	\begin{minipage}[c]{0.38\textwidth}
	   \vspace{6mm}
		\begin{lstlisting}[language=Python,mathescape=true,numbers=none]
def root($x$):
   $s$ = $s_0$
   while $s \leq 0$ or $|s*s - 1/x| \geq \epsilon$:
	   $h$ = $(1 - x*s*s)$
	   $s$ = $s + s * (0.5 * h + 0.375 * h * h)$
   return $s$
		\end{lstlisting}
		\vspace{6mm}
		\captionof{figure}{Program \lstinline{root}, which calculates the square root of input x through iterative Householder approximation.}
		\label{fig:root_iter}
		\vspace{-5mm}
	\end{minipage}
	\hfill
	\begin{minipage}[c]{0.59\textwidth}
		\centering
	   \begin{subfigure}[t]{0.49\textwidth}
			\includegraphics[width=0.95 \textwidth]{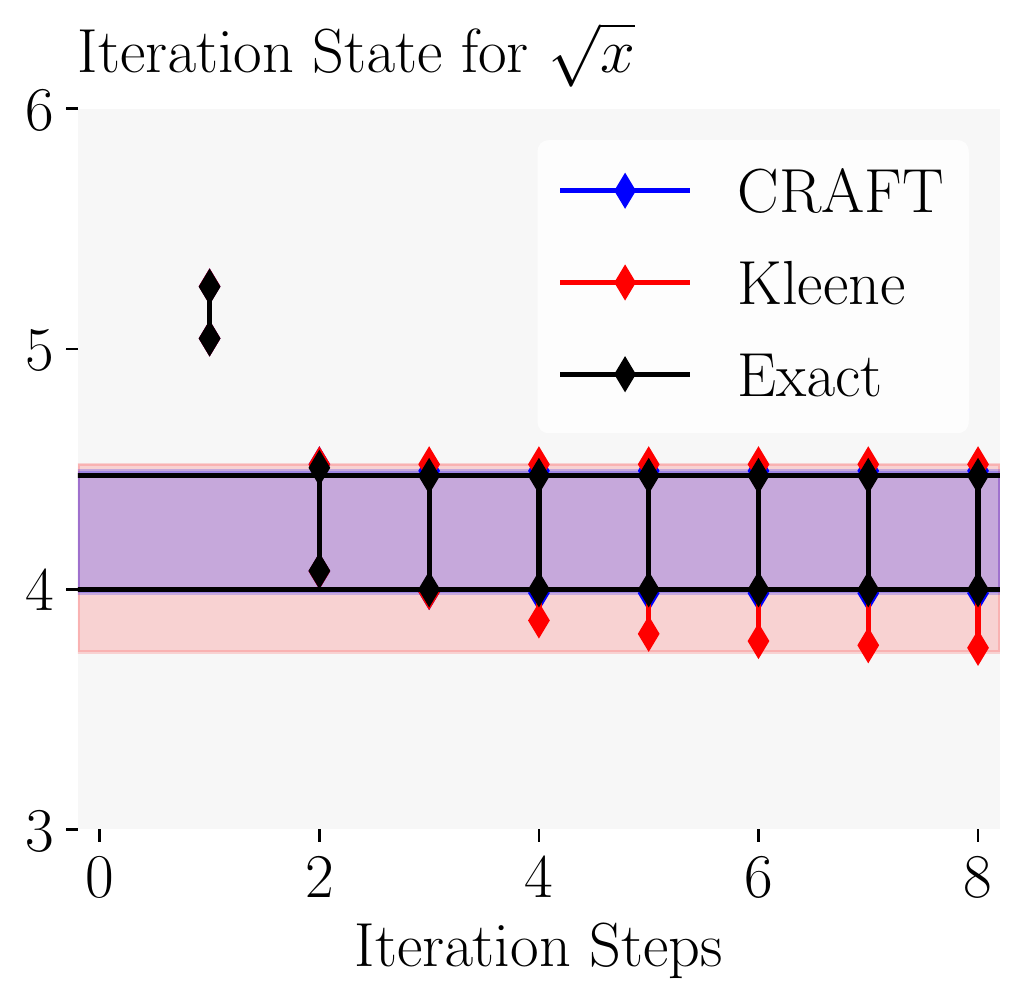}
			\subcaption{$\X=[16, 20]$}
			\label{fig:root_vis_a}
		\end{subfigure}
	   \begin{subfigure}[t]{0.49\textwidth}
		   \includegraphics[width=0.95 \textwidth]{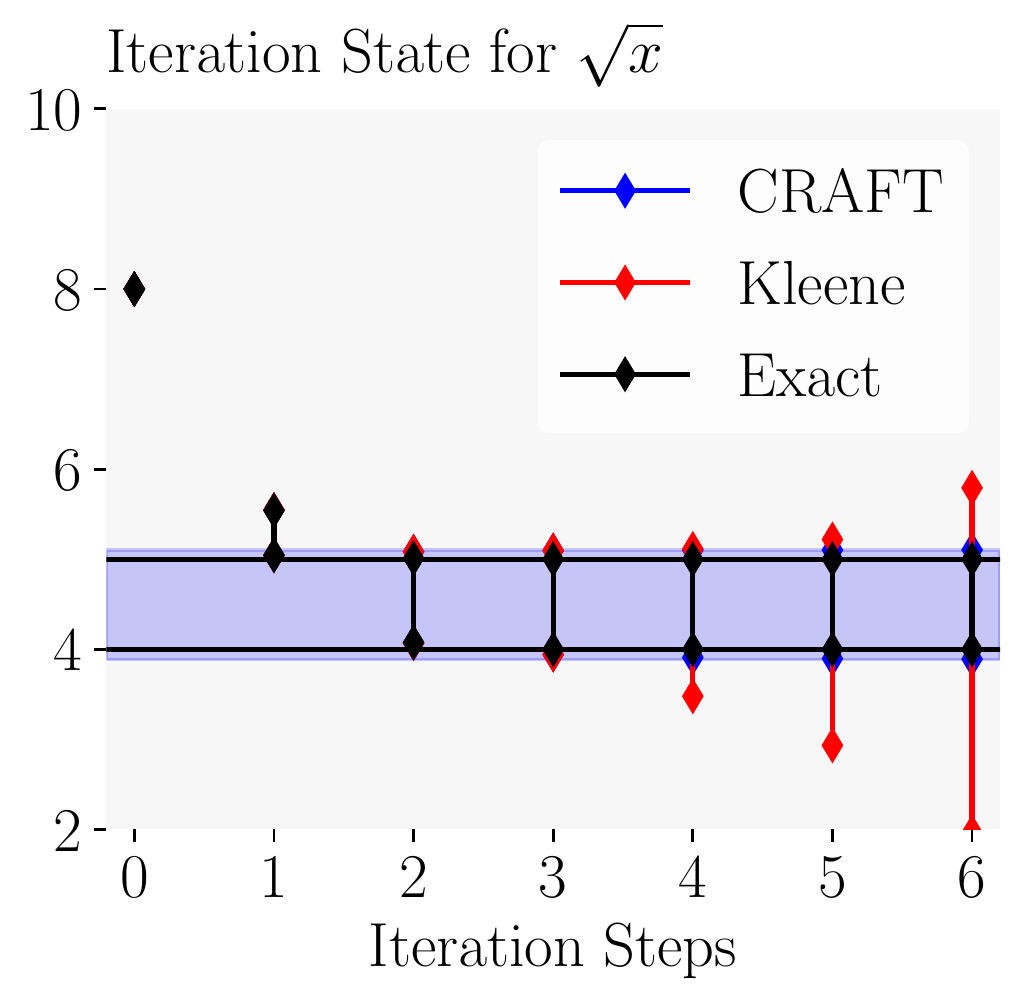}
			\subcaption{$\X=[16, 25]$}
			\label{fig:root_vis_b}
	   \end{subfigure}
		\vspace{-2mm}
		\captionof{figure}{Comparison of root ($1/s_i$) over-approximations for different intervals $\X$. We show the final interval as shaded region.}
		\label{fig:root_vis}
		\vspace{-5mm}
	\end{minipage}
\end{figure}

\subsection{Case Study: Analysis of Square Root Approximation} \label{sec:householder}
\begin{wraptable}[12]{r}{0.5145\textwidth}
	\vspace{-4.5mm}
	\renewcommand{\arraystretch}{1.2}
	\caption{Comparison of the fixpoint over-approximations obtained with different methods. Exact mathematical fixpoints (Exact), their over-approximation obtained via \tool and Kleene iteration.}
	\vspace{-2mm}
	\centering
		\scalebox{0.87}{
	\begin{threeparttable}
		\centering
		\begin{tabular}{ll cc}
			\toprule
			\multicolumn{2}{c}{\multirow{2.5}{*}{Method}} &\multicolumn{2}{c}{Root Interval $1/\gamma(\S^*)$}\\
			\cmidrule{3-4}
			& &\text{$\X$ = [16,20]} & \text{$\X$ = [16,25]}\\
			\midrule
			Exact & $\S^*$ & [4.000, 4.472] & [4.000, 5.000] \\
			\tool &  $\hat{\S}^*_\text{cr}$ & [3.983, 4.493] & [3.887, 5.104] \\
			Kleene iteration & $\hat{\S}^*_\text{kl}$ & [3.738, 4.520] & [0.000, $\,\:\:\infty\,\:\:$) \\
			\bottomrule
		\end{tabular}
	\end{threeparttable}
		}
	\label{tab:householder}
	\vspace{-3mm}
\end{wraptable}
In this section, we provide a simple example of the wider applicability of our abstract interpretation approach of fixpoint iterations and its advantages compared to Kleene iteration.

We consider the Householder method to compute (the reciprocal of) square roots, illustrated in \cref{fig:root_iter} and commonly used as a test case \citep{GoubaultPBG07,ZonotopeGhorbalGP09}. We consider the input set $\bc{X} = [16, 20] \subseteq \R$ (with exact fixpoint set $[4, \sqrt{20} \approx 4.472]$), the initialization $s_0 = 2^{-3}$, and the termination threshold $\epsilon = 10^{-8}$. We use the Zonotope domain \cite{ZonotopeGhorbalGP09} and compare to Kleene iteration with semantic unrolling \cite{BlanchetCCFMMMR02}, i.e., we iterate $\hat{\S}_{i} = f^\#(\hat{\S}_{i-1})$ if we can show the termination condition to not be satisfied and else $\hat{\S}_{i} = \hat{\S}_{i-1} \sqcup f^\#(\hat{\S}_{i-1})$. Using Kleene iteration, we thus obtain the fixpoint set abstraction $\hat{\S}^*_\text{kl}$, shown in red in \cref{fig:root_vis_a}, which contains all intermediate iteration states for which the termination condition might trigger. 
Our abstract interpreter \tool allows us to, instead, compute iterations as $\hat{\S}_{i} = f^\#(\hat{\S}_{i-1})$ until our contraction-based termination condition triggers ($\hat{\S}_{i} \sqsubseteq \hat{\S}_{i-1}$). This yields a more precise fixpoint set over-approximation $\hat{\S}^*_\text{cr}$, shown in blue in \cref{fig:root_vis_a}. Further, if we consider a more challenging precondition of $\X = [16, 25]$, Kleene iteration quickly diverges (see \cref{fig:root_vis_b}), while \tool computes the precise abstraction $\hat{\S}^*_\text{cr}$ (see \cref{tab:householder}). Note that \tool requires $10$ and $18$ iterations for $\X = [16, 20]$ and $\X = [16, 25]$, respectively, while Kleene iteration requires $30$ for $\X = [16, 20]$. We show truncated versions of the iteration in \cref{fig:root_vis} for readability.  In \cref{app:householder} we discuss how the termination condition can be analyzed in this setting to obtain an over-approximation of all reachable outputs instead of the true mathematical fixpoints.

\subsection{Limitations}
From the existence of (global) Lipschitz bounds on monDEQs \citep{ELMonDEQPabbarajuWK21}  and the convergence guarantees in the concrete, it follows that an exact abstract iteration converges to bounded fixpoint sets for every bounded input region. However, \tool has some limitations that could prevent us from computing them: (i) Our termination criterion (\cref{the:containment}) requires a contraction of the abstract iteration state. However, this contraction is not guaranteed to occur, even when using exact abstractions, and might not be detected by our incomplete containment check, even if it does occur. (ii) While an exact abstract iteration is guaranteed to converge, there is no such guarantee for its over-approximation, which could diverge due to imprecisions accumulated by the use of incomplete abstract transformers.
Despite these limitations, which we share with conventional Kleene iteration, we observe empirically that we can find fixpoint set approximations in all evaluated cases when using \pr splitting (see \cref{tab:main_results}).

\section{Related Work} \label{sec:related_work}
We now briefly review related work on Zonotope order reduction, neural network verification, and abstract interpretation.

\paragraph{Zonotope Order Reduction}
While the flexibility and expressiveness of Zonotope have made it a popular abstract domain for safety and reachability analysis \citep{kuhn1998rigorously,althoff2008verification,yang2018comparison}, its representation size can grow quickly. %
To alleviate this limitation, \citet{kuhn1998rigorously} suggested soundly over-approximating Zonotope using smaller, less precise representations, i.e., fewer error terms and thus smaller error matrices. This is called order-reduction via outer-approximation. 
While a range of such methods was introduced in the following years \citep{combastel2003state,girard2005reachability,yazarel2004geometric,yang2018comparison}, they were designed for Zonotope in very ($p\leq10$) or relatively ($p\leq100$) low dimensional spaces and of low order ($k\leq50$). Thus, they generally scale poorly to the larger dimensions ($p\geq 500$) and high orders ($k\geq1000$) we consider \citep{KopetzkiSA17O}. In this setting, the PCA-based method proposed by \citet{KopetzkiSA17O}, which we build on, was found to produce the tightest approximations while still being computationally tractable.

\paragraph{Incomplete Neural Network Verification}
Incomplete verification approaches (such as ours) are generally fast and efficient but sacrifice precision, i.e., they may fail to certify properties that do hold. They can be divided into bound propagation \citep{GehrMDTCV18,zhang2018crown,SinghGPV19,DeepZSinghGMPV18,xu2020automatic} and optimization problem based approaches, using linear programming (LP) \citep{singh2019beyond,muller2021prima,FerrariMJV22} or semidefinite programming (SDP) formulations \citep{raghunathan2018semidefinite}.
However, existing approaches are unable to handle (unbounded) fixpoint iterations and thereby (mon)DEQ verification without non-trivial adaptations.

In contrast to the above deterministic approaches, which analyze models as they are, stochastic defenses such as randomized smoothing \cite{lecuyer2018dp,CohenRK19} construct new robust models by introducing noise into the inference process. They establish robustness guarantees for these new models with high probability but incur significant runtime costs at both certification- and inference-time. This drawback is further exacerbated by the relatively expensive fixpoint iterations needed in (mon)DEQ inference.

\paragraph{Certification of monDEQs}
Two main approaches have been proposed to certify the robustness of monDEQs: (i) \citet{ELMonDEQPabbarajuWK21} use the special structure of monDEQs to bound the global Lipschitz constant of the network, and (ii) \citet{chen2021semialgebraic} adapt an SDP-based approach by introducing a semi-algebraic representation of the ReLU-operator used in monDEQs. 

While the global Lipschitz bounds from \citet{ELMonDEQPabbarajuWK21} do not require a per-sample analysis, they are inherently loose, especially in the challenging setting of $\ell_\infty$ perturbations, where our approach achieves much higher precision, as demonstrated in \cref{sec:eval}.

Depending on the encoding, the SDP-encoding by \citet{chen2021semialgebraic} allows to bound the score difference between classes, the global Lipschitz constant, or yields an ellipsoidal relationship between inputs and outputs. All three approaches only scale to an implicit layer size of $87$ neurons due to the limitations of the underlying SDP solver \citep{chen2021semialgebraic}. Additionally, the most effective approach suffers from long runtimes (up to $1400$s per sample) even for these small networks, making the certification of many inputs or larger networks infeasible. We compare favorably to this approach in terms of precision, runtime, and scalability in \cref{sec:eval}.

Orthogonally, \citet{LMonDEQRevay2020} show a way of bounding the Lipschitz constant of a monDEQ by construction but do not report any robustness certificates. Further, enforcing small Lipschitz constants this way reduces the resulting accuracy significantly, thus limiting the utility of the obtained networks. 

Recently, \citet{wei2022certified} built on the work from \citet{LMonDEQRevay2020} by further restricting the parametrization of monDEQs to make them more amenable to verification with the Box domain and thus permitting larger models with higher accuracy to be analyzed. However, for the general monDEQs we consider, their method reduces to analysis with the Box domain, which we found to be too imprecise to prove any property.

\paragraph{Abstract Interpretation of Unbounded Loops}
Abstract interpreters employ many techniques to either speed up the analysis of unbounded loops or make it more precise \citep{GoubaultPBG07}. These approaches, include semantic unrolling \citep{BlanchetCCFMMMR02}, widening, and narrowing \citep{CousotC77,CousotC77b,AmatoSSAV16,CousotC92b,Bourdoncle93}.
\citet{GangeNSSS13} discuss considerations for Kleene iteration on non-lattice abstract domains such as \domain.

\section{Conclusion} We presented a novel abstract interpretation approach for fixpoint iterators with convergence guarantees based on two key contributions: (i) theoretical insights which allow us to compute sound and precise fixpoint abstractions without using joins, and (ii) a new abstract domain, \domain, which allows for precise propagation of abstract elements and enables efficient inclusion checks ($\bc{O}(p^3)$ in dimension $p$). To demonstrate the effectiveness of this approach, we implemented our framework in a tool called \tool and evaluated it on the robustness verification of monDEQs, a novel neural architecture constituting a particularly challenging instance of a high-dimensional fixpoint iterators. 

In an extensive evaluation, we demonstrated %
that \tool exceeds state-of-the-art performance in monDEQ verification by two orders-of-magnitude in terms of speed, one order of magnitude in terms of scalability, and about $25\%$ in terms of certification rate, demonstrating the merit of our framework. %

\section{Acknowledgements}

We would like to thank the anonymous reviewers for their constructive comments, our colleague Florian Dorner for his insightful feedback, and our Shepherd Swarat Chaudhuri.

This work has been done as part of the EU grant ELSA (European Lighthouse on Secure and Safe AI, grant agreement No. 101070617) and the SERI grant SAFEAI (Certified Safe, Fair and Robust Artificial Intelligence, contract no. MB22.00088). Views and opinions expressed are however those of the authors only and do not necessarily reflect those of the European Union or European Commission. Neither the European Union nor the European Commission can be held responsible for them. The work has received funding from the Swiss State Secretariat for Education, Research and Innovation (SERI).

\section{Further Resources}
We have published all code, models, and instructions required to reproduce our results on Zenodo \citep{CRAFT_artifact} with an updated version being available at \url{github.com/eth-sri/craft}.

\message{^^JLASTBODYPAGE \thepage^^J}

\clearpage
\bibliographystyle{ACM-Reference-Format}
\bibliography{references}

\message{^^JLASTREFERENCESPAGE \thepage^^J}

\ifbool{includeappendix}{%
	\clearpage
	\appendix
 	\section{Case Study: Analysis of Square Root Approximation} \label{app:householder}

\begin{wraptable}[12]{r}{0.52\textwidth}
	\vspace{-4.5mm}
	\renewcommand{\arraystretch}{1.2}
	\caption{Extended version of \cref{tab:householder} including not only the \tool over-approximation of the true mathematical fixpoints (\tool fix) but also that of all reachable values (\tool reach).}
	\vspace{-2mm}
	\centering
		\scalebox{0.87}{
	\begin{threeparttable}
		\centering
		\begin{tabular}{ll cc}
			\toprule
			\multicolumn{2}{c}{\multirow{2.5}{*}{Method}} &\multicolumn{2}{c}{Root Interval $1/\gamma(\S^*)$}\\
			\cmidrule{3-4}
			& &\text{$\X$ = [16,20]} & \text{$\X$ = [16,25]}\\
			\midrule
			Exact & $\S^*$ & [4.000, 4.472] & [4.000, 5.000] \\
			\tool fix &  $\hat{\S}^*_\text{cr,f}$ & [3.983, 4.493] & [3.887, 5.104] \\
			\tool reach & $\hat{\S}^*_\text{cr,r}$ & [3.982, 4.495] & [3.885, 5.106] \\
			Kleene iteration & $\hat{\S}^*_\text{kl}$ & [3.738, 4.520] & [0.000, $\,\:\:\infty\,\:\:$) \\
			\bottomrule
		\end{tabular}
	\end{threeparttable}
		}
	\label{tab:householder_2}
	\vspace{-3mm}
\end{wraptable}

In this section, we expand our case study on the analysis of Householder's method to consider the effect of different termination criteria

\paragraph{Accounting for the Termination Criterion}
A standard application of \tool computes an over-approximation of the true mathematical fixpoints of the abstracted iteration, i.e., in this case, the (reciprocal of the) square roots. We can, however, also analyze a termination condition to compute the set of all reachable values for this specific termination condition. 

To this end, we first bound the difference $\delta$ between any $s$ satisfying the termination condition and the true mathematical fixpoint $1/\sqrt{x}$:

\begin{theorem}\label{thm:max_dev}
	Given the termination condition \lstinline{$s > 0$ and $|s*s - 1/x| < \epsilon$} for the program \lstinline{root} in \cref{fig:root_iter}, let  $\epsilon < \frac{1}{x}$ and $\delta = s - 1 / \sqrt{x}$, then we have $|\delta|<\sqrt{\epsilon}$ for all $s$ that may be returned by \lstinline{root}.
\end{theorem}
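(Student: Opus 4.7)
My approach is to factor the termination constraint into a product involving $\delta$ and then bound each factor using the positivity of $s$ and the hypothesis $\epsilon < 1/x$.

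The central observation is the algebraic identity
\begin{equation*}
s^2 - \tfrac{1}{x} \;=\; \bigl(s - \tfrac{1}{\sqrt{x}}\bigr)\bigl(s + \tfrac{1}{\sqrt{x}}\bigr) \;=\; \delta\bigl(s + \tfrac{1}{\sqrt{x}}\bigr).
\end{equation*}
Combined with the termination bound $|s^2 - 1/x| < \epsilon$, this yields $|\delta| \cdot (s + 1/\sqrt{x}) < \epsilon$, where I have used that $s > 0$ (guaranteed by the termination condition) and $x > 0$ to drop absolute values on the second factor. Rearranging gives $|\delta| < \epsilon / (s + 1/\sqrt{x})$.

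Next I would bound the denominator from below. Since $s > 0$, I simply drop the $s$ term to get $s + 1/\sqrt{x} > 1/\sqrt{x}$, hence
\begin{equation*}
|\delta| \;<\; \frac{\epsilon}{\,1/\sqrt{x}\,} \;=\; \epsilon\sqrt{x}.
\end{equation*}
Finally, invoking the hypothesis $\epsilon < 1/x$ (equivalently $\epsilon x < 1$), I multiply both sides by $\epsilon > 0$ to get $\epsilon^2 x < \epsilon$, and taking square roots yields $\epsilon\sqrt{x} < \sqrt{\epsilon}$. Chaining the two inequalities gives $|\delta| < \sqrt{\epsilon}$, as required.

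There is no real obstacle here: the only subtle step is recognizing that one should keep $s$ in the denominator long enough to exploit its positivity, and that the hypothesis $\epsilon < 1/x$ is exactly the inequality needed to convert the naive linear bound $\epsilon\sqrt{x}$ into the claimed square-root bound $\sqrt{\epsilon}$. Everything else is one-line algebra.
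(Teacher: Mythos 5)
Your proof is correct, and it takes a genuinely cleaner route than the paper's. The paper substitutes $s = 1/\sqrt{x}+\delta$ into the termination bound, obtains $|2\delta/\sqrt{x} + \delta^2| < \epsilon$, and then explicitly solves the two quadratic boundary equations $2\delta/\sqrt{x} + \delta^2 = \pm\epsilon$ for $\delta$, ruling out a branch in each case via $s > 0 \Rightarrow \delta > -1/\sqrt{x}$ and finally invoking the concavity of $\sqrt{\cdot}$ to compare $-1/\sqrt{x} \pm \sqrt{1/x \pm \epsilon}$ with $\pm\sqrt{\epsilon}$. Your difference-of-squares factorization $s^2 - 1/x = \delta\,(s + 1/\sqrt{x})$ sidesteps the quadratic entirely: the positivity of $s$ gives you a direct lower bound on the cofactor, and the hypothesis $\epsilon < 1/x$ converts $\epsilon\sqrt{x}$ to $\sqrt{\epsilon}$ in one line. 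This avoids the two-case analysis and the concavity lemma, and it also yields the strictly tighter intermediate bound $|\delta| < \epsilon\sqrt{x}$ (which is $\leq \sqrt{\epsilon}$ under the stated hypothesis), so the argument is both shorter and slightly sharper than the one in the paper.
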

\begin{proof}
	We set $s = \frac{1}{\sqrt{x}} + \delta$ and obtain:
	\begin{align*}
	\epsilon >	\left|s^2 - \frac{1}{x} \right|  =	\left|\left(\frac{1}{\sqrt{x}} + \delta\right)^2 - \frac{1}{x} \right|  =	\left|\frac{2 \delta}{\sqrt{x}} + \delta^2 \right|.
	\end{align*}
	We first consider the positive case of the absolute value:
	\begin{align*}
	\frac{2\delta}{\sqrt{x}} + \delta^2  = \epsilon \\
	\delta = \frac{-\frac{2}{\sqrt{x}} \pm \sqrt{\frac{4}{x}+ 4 \epsilon}}{2} = -\sqrt{\frac{1}{x}} \pm \sqrt{\frac{1}{x}+ \epsilon} < \sqrt{\epsilon},
	\end{align*}
	where we use $s>0 \implies \delta > - \frac{1}{\sqrt{x}}$ to rule out one branch and the fact that $\sqrt{\cdot}$ is a concave function. We now consider the negative case of the absolute value:
	\begin{align}
	\frac{2\delta}{\sqrt{x}} + \delta^2  = -\epsilon \\
	\delta = \frac{-\frac{2}{\sqrt{x}} \pm \sqrt{\frac{4}{x} - 4 \epsilon}}{2} = -\sqrt{\frac{1}{x}} \pm \sqrt{\frac{1}{x} - \epsilon} > -\sqrt{\epsilon}
	\end{align}
	where we again use $s>0 \implies \delta > - \frac{1}{\sqrt{x}}$ to rule out one branch and assume that $\epsilon$ is chosen suitably ($\epsilon < \frac{1}{x}$).
\end{proof}

Using this result, we can simply expand any obtained abstraction by $\sqrt{\epsilon} \geq |\delta|$ to ensure that we capture all reachable values:

\begin{theorem}
	Let $\hat{\S}^*$ be a sound over-approximation of the true fixpoint set of the iteration in \lstinline{root} for $x \in \X$. Further, let $\bar{\delta} = \sqrt{\epsilon}$ be the maximum difference between any $s$ satisfying the termination condition and the corresponding true fixpoint $1/\sqrt{x}$ for any $x \in \R^{\geq 0}$ and $0 < \epsilon < \frac{1}{x}$. Then, the Minkowski sum $\gamma(\hat{\S}^*) + [-\bar{\delta}, \bar{\delta}]$ is a sound over-approximation of all reachable values of \lstinline{root} for $x \in \X$.
\end{theorem}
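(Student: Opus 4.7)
The plan is to argue pointwise: pick an arbitrary value $s$ that the program \lstinline{root} may return for some $x \in \X$, and show it lies in $\gamma(\hat{\S}^*) + [-\bar{\delta}, \bar{\delta}]$. Since the program only returns when the loop condition becomes false, any returned $s$ satisfies both $s > 0$ and $|s^2 - 1/x| < \epsilon$, so \cref{thm:max_dev} immediately gives the deviation bound $|s - 1/\sqrt{x}| < \sqrt{\epsilon} = \bar{\delta}$. This reduces the theorem to combining two facts: (i) the true fixpoint $1/\sqrt{x}$ is captured by $\hat{\S}^*$, and (ii) the returned $s$ differs from this fixpoint by at most $\bar{\delta}$.

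Concretely, I would proceed in three short steps. First, observe that $1/\sqrt{x}$ is the unique fixpoint of the iteration body for the given $x$ (this is the whole point of Householder's method, and matches the iteration being abstracted), so by the soundness hypothesis on $\hat{\S}^*$ we have $1/\sqrt{x} \in \gamma(\hat{\S}^*)$ for every $x \in \X$. Second, invoke \cref{thm:max_dev} on the returned value $s$ to obtain $\delta \coloneqq s - 1/\sqrt{x} \in [-\bar{\delta}, \bar{\delta}]$. Third, write $s = (1/\sqrt{x}) + \delta$ and conclude by the definition of the Minkowski sum that $s \in \gamma(\hat{\S}^*) + [-\bar{\delta}, \bar{\delta}]$. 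Since $s$ was an arbitrary reachable output, the claim follows.

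The only subtlety worth flagging is the quantifier on $x$ inside the $\epsilon$-condition of \cref{thm:max_dev}: the deviation bound $\sqrt{\epsilon}$ holds uniformly provided $\epsilon < 1/x$ for every $x \in \X$, so strictly speaking one should state the theorem under the precondition $\epsilon < 1/\sup \X$ (which is benign for any bounded $\X \subseteq \R^{>0}$, such as the $[16,20]$ and $[16,25]$ used in the case study). I do not expect any real obstacle beyond making this precondition explicit; the rest is purely bookkeeping about the Minkowski sum.
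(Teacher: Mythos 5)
Your proof is correct and follows essentially the same route as the paper's: bound the deviation of any returned $s$ from the true fixpoint $1/\sqrt{x}$ via \cref{thm:max_dev}, use the soundness hypothesis to place $1/\sqrt{x}$ in $\gamma(\hat{\S}^*)$, and conclude by the definition of the Minkowski sum. Your remark about requiring $\epsilon < 1/x$ uniformly over $\X$ is a fair (and benign) clarification that the paper leaves implicit.
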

\begin{proof}
	By \cref{thm:max_dev}, the difference $\delta = s - 1 / \sqrt{x}$ between any $s$ satisfying the termination condition of \lstinline{root} and the corresponding true fixpoint $1 / \sqrt{x}$ is bounded by $\bar{\delta} = \sqrt{\epsilon}$.
	By condition, we have that this true fixpoint $s^* = 1 / \sqrt{x}$ is included in $\gamma(\hat{\S}^*)$.
	Thus, $s = s^* + \delta$ with $|\delta| \leq \bar{\delta}$ is included in the Minkowski sum $\gamma(\hat{\S}^*) + [-\bar{\delta}, \bar{\delta}]$ by its definition.
	Note that \lstinline{root} only returns values $s$ satisfying the termination condition to conclude the proof.
\end{proof}

This result allows us to expand our obtained fixpoint set $\hat{\S}^*_\text{cr,f}$ marginally to obtain the reachable values $\hat{\S}^*_\text{cr,r}$, still yielding a much tighter over-approximation than Kleene iteration (see \cref{tab:householder_2}).
We expect that similar results can be obtained for many interesting termination conditions.

\section{Deferred Proofs \& Further Theory} \label{sec:proofs}

Below we provide the detailed proofs for our key theorems, restating them for convenience.

\subsection{Abstracting Fixpoint Iterations}
\paragraph{Contraction-Based Termination Condition}
Below, we prove \cref{the:contraction} on the soundness of our contraction-based termination condition.

\contraction*

\begin{proof}
  We have
  \begin{equation}\label{eqn:cont_proof_2}
    \S_{n+1} \subseteq \gamma(\hat{\S}_{n+1}) \subseteq \gamma(\hat{\S}_{n})
  \end{equation}
  where the first $\subseteq$ holds by definition and the second follows from the left hand side of \cref{eq:contraction} and the definition of $\sqsubseteq$.
  Now, we can over-approximate ${\S}_{n+1}$ with $\hat{\S}_{n} \sqsupseteq \hat{\S}_{n+1}$ and $\S_{n+2} \subseteq \gamma(\hat{\S}_{n+1})$ follows immediately via \cref{eqn:cont_proof_2}. Thus, $\S_{j} \subseteq \hat{\S}_{n+1} $ for $j > n$ follows by induction and thereby $\Z_{j} \subseteq \gamma(\hat{\Z}_{n+1}) \forall j > n$.

  By the convergence guarantee of the concrete iteration we have:
  for any $\eps \in \R^{> 0}$ there exists a $j \in \bb{N}$ with $j \geq n+1$ such that we have $\norm{\vz_{j} - \vz^*} \leq \eps$.
  By the definition of $\Z_j$ we also have $\vz_j \in \Z_j$.
  For $\eps \to 0$ and hence $\norm{\vz_j - \vz^*} \to 0$ it follows that $\vz^* \in \Z_j \cup \partial \Z_j = \overline{\Z}_j$.
  Thus for each $\vx \in \X$, there exists a $j_{\vx}$ such that $\vz^*(\vx) \in \overline{\Z}_{j_\vx} \subseteq \gamma(\hat{\Z}_{n+1})$, where we get the inclusion relation from above and the subset relation $\overline{\Z}_{j_\vx} \subseteq \gamma(\hat{\Z}_{n+1})$ from the closedness of $\gamma(\hat{\Z})$ .
  Finally, $\Z^{*} \subseteq \bigcup_{x\in\X} \overline{\Z}_{j_x} \subseteq \gamma(\hat{\Z}_{n+1})$.
\end{proof}

\paragraph{Fixpoint Set Preservation}
We now prove our results on fixpoint set preservation (\cref{the:iter_FWBW,the:iter_PR}), starting with our statement for sound abstractions of locally Lipschitz iterators with convergence guarantees.

\iterpw*

\begin{proof}
	To prove by contradiction, let $\tilde{\vz}$ be a point close to the fixpoint $\vz^*$ s.t. $\norm{\tilde{z}-z^*} \leq \eps$ with map $\tilde{\vz}' = \vg_\alpha(\vx,\tilde{\vz},\vu)$ under the fixpoint iterator. Let us further assume that $\forall \vu \in \U^*$ for some $\U^*$ an application of $\vz' = \vg_\alpha(\vx,\vz^*,\vu)$ does not map back to $\vz^*$, i.e., $\norm{\vz^*-\vz'} > d$.
	\begin{itemize}
		\item Recall that $\vg_\alpha(\vx,\tilde{\vz},\vu)$ is locally Lipschitz with $L < \infty$ by assumption.
		\item It follows from $\norm{\tilde{z}-z^*} \leq \eps$ that $\norm{\vg_\alpha(\vx,\tilde{\vz},\vu) - \vg_\alpha(\vx,\vz^*,\vu)} = \norm{\tilde{\vz}' - \vz'} \leq L \eps$.
		\item Hence by the inverse triangle inequality $\norm{\vz^* - \tilde{\vz}'} \geq \big \rVert \norm{\vz^*-\vz'} - \norm{\tilde{\vz}' - \vz'}\big\lVert \geq d - L \eps$
		\item Choose $\eps < d / (L+1) \implies\norm{\vz^* - \tilde{\vz}'} \geq  d - L \eps > \eps$. Note that $d$ does not depend on $\epsilon$.
		\item It follows that $\norm{\vg_\alpha(\vx,\tilde{\vz},\vu) - \vz^*} > \eps \quad \forall \tilde{\vz} \in \{\vz \mid \, \norm{\vz-\vz^*} \leq \eps\}$ which contradicts the convergence guarantee.
	\end{itemize}
	By contradiction, it follows that $\exists \vu \in \U^*: \vz^* = \vg_\alpha(\vx,\vz^*,\vu)$.
\end{proof}

This result implies that we can apply any sound abstractions of a locally Lipschitz iterator to an over-approximation of the true fixpoint set and obtain a possibly tighter over-approximation of the true fixpoint set.
Note that $\vg_\alpha(\vx,\vz,\vu)$ is locally Lipschitz with $L < \infty$ in $\vu$, $\vz$, and $\vx$ for both \pr and \fwdbwd as they are the composition of linear maps of finite widths with globally Lipschitz functions, and thus that \cref{the:iter_PR} is applicable.

Now we additionally show fixpoint set preservation for \fwdbwd for a wider range of $\alpha$:

\iterfwbw*
\begin{proof}
	As we consider only $\vf$ and $\gfwdbwd$ here, we have $\vs \coloneqq \vz$.
	For any concrete fixpoint $\vz^* = \vf(\vx,\vz^*) = ReLU(\mW\vz^* + \mU\vx + \vb)$, we consider an iteration of Forward-Backward splitting as per \cref{eqn:fwd_bwd} with $\vz_{n} = \vz^*$:
	\begin{equation*}
	\vz_{n+1} = ReLU((1-\alpha)\vz^* + \alpha \underbrace{(\mW\vz^* + \mU\vx + \vb)}_{\vz'}) = \vz^{*}
	\end{equation*}
	We show this by considering the expression element-wise. Suppose $z^{'} \leq 0$, then due to $z^{*} = f(x, z^{*}) = ReLU(z')$, we know $z^* = 0$ and else $z^{*} = z'$. Then
	\begin{align*}
	&ReLU( (1-\alpha)z^* + \alpha z' )
	= \left\{
	\begin{array}{@{}l@{}ll@{}}
	ReLU( (1-\alpha)0   &+ \alpha z' ) & \text{if } z' \leq 0\\
	ReLU( (1-\alpha)z^* &+ \alpha z^* )  & \text{else}  \\
	\end{array}
	\right\} = z^{*}.
	\end{align*}
	In the first case we know $(1-\alpha)0 + \alpha z' \leq 0$ as $z' \leq 0$.
	It follows that one step of Forward-Backward splitting will always map a fixpoint upon itself in the concrete. %
	Since $\hat{\Z}_{n}$ includes all fixpoints for $\X$, any sound $\hat{\Z}_{n+1}  =  \vg^{\fwdbwd\#}_\alpha(\X,\hat{\Z}_{n})$ includes all fixpoints for $\X$.
\end{proof}

\subsection{Error Consolidation}\label{app:consolidating}
Below we prove \cref{thm:consolidation} on the soundness of the error consolidation of an improper \domain.
\consolidation*

\begin{proof}
	Without loss of generality let $\va=\mbf{0}$, $\vb=\mbf{0}$ and $\hat{\Z} = \mA \ez = \sum_{j=1}^{k} \mA_{j} \eez_{j}$ with $k$ error terms, stored in the columns of $\mA$. We can express the contribution of every error term as $\mA_{j} \eez_j = \tilde{\mA} \tilde{\ez}'^{(j)}$ with $\tilde{\ez}'^{(j)} = \tilde{\mA}^{-1} \mA_{j} \eez_{j}$ as $\tilde{\mA}$ is a basis of $\R^p$ and hence invertible. From $\eez_{j} \in [-1, 1]$ it follows that $\tilde{\ez}'^{(j)} \in \diag(\tilde{\mA}^{-1} \mA_{j}) \tilde{\ez}^{(j)}$ with $\tilde{\ez}^{(j)} \in [-1, 1]^{p}$.
	This allows us to rewrite %
	\begin{align*}
	\hat{\Z}  &=
	\left\{
	\begin{array}{@{}l@{}}
	\tilde{\mA} \sum_{j=1}^{k} \tilde{\mA}^{-1} \mA_{j} \eez_{j}\\
	\forall\, \ez \in [-1, 1]^{k}
	\end{array}
	\right\}
	\subseteq
	\left\{
	\begin{array}{@{}l@{}}
	\tilde{\mA}  \sum_{j=1}^{k} \diag(\tilde{\mA}^{-1} \mA_{j}) \tilde{\ez}^{(j)}\\
	\forall\, \tilde{\ez}^{(1)}, \dots,\tilde{\ez}^{(k)} \in [-1, 1]^{p}
	\end{array}
	\right\}\\
	&=
	\left\{
	\begin{array}{@{}l@{}}
	\tilde{\mA} \diag(|\tilde{\mA}^{-1} \mA| \mbf{1}) \tilde{\ez}\\
	\forall\, \tilde{\ez} \in [-1, 1]^{p}
	\end{array}
	\right\} =
	\hat{\Z}',
	\end{align*}
	where the second last equality follows from linearity and the choice $\tilde{\ez}_{j} = \pm \sign(\tilde{\mA}^{-1} \mA_j)$.
\end{proof}

\subsection{\domain Containment}\label{app:containment}
Below, we prove \cref{the:containment} on the containment of (improper) \domain in proper \domain.
\containment*

\begin{proof}
	Containment is equivalent to showing that for all error terms $\ez' \in [-1,1]^k, \eb' \in [-1,1]^p$ describing points in $\hat{\Z}'$, there exist $\ez \in [-1,1]^p, \eb \in [-1,1]^p$ of $\hat{\Z}$ such that: %
	$$\mA\ez + \diag(\vb)\eb + \va =  \mA'\ez' + \diag(\vb')\eb' + \va'.$$
	We subtract $\va$ from both sides and over-approximate the right hand side by increasing the Box size by the absolute center difference $|\va' - \va|$ yielding $\vb'' \coloneqq \vb' + |\va' - \va|$. This leaves us to show that we can find $\ez, \eb$ such that $\mA\ez + \diag(\vb)\eb =  \mA'\ez' + \diag(\vb'')\eb''$ holds for all $\ez'$ and $\eb''$ with $\eb'' \in [-1,1]^p$.
	
	We choose $\eb \in [-1, 1]^{p}$ such that
	\begin{align*}
	\diag(\vb)\eb = \sign(\eb'') \min(\vb, \diag(\vb'')|\eb''|),
	\end{align*}
	guaranteed to yield $|\eb| \leq 1$ and obtain by substitution
	\begin{align*}
	\mA\ez  &=  \mA'\ez' + \max(0, \diag(\vb'')\eb'' - \vb)\\
	\ez  &= \mA^{-1} \mA'\ez' + \mA^{-1}  \max(\mbf{0}, \diag(\vb'')\eb'' - \vb)\\
	&\stackrel{(*)}{\leq} |\mA^{-1} \mA'|\mbf{1} + |\mA^{-1}  \diag(\max(\mbf{0}, \vb'' - \vb))|\mbf{1} \stackrel{(**)}{\leq}  \mbf{1},
	\end{align*}
	where in $(*)$ we use the relation shown in \cref{thm:consolidation} for the sound representation of a decomposition and the fact that setting $\eb''$ to a one vector maximizes $\diag(\vb'')\eb''$ and $(**)$ follows directly from the condition of \cref{eq:containment}.
	Taking the absolute value we obtain $|\ez| \leq \mbf{1}$ and have shown that both $\ez$ and $\eb$ exist.
\end{proof}

\subsection{$\mbf{s}$-Step Fixpoint Contraction}
Below, we show, that not only post-fixpoints of a single application of the iterator $\g$ contain all true fixpoints, but also post-fixpoints of $s$ unrolled applications of $\g$:
\begin{restatable}[$\mbf{s}$-Step Fixpoint contraction]{theorem}{scontraction}
	\sloppy
	\label{the:contraction:sstep}
	Let
	\begin{itemize}
		\item $s \in \mathbb{N}^{\geq 1}$
		\item \g be an iterative solver converging to a unique fixpoint $\vz^*$ in finitely many steps for any bounded input,
		\item \gS its sound abstract transformer,
		\item $\hat{\S}_{n+1} \coloneqq \gS(\hat{\X},\hat{\S}_n)$ an abstract element in $\aid$ describing a closed set and denoting an over-approximation of applying \g $n+1$ times for some $\vz_{0},\vu_{0}$ on all inputs $\vx \in \hat{\X}$.
	\end{itemize}
	Then for $[\hat{\Z}_n, \hat{\U}_n] \gets \hat{\S}_n$:
	\begin{equation}
	\label{eq:contraction:sstep}
	\hat{\S}_{n+s} \sqsubseteq \hat{\S}_n
	\mspace{9.0mu}\implies\mspace{9.0mu} \Z^{*} \subseteq \gamma(\hat{\Z}_{n+s}).
	\end{equation}
\end{restatable}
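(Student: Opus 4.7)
The plan is to mirror the proof of Theorem~\ref{the:contraction}, replacing the concrete iterator $\g$ by its $s$-fold composition. First, by $s$ successive applications of the soundness of $\gS$, for every $\vx \in \gamma(\hat{\X})$ and every $\vs \in \gamma(\hat{\S}_n)$, the concrete $s$-step trajectory starting at $\vs$ lands in $\gamma(\hat{\S}_{n+s})$. Equivalently, the $s$-fold concrete map sends $\gamma(\hat{\X}) \times \gamma(\hat{\S}_n)$ into $\gamma(\hat{\S}_{n+s})$. This is simply soundness of a composition of sound transformers, so no new ingredient is required.

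Next, the assumption $\hat{\S}_{n+s} \sqsubseteq \hat{\S}_n$ gives $\gamma(\hat{\S}_{n+s}) \subseteq \gamma(\hat{\S}_n)$, so the $s$-fold concrete map in fact carries $\gamma(\hat{\X}) \times \gamma(\hat{\S}_{n+s})$ into $\gamma(\hat{\S}_{n+s})$. A straightforward induction on $k \geq 1$ then shows $\S_{n+ks} \subseteq \gamma(\hat{\S}_{n+s})$, where $\S_m$ denotes the set of concrete states reachable at step $m$ from the initialization $(\vz_0, \vu_0)$ used to construct $\hat{\S}_n$, ranging over all $\vx \in \gamma(\hat{\X})$. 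Projecting onto the $\vz$-component yields $\Z_{n+ks} \subseteq \gamma(\hat{\Z}_{n+s})$ for every $k \geq 1$.

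Finally, I would invoke the concrete convergence guarantee on $\g$: for every bounded $\vx$, the iterates $\vz_m$ converge to the unique fixpoint $\vz^*(\vx)$, so in particular the subsequence $\{\vz_{n+ks}\}_{k \geq 1}$ converges to $\vz^*(\vx)$. Since $\gamma(\hat{\Z}_{n+s})$ is closed by hypothesis and every term of this subsequence lies in it, the limit $\vz^*(\vx)$ also lies in $\gamma(\hat{\Z}_{n+s})$. Taking the union over $\vx \in \X \subseteq \gamma(\hat{\X})$ gives $\Z^{*} \subseteq \gamma(\hat{\Z}_{n+s})$, as required.

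The main obstacle is essentially bookkeeping rather than conceptual: compared to the single-step case, the concrete containment in $\gamma(\hat{\Z}_{n+s})$ is obtained only along the subsequence of iterates whose index is $n$ plus a multiple of $s$, not for all sufficiently large indices. This is harmless because convergence of the full concrete sequence forces convergence of any subsequence to the same limit, but one must be careful not to overstate intermediate containment claims — in particular, since $\gS$ is not assumed monotonic, the induction cannot be carried out on the abstract side and must be performed on concrete trajectories as sketched above.
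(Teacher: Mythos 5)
Your proof is correct and follows essentially the same idea as the paper: the paper simply invokes \cref{the:contraction} applied to the $s$-fold composed iterator $\vg'_s$ (noting that a composition of sound transformers is sound and that convergence of $\g$ implies convergence of $\vg'_s$ to the same unique fixpoint), whereas you re-derive the argument explicitly in the $s$-step setting. Your closing remark about the containment holding only along the subsequence of indices $n+ks$ is a fair observation, but it is automatically absorbed by the paper's reduction since the proof of \cref{the:contraction} applied to $\vg'_s$ is exactly your subsequence argument.
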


\begin{proof}
  Let $\vs_{n+s} = \vg'_{s}(\vs_{n}) = \vg'( \cdots \vg'(\vs_{n}))$.
  Then \cref{eq:contraction:sstep} follows directly from  applying \cref{the:contraction} to $\vg'_{s}$.
\end{proof}

\subsection{Correctness of \tool}
Below, we provide a slightly extended proof of \cref{thm:soundness} on the correctness of \tool.

\soundness*

\begin{proof}
	$ $\\[-1em]
	\begin{enumerate}
		\item Since \gSss{}{1} is a sound over-approximation of $\vg_{\alpha_1}$, so is $\gSss{*}{1} \coloneqq \expand \circ \consolidate \circ \gSss{}{1}$.
		Therefore, \gSss{*}{1} also fulfils \cref{the:contraction}.
		Thus, showing containment $\hat{\S_{n}} \sqsupseteq \hat{\S}_{n+1} = \gSss{*}{1}(\hat{\X}, \hat{\S_{n}})$, implies $\S^* \subseteq \gamma(\hat{\S}_n)$.
		\item Since, $\S^* \subseteq \gamma(\hat{\S}_n)$ and $\gSss{}{2}$ preserves this property by \cref{thm:preserving}, we have $\Z^* \subseteq \gamma(\hat{\Z}_k)$ for all $k \geq n$.
		As $\hat{\Y} \gets \mV \hat{\Z}_k + \vv$ is exact, and therefore also sound, $\post(\hat{\Y})$ implies $\pre(\vx) \models \post(\vh(\vx))$.
	\end{enumerate}
\end{proof}

\subsection{Other Activation Functions} \label{app:activation_functions}
In order for \tool to be able to certify monDEQs utilizing an activation function $\sigma$ other than $ReLU$, we require:
\begin{itemize}
  \item We need convergence and uniqueness guarantees for the original monDEQ in the concrete (via operator splitting); to this end Theorem 1 in \citet{MonDEQWinstonK20} requires $\sigma$ to be a proximal operator of a CCP function, which most common Deep Learning activation functions are.
  \item In order to utilize \fwdbwd for $\alpha$ without convergence guarantee in the second stage of \tool, a version of \cref{the:iter_FWBW} would be needed, which shows that Forward-Backward splitting still preserves fixpoints as our proof of \cref{the:iter_FWBW} relies on the ReLU function. Both \pr and \fwdbwd splitting can however be used leveraging \cref{the:iter_PR}, as longs as the chosen $\alpha$ is guaranteed to lead to convergence.
  \item Lastly, we need a \domain  transformer for the activation function. For many choices, existing Zonotope transformers
  (such as those for Sigmoid and Tanh discussed in \citet{DeepZSinghGMPV18}) can be adapted easily.
\end{itemize}

\section{Implementation Details}
\label{sec:impl-deta}
\cref{alg:verif} is a slightly simplified version of the \tool algorithm that we actually implemented.
Here, we discuss the differences.

\paragraph{Consolidation and Inclusion check}
In practice we perform the consolidation (line~\ref{alg:verif:consolidate}) only every $r^{\text{th}}$ iteration and only recompute the PCA basis for consolidation every $30$ steps.
Since we require a consolidated basis for the inclusion check (line~\ref{alg:verif:check})
we always keep the 10 most recently consolidated abstract iteration states $\hat{\S}_i$ and check the current
$\hat{\S}$ against all of these.
Note that this requires the use of \cref{the:contraction:sstep} rather than \cref{the:contraction}.

\paragraph{Abortion Heuristics}
In practice we abort the main loop early in cases where we likely wont be able to verify the input.
Before convergence, we abort if the concretization of the \domain $\hat{\S}_i$ reaches a width of $10^{9}$ in any direction.
After convergence we abort if in $3r'$ steps we did not observe any improvement in
$\max_{i \neq t} \gamma(\hat{\Y}_{t}- \hat{\Y}_{i})$.

\section{Parameter Choices \& Experimental Details}
\label{sec:parameter-choices}

\subsection{Model Training}
\label{sec:model-training}
 All monDEQs were trained with monotonicity parameter $m=20.0$ using standard minibatch gradient descent and implicit differentiation as outlined in \citet{MonDEQWinstonK20}. We use a batchsize of $128$ and train for $10$ epochs.

\subsection{\tool Parameters}
 \begin{wraptable}[10]{r}{0.45\textwidth}
	\vspace{-5mm}
	\renewcommand{\arraystretch}{1.0}
	\centering
	\footnotesize
	\caption{\tool verification parameters.}
	\label{tab:parms}
	\vspace{-3mm}
	\scalebox{1.0}{
		\centering
		\begin{tabular}{@{}lllrrr}
			\toprule
			\textit{Dataset} & \textit{Model} & $r$ & $r'$ & $\alpha_{\text{\pr}}$ & expansion\\
			\midrule
			\mnist
			& \fcf & 3 & 50 & 0.1 & const\\
			& \fces & 3 & 50 & 0.1 & const\\
			& \fco & 5 & 50 & 0.06 & const\\
			& \fct & 5 & 50 & 0.05 & const\\
			& \convsm & 5 & 50 & 0.05 & -\\
			\midrule
			\cifar
			& \fct & 3 & 30 & 0.06 & exp\\
			& \convsm & 3 & 30 & 0.06 & exp\\
			\bottomrule
		\end{tabular}
	}
\end{wraptable}
\label{sec:tool-parameters}
Generally, we use the default values discussed in \cref{sec:impl-deta} unless stated otherwise.
For all experiments we use $n_{max} = 500$ and summarize the main parameters in \cref{tab:parms}.
By default we use $r = 3$ and increase it to $5$ on larger \mnist models for better convergence.

For expansion, `const'  denotes $w_{mul} = 10^{-3}$ and $w_{add} = 10^{-2}$,
`exp' denotes initialization with `const' and scaling by 1.1 and 1.2 respectively every second iteration, and
`-' denotes no expansion.

All parameters and in particular the values for $\alpha$ used in \pr were found by coarse manual search.
Overall, we observe large stability with respect to most parameters and in particular the value of $\alpha$ does not have a large impact on \pr, as we show in  \cref{sec:eval:ablation}.

When switching from \pr to \fwdbwd, we apply a line search to determine an optimal $\alpha_{2}$ (with regard to certification).

\subsection{Adversarial Attack}
\label{sec:adversarial-attack}
In order to determine a bound on the certifiable accuracy of the models, we compute their empirical accuracy
with respect to a strong attack.
We apply a targeted version (towards all classes) of PGD \citep{MadryMSTV18}
with $20$ restarts and $50$ steps utilizing margin loss \citep{GowalUQHMK19} after 5 steps of output diversification \citep{TashiroSE20}.

\subsection{Comparison with Lipschitz-Bound-Based Methods} \label{app:lipschitz_comp}
Three existing works derive Lipschitz-Bounds for monDEQs, either via a posteriori analysis \citep{ELMonDEQPabbarajuWK21,chen2021semialgebraic} or construction \citep{LMonDEQRevay2020}. 

The by-construction approach of \citet{LMonDEQRevay2020} can yield relatively small Lipschitz bounds (w.r.t. $\ell_2$-perturbations). However, this comes at the cost of a significant natural accuracy reduction. Further, \citet{LMonDEQRevay2020} provide no robustness certificates of any form and have, to the best of our knowledge, not addressed recent questions regarding the theoretical validity of some of their results \citep{LBEN_OpenReview}.

\citet{ELMonDEQPabbarajuWK21} compute bounds on the Lipschitz constant with respect to the $\ell_2$ norm, denoted $L_2$. To derive robustness certificates w.r.t. $\ell_\infty$-perturbations, these have to be converted to bounds w.r.t. the $\ell_\infty$-norm: $L_\infty \leq \sqrt{q} L_2$ where $q$ is the input dimensionality. Equivalently, to obtain a robustness certificate for $\ell_\infty$-perturbations with radius $\epsilon_\infty$, robustness to $\ell_2$-perturbations with $\epsilon_2 = \sqrt{q} \epsilon_\infty$ has to be shown. The largest perturbations that \citet{ELMonDEQPabbarajuWK21} report any bounds on is $\epsilon_2 = 0.20$ corresponding to $\epsilon_\infty = 0.0071$ (on \mnist with $q=784$). There, they can only show a certified accuracy of $50\%$ while we obtain $99\%$ at the larger $\epsilon_\infty = 0.01$.

Only \citet{chen2021semialgebraic} compute Lipschitz-bounds $L_\infty$ directly w.r.t. the $\ell_\infty$ norm. While they demonstrate that these are almost an order of magnitude tighter than those obtained by converting $\ell_2$ Lipschitz bounds, they are still significantly less precise than their `Robustness Model' to which we compare favorably above.

\section{Additional Ablation Studies} \label{app:ablation}
\begin{wrapfigure}[10]{r}{0.47 \textwidth}
	\vspace{-16mm}
	\begin{subfigure}[t]{.46\linewidth}
		\centering
		\includegraphics[width=1.0\linewidth]{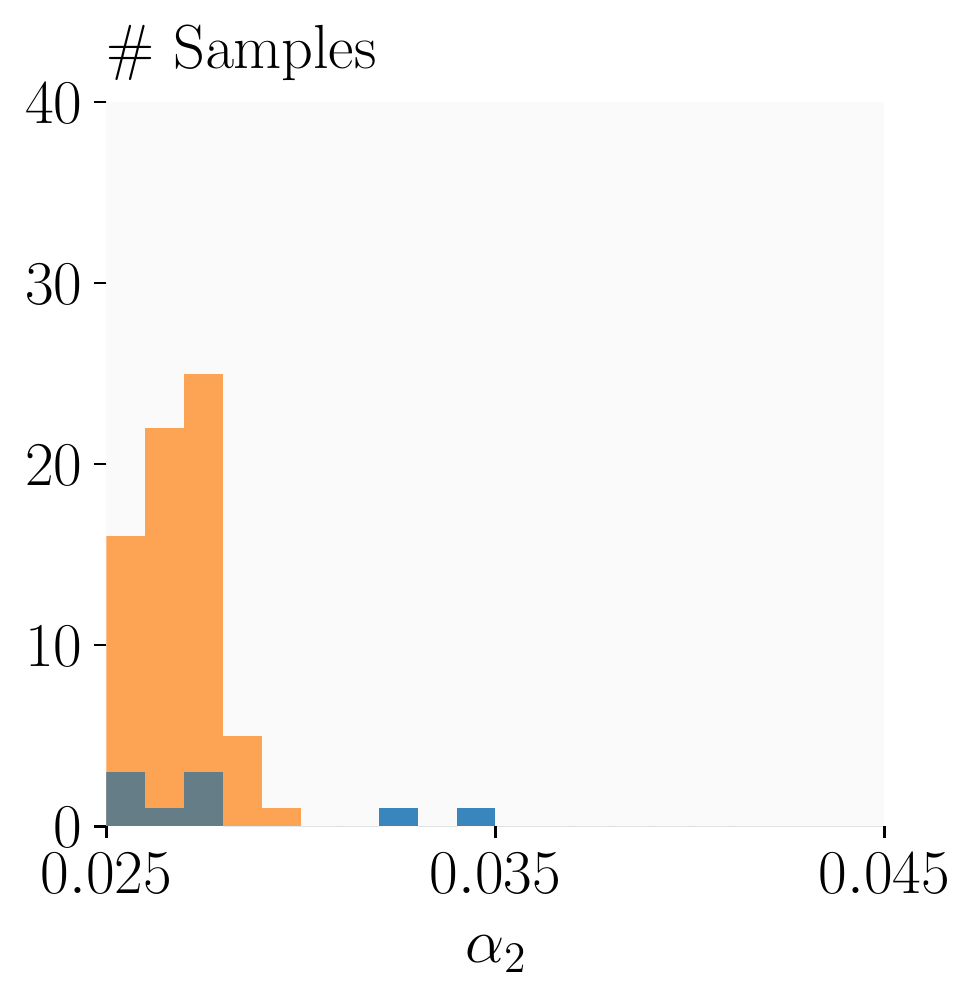}
		\vspace{-7mm}
		\caption{$\alpha_1 = 0.02$}
		\vspace{-1mm}
	\end{subfigure}
	\hfil
	\begin{subfigure}[t]{.46\linewidth}
		\centering
		\includegraphics[width=1.0\linewidth]{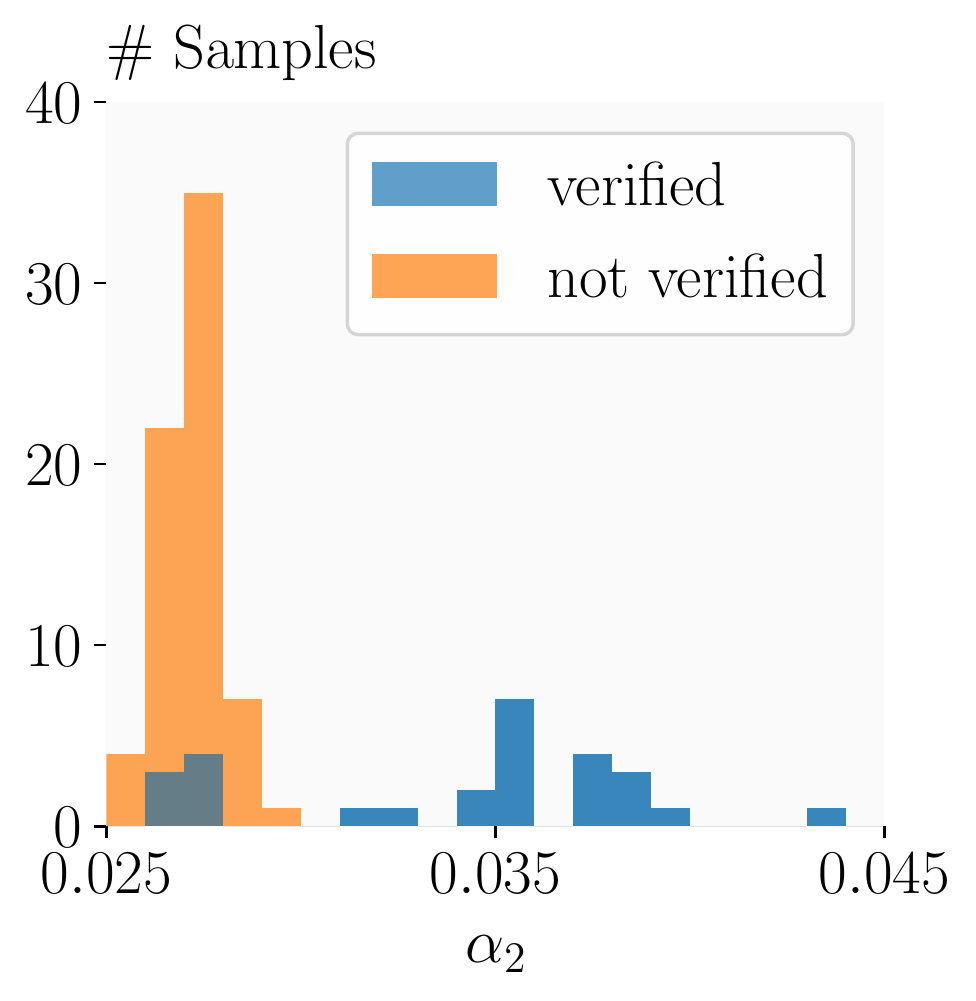}
		\vspace{-7mm}
		\caption{$\alpha_1 = 0.12$}
		\vspace{-1mm}
	\end{subfigure}
	\vspace{-2mm}
	\caption{Distributions of $\alpha_2$ chosen for $\gSss{\fwdbwd}{2}$ depending on $\alpha_1$ chosen for $\gSss{\pr}{1}$. Both settings verify the same number of samples. Only samples not verified with \pr are shown.}
	\label{fig:alpha_hist}
	\vspace{-2mm}
\end{wrapfigure}
\subsection{Adaptive $\alpha$}\label{app:ablation_adaptive_alpha}
Once we have shown containment, we can apply an arbitrary number of fixpoint set preserving abstract iterations ($\gSss{}{2}$ in \cref{alg:verif}) to tighten our abstraction. Since we have shown \gfwdbwdS to be fixpoint set preserving even for changing $\alpha$ (\cref{the:iter_FWBW}), we can optimize $\alpha_2$ for the certification of the postcondition using line search.

In \cref{fig:alpha_hist}, we visualize the $\alpha_2$ selected in this manner for two different $\alpha_1$ used in \tool's first stage ($\gSss{\pr}{1}$).
We observe that the optimal $\alpha_2$, which may still fail to verify, depends heavily on both the concrete sample and the parameters of the first stage's solver, highlighting the value of choosing $\alpha_2$ adaptively.

\begin{wrapfigure}[22]{r}{0.38 \textwidth}
	\vspace{-10mm}
	\centering
	\begin{subfigure}[t]{0.95\linewidth}
		\centering
		\includegraphics[width=0.83\linewidth]{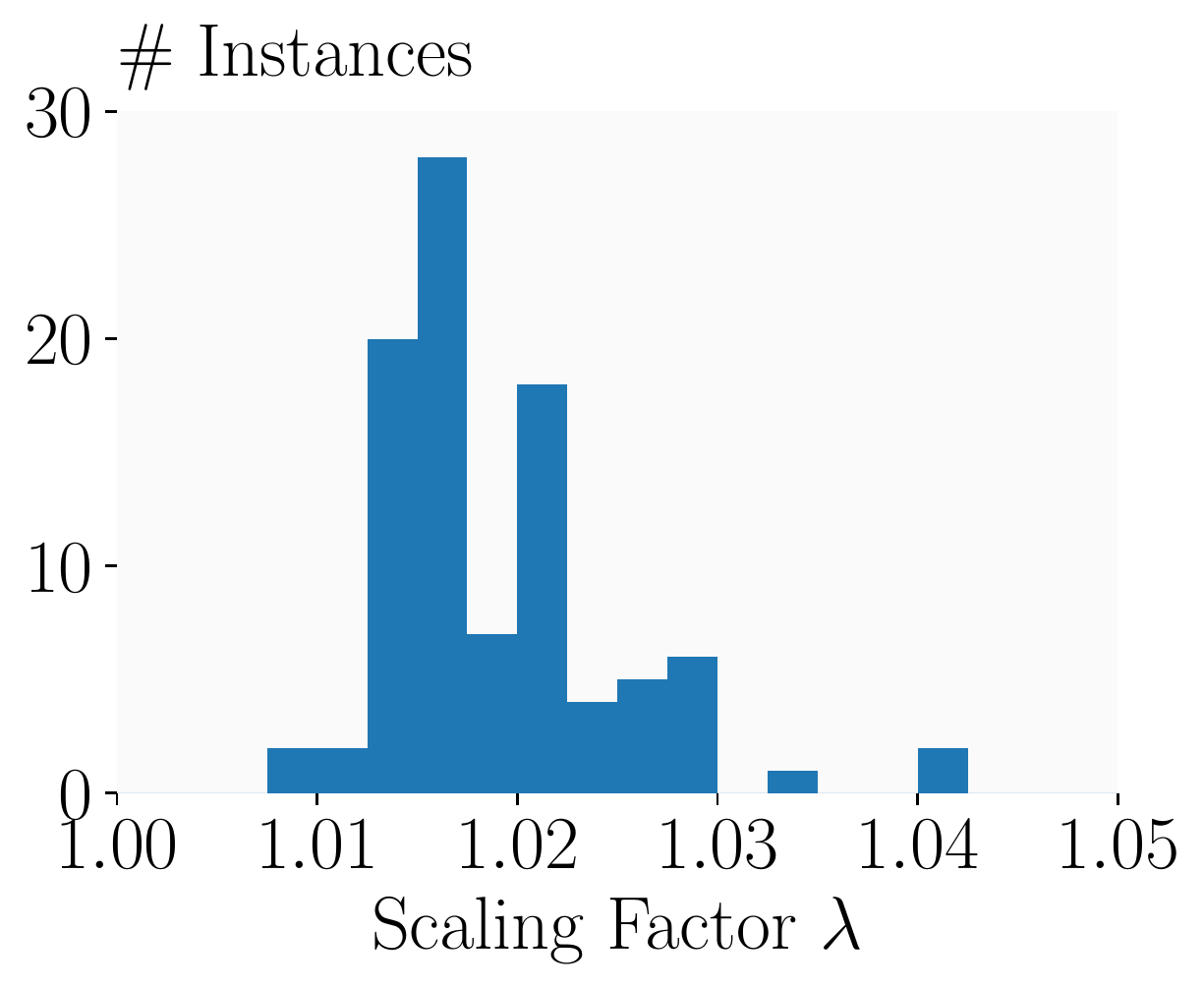}
		\vspace{-1mm}
		\caption{Precision}
		\label{fig:containment_comp_ratio}
	\end{subfigure}
	\begin{subfigure}[t]{0.95\linewidth}
		\centering
		\includegraphics[width=0.9\linewidth]{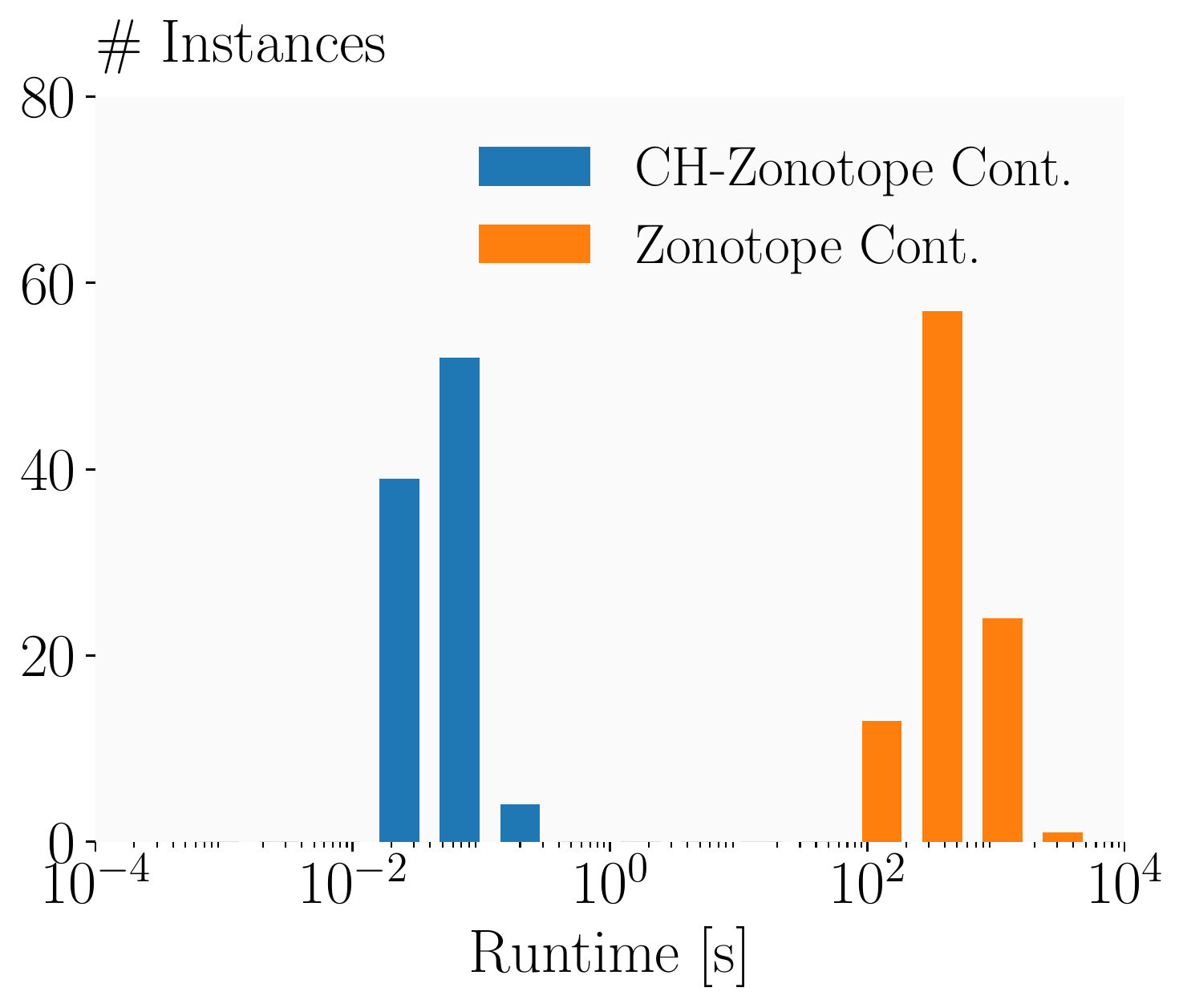}
		\vspace{-2mm}
		\caption{Runtime}
		\label{fig:containment_comp_time}
	\end{subfigure}
	\vspace{-4mm}
	\caption{Comparison between \domain Cont. (ours) and Zonotope Cont. \citep{SadraddiniT19} for single 40d containment checks.}
\end{wrapfigure}

\subsection{Containment Checks in High-Dimensions} \label{app:ablation_containment}
To assess the tightness of our approximate containment check (\emph{\domain Cont.}) in a realistic setting, we compare it against an (in small dimensions) close-to-lossless approximate containment check for general Zonotope (\emph{Zonotope Cont.}, Theorem 3 of \citet{SadraddiniT19}), implemented using GUROBI \cite{gurobi}. To obtain a tractable setting despite the much larger complexity of Zonotope Cont. ($\tilde{\bc{O}}(p^6)$ compared to \domain Cont.'s $\bc{O}(p^3)$), we choose the smallest model (\fcf) and a \fwdbwd iterator yielding problems in $p=40$ dimensions.
We apply \tool until we find containment with \domain Cont. and then conduct a binary search to find the largest scaling factor $\lambda$ such that, when applied to the inner \domain, Zonotope Cont. still succeeds.
Evaluating the first 100 samples, we observe that \domain Cont. is on average only $1.8\%$ less precise (see \cref{fig:containment_comp_ratio}) while being more than $4$ orders of magnitude faster on these 40-dimensional problems (see \cref{fig:containment_comp_time}). %
Conducting the hundreds of containment checks required by \tool to verify a single property would become practically intractable with Zonotope Cont. where a single check takes over $500s$ on average and up to $2000s$ in some instances.%

 \begin{wrapfigure}[17]{r}{0.39 \textwidth}
 	\vspace{-2mm}
 	\centering
 	\includegraphics[width=.87\linewidth]{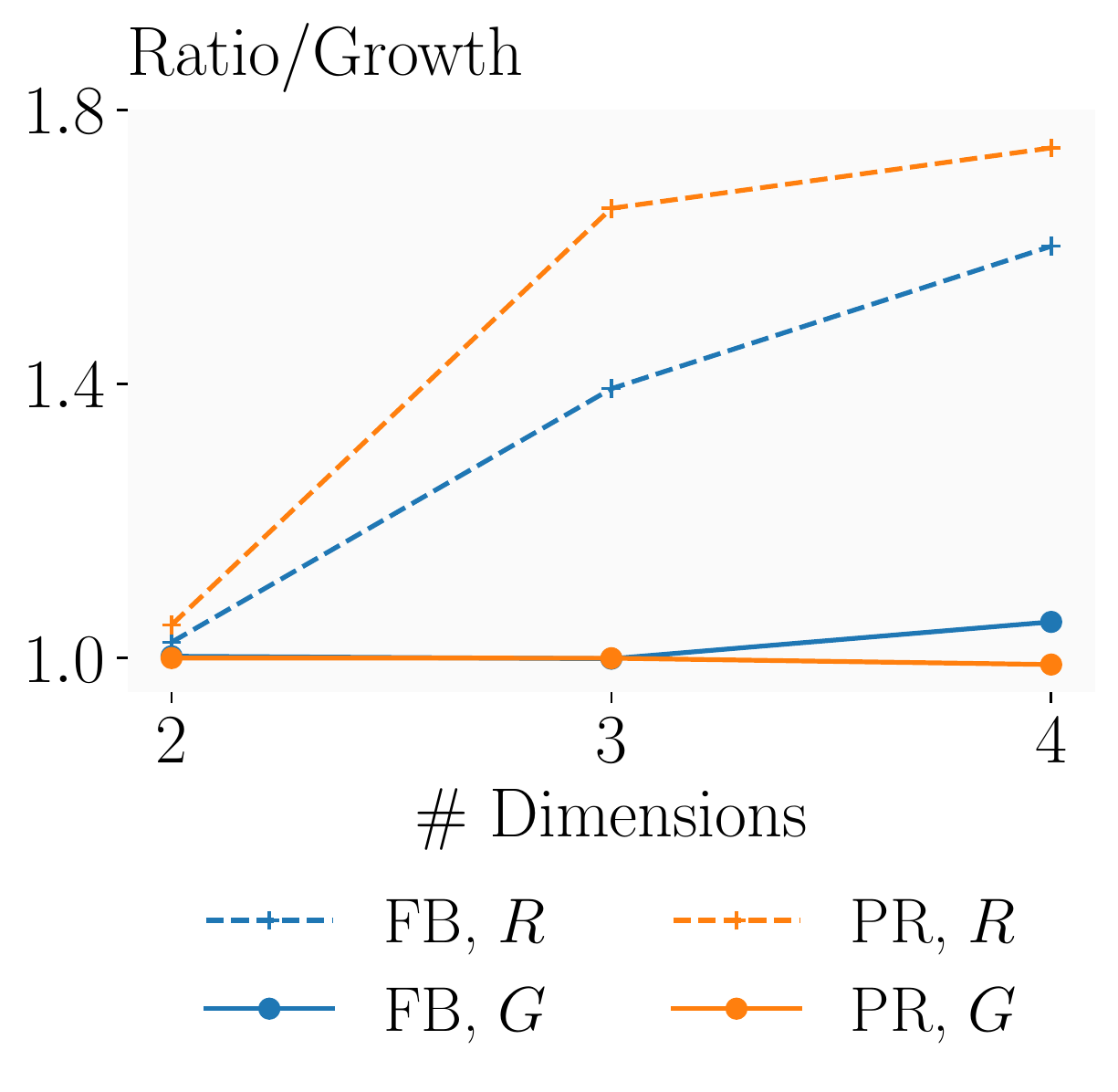}
 	\vspace{-3mm}
 	\caption{Effect of problem dimensionality on volume growth and ratio under error consolidation depending on the applied iterator.}
 	\label{fig:volume_ratio}
 \end{wrapfigure}
\subsection{Error Consolidation Case Study}  \label{app:ablation_error_consolidation}
To evaluate the effectiveness of our error consolidation method, we analyze its effect on the abstraction volume.
To enable the computation of exact volumina despite the exponential complexity \citep{gover2010determinants}, we train monDEQs with 2, 3, and 4 hidden dimensions on a toy dataset with 5 dimensional inputs sampled from a mixture of Gaussians and 3 classes and illustrate results in \cref{fig:volume_ratio}. 
We report two metrics: (i) the increase in abstraction volume induced by one application of error consolidation (volume ratio $R=\vol(\consolidate(\hat{\Z}_{n}))/\vol(\hat{\Z}_n)$), and (ii), the combined effect of error consolidation and the contractive properties of the iterator (volume growth $G = \vol(\hat{\Z}_{n+k})/\vol(\hat{\Z}_n)$),
where $\hat{\Z}_{n+k}$ is computed by consolidating $\hat{\Z}_{n}$ and applying $k=5$ iterations of $\vg^\#_\alpha$.
In both cases, we run for $250$ iterations, compute the average $G$ and $R$ over the last 50 iterations and report the median over $100$ inputs. To ensure a meaningful comparison, we have excluded samples where one dimension (and thereby the volume) collapses to $0$.
We observe that while the volume ratio $R$ increases with dimensionality, this volume growth is counteracted by the contractive property of the underlying iterator leading to a roughly constant volume with $G\approx 1$. While \fwdbwd shows a slight increase of this growth with dimensionality, we observe no such trend for \pr, despite its auxiliary variables leading to a faster accumulation of error terms and thus larger volume ratios $R$.

\begin{wrapfigure}[23]{r}{0.38 \textwidth}
	\vspace{-4.5mm}
	\centering
	\begin{subfigure}[t]{0.95\linewidth}
		\centering
		\includegraphics[width=1.0\linewidth]{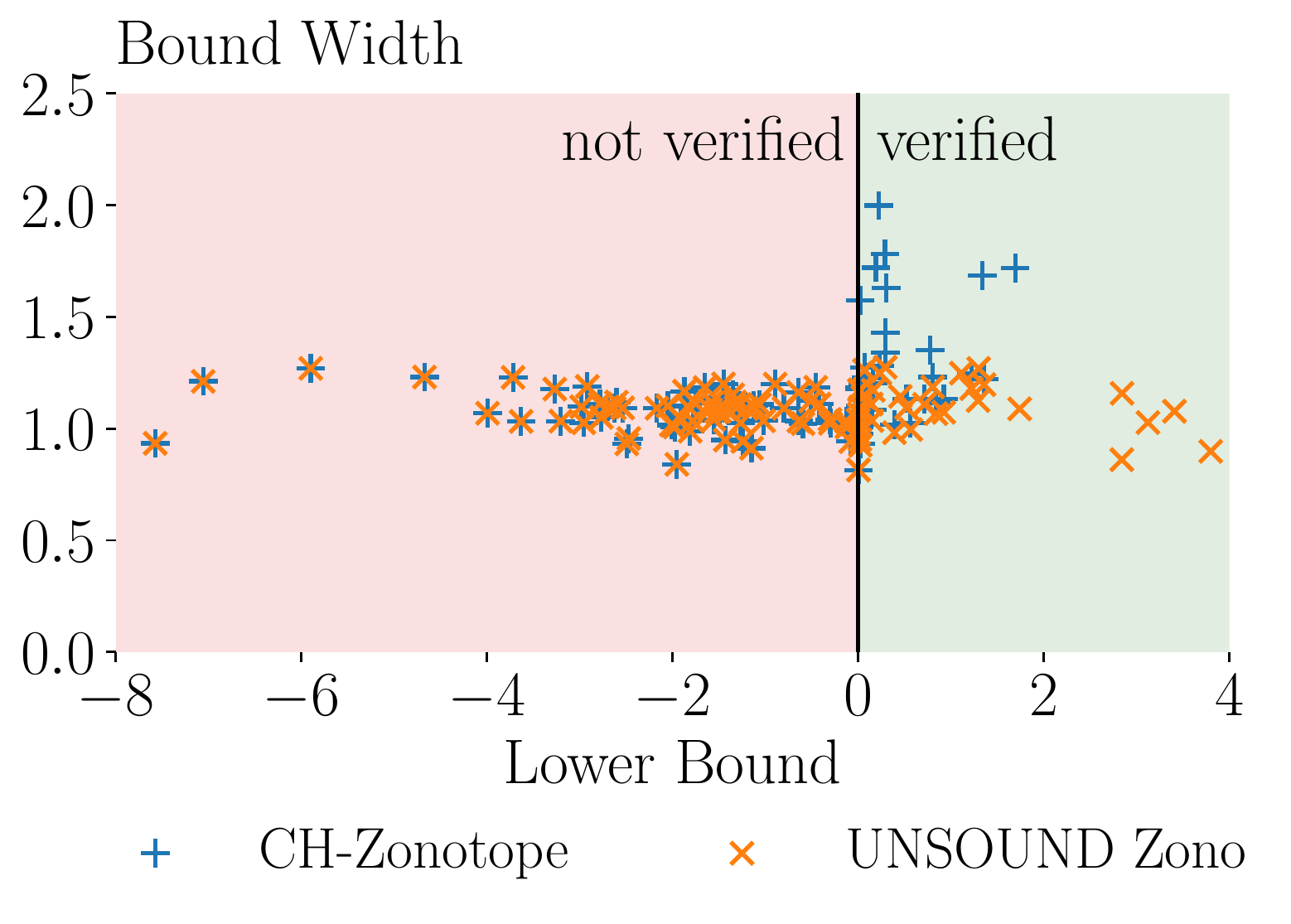}
		\vspace{-6mm}
				\caption{\mnist \fcf}
				\label{fig:unsound_zono_m}
	\end{subfigure}
		\begin{subfigure}[t]{0.95\linewidth}
			\centering
			\includegraphics[width=1.0\linewidth]{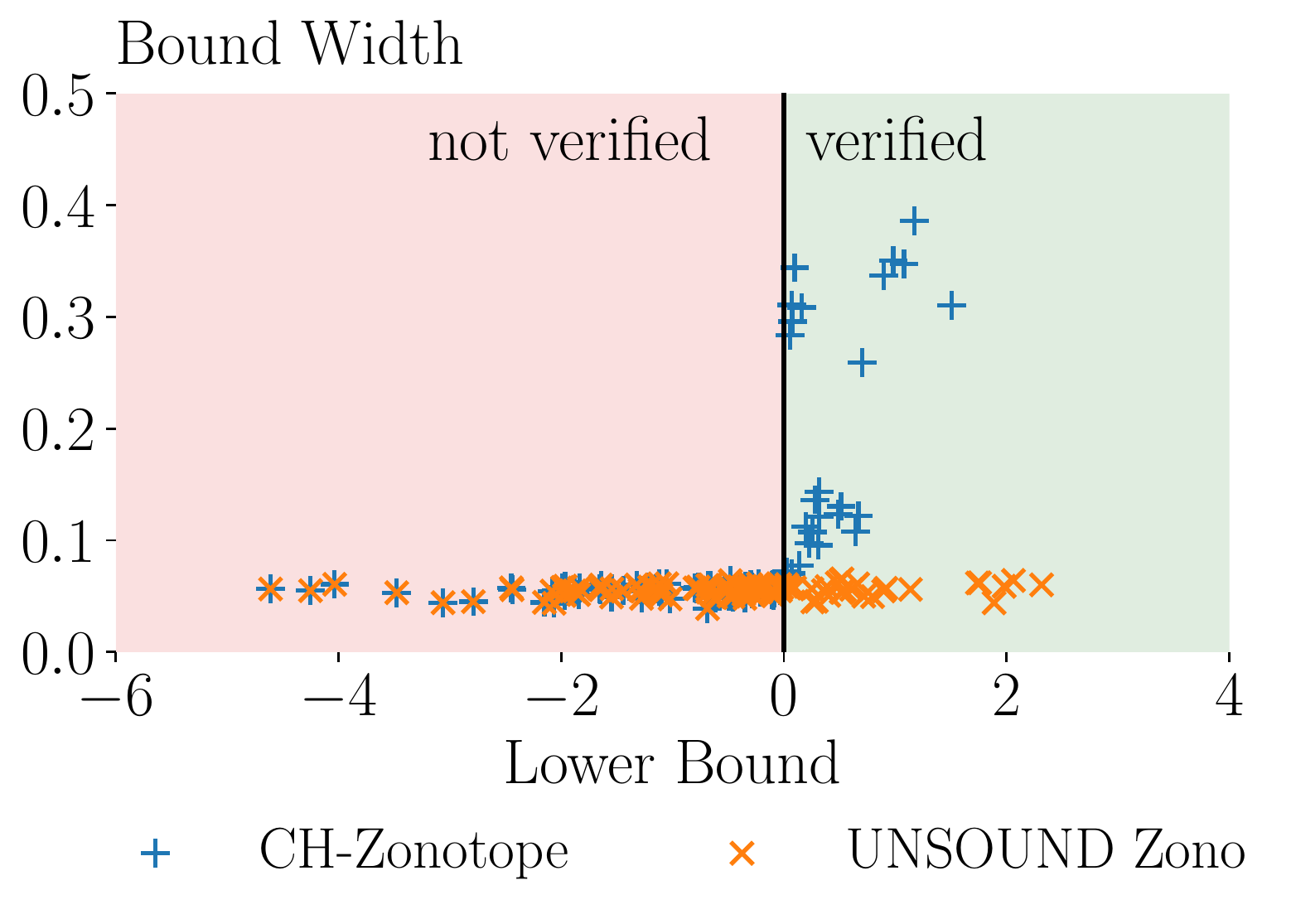}
			\vspace{-6mm}
			\caption{\cifar \convsm}
			\label{fig:unsound_zono_c}
		\end{subfigure}
	\vspace{-4mm}
	\caption{Comparison of bounds obtained using \domain with error consolidation and containment checks and Zonotope without (unsound).}
	\label{fig:unsound_zono}
\end{wrapfigure}
\paragraph{Error Consolidation in \tool} As \tool mostly requires error consolidation to obtain the proper \domain required for our efficient containment check, we only apply it until containment has been shown (see \cref{sec:tool}) and then tighten the obtained abstraction by applying additional solver iterations.
To analyze the combined effect of these two factors on the overall precision of \tool, we conduct the following experiment: We first run \tool using \domain as usual, tracking exactly how many iterations of which iterator were applied. Then, we use a standard Zonotope and apply exactly the same iterations but with neither error consolidation nor containment checks.
Note that the bounds obtained with the latter approach do not imply any guarantees as we have not shown containment.
Finally, we compare the bounds on the verification objective obtained with the two methods in \cref{fig:unsound_zono}, where points to the right of the vertical line correspond to verified properties and points to the left to unverified ones.
We observe that the bounds obtained for unverified properties are practically identical as the imprecisions introduced by error consolidation are offset by the contractive properties of the iterator. For verified properties, \tool terminates as soon as the property has been verified and thus before this contraction takes place, leading to much larger differences. We did not find any instance where the unsound Zonotope bounds would have verified a property that \domain did not verify.
We thus conclude that error consolidation has a negligible effect on the overall precision of \tool while enabling the efficient containment checks required to make the analysis tractable.
Further, error consolidation significantly reduces \tool's runtime, even when compared to Zonotope without any containment checks, e.g., on \fcf the average runtime is reduced almost $5$-fold.

}{}

\message{^^JLASTPAGE \thepage^^J}

\end{document}